\documentclass{article}

\usepackage[utf8]{inputenc}
\usepackage{amssymb}
\usepackage{amsmath}
\usepackage{amsthm}
\usepackage{ dsfont }
\usepackage{bbm}
\usepackage{graphicx}
\usepackage{babel,blindtext}
\usepackage{comment}
\usepackage[colorinlistoftodos]{todonotes}
\usepackage{thm-restate}
\usepackage[colorlinks=true, citecolor=blue,linkcolor = blue, urlcolor=black]{hyperref}
\usepackage[letterpaper,top=2cm,bottom=2cm,left=3cm,right=3cm,marginparwidth=1.75cm]{geometry}
\usepackage{wrapfig}


\usepackage{natbib}
\bibliographystyle{unsrtnat}
\setcitestyle{authoryear,open={(},close={)}}
\usepackage{graphicx}

\usepackage{caption}
\usepackage{subcaption}
\usepackage{float}

\newtheorem{lemma}{Lemma}
\newtheorem{fact}[lemma]{Fact}

\newtheorem{example}{Example}

\newcommand{\norm}[1]{\left\lVert#1\right\rVert}
\newcommand{\R}{\mathbb{R}}

\DeclareMathOperator*{\argmin}{arg\,min}

%
%



\title{How and When Random Feedback Works:\\ A Case Study of Low-Rank Matrix Factorization }

\author{
Shivam Garg \\ Stanford University \\ \href{mailto:shivamg@cs.stanford.edu}{shivamg@cs.stanford.edu}  \and Santosh S. Vempala \\ Georgia Tech \\ \href{mailto:vempala@gatech.edu} {vempala@gatech.edu}}


\date{}
\begin{document}
\maketitle
%

%





\begin{abstract}
  The success of gradient descent in ML and especially for learning neural networks is remarkable and robust. In the context of how the brain learns, one aspect of gradient descent that appears biologically difficult to realize (if not implausible) is that its updates rely on feedback from later layers to earlier layers through the same connections. Such bidirected links are relatively few in brain networks, and even when reciprocal connections exist, they may not be equi-weighted. Random Feedback Alignment \citep{lillicrap2016random}, where the backward weights are random and fixed, has been proposed as a bio-plausible alternative and found to be effective empirically. We investigate how and when feedback alignment (FA) works,  focusing on  one  of  the  most  basic  problems  with  layered  structure  —  low-rank matrix  factorization.   In  this  problem,  given  a  matrix $Y_{n\times m}$,  the  goal is to find a low rank factorization $Z_{n \times r}W_{r \times m}$ that minimizes the error $\|ZW-Y\|_F$.  Gradient descent solves this problem optimally.  We show that  FA  finds  the  optimal  solution  when $r\ge \mbox{rank}(Y)$.    We also shed light on {\em how} FA works.  It is observed empirically that the forward weight matrices and (random) feedback matrices come closer during FA updates.  Our analysis rigorously derives this phenomenon and shows how it facilitates convergence of FA*, a closely related variant of FA. We also show that FA can be far from optimal when $r < \mbox{rank}(Y)$.  This is the first provable separation result between gradient descent and FA. Moreover, the representations found by gradient descent and FA can be almost orthogonal even when their error $\|ZW-Y\|_F$ is approximately equal. As a corollary, these results also hold for training two-layer linear neural networks when the training input is isotropic, and the output is a linear function of the input.

\end{abstract}

\newpage

\section{Introduction}

Information Processing in the brain is hierarchical, with multiple layers of neurons from perception to cognition, and learning is believed to be largely based on updates to synaptic weights. 
These weight updates depend on error information that may only be available in the downstream (higher-level) areas. An algorithmic challenge faced by the brain is the following: how to update the weights of earlier layers using the error information from later layers, despite local structural constraints? For example, in the visual cortex, the weight update to earlier layers --- which detect low-level information such as edges in an image --- may depend on higher-level information in the image that is available only after downstream processing. 

\begin{wrapfigure}{r}{0.45\textwidth}
\vspace{-16pt}
\centering
\includegraphics[width=0.45\textwidth]{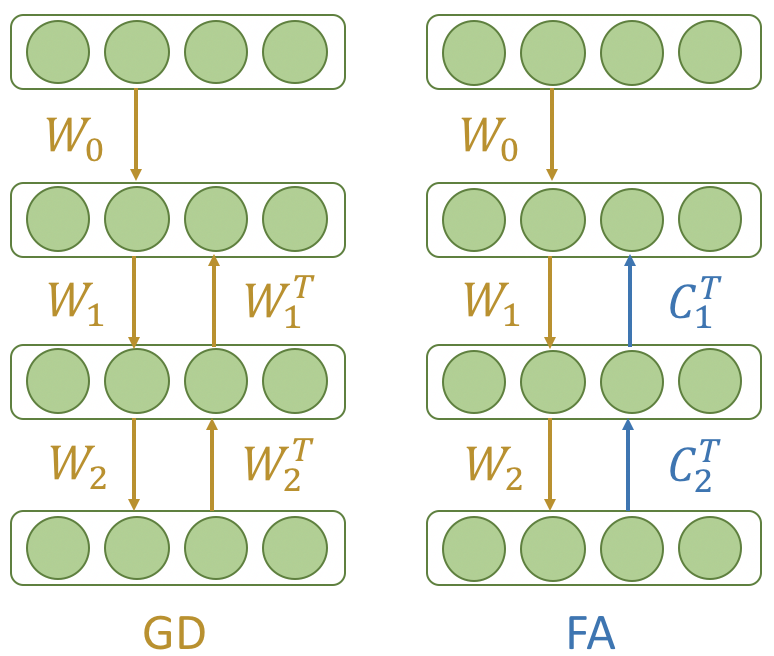}
  \captionof{figure}{\small{Gradient descent uses transpose of the forward weights for backward feedback while feedback alignment replaces them by fixed random weights.
  }}
    \label{fig:gd_fa_cartoon}
\end{wrapfigure}

In artificial neural networks, gradient descent via backpropagation \citep{rumelhart1986learning} has been a very successful method of making weight updates. However, it is unclear whether gradient descent is  biologically plausible due to its non-local updates \citep{crick1989recent}. In particular, the update to earlier layers involves feedback from later layers through backward weights that are transposed copies of the corresponding forward weights (see Fig. \ref{fig:gd_fa_cartoon}). This requires equi-weighted bidirectional links between neurons, which are rare in the brain. This issue was first identified by \citet{grossberg1987competitive}, who called it the weight transport problem.

Rather surprisingly, \citet{lillicrap2016random} found that neural networks are able to learn even when the backward feedback weights are random and fixed, independent of the forward weights. This biologically plausible variant of gradient descent is known as {\em Feedback Alignment (FA)}. Feedback alignment and its variants \citep{nokland2016direct} have been shown to be effective for many problems ranging from language modeling to neural view synthesis \citep{launay2020direct}. At the same time, they do not match the performance of gradient descent for large-scale visual recognition problems \citep{bartunov2018assessing, moskovitz2018feedback} such as ImageNet \citep{russakovsky2015imagenet}.

These observations raise many questions: How and when does random feedback work? Is there any fundamental sense in which feedback alignment is inferior to gradient descent? How different are the representations found using feedback alignment and gradient descent? 
Alongside the biological motivation, these questions are also important for getting a better understanding of the landscape of possible optimization algorithms.

\paragraph{Problem formulation and contributions.}
In this paper, we investigate these questions by considering one of the most basic problems with layered structure --- low-rank matrix factorization \citep{du2018algorithmic, valavi2020revisiting, ye2021global}. In  this  problem,  given  a  matrix $Y_{n\times m}$,  the  goal is to find a low rank factorization $Z_{n \times r}W_{r \times m}$ that minimizes the error 
\begin{align}
\label{eq:error_fa}
\|Z_{n \times r}W_{r \times m}-Y_{n\times m}\|_F^2.    
\end{align}

The gradient flow (GD) update (gradient descent with infinitesimally small step size) for this problem is given by
\begin{align}
\label{eq:gd_update}
\begin{split}
    \frac{dZ}{dt} &= (Y - \hat{Y})W^T\\
    \frac{dW}{dt} &= Z^T(Y - \hat{Y}),
\end{split}
\end{align}
where $\hat{Y} = ZW$.

From prior work \citep[Theorem 39]{bah2019learning}, we know that gradient flow starting from randomly initialized $Z$ and $W$ converges to the optimal solution almost surely. The layered structure and optimality of gradient flow makes low-rank matrix factorization an ideal candidate   for understanding the performance of feedback alignment.

The feedback alignment (FA) update is given by
\begin{align}
\label{eq:fa_update}
\begin{split}
    \frac{dZ}{dt} &= (Y - \hat{Y})C^T\\
    \frac{dW}{dt} &= Z^T(Y - \hat{Y})
\end{split}
\end{align}
Note that the only difference from gradient flow update is that the backward feedback weight $W^T$ is replaced by $C^T$  in the expression for $\frac{dZ}{dt}$ . Here, $C$ is some (possibly random) fixed matrix.

Empirically, it is observed that the backward feedback weights ($C$, in this case) and the forward weights ($W$) come closer during feedback alignment updates \citep{lillicrap2016random}. After the forward and backward weights are sufficiently aligned, the feedback alignment update is similar to the gradient flow update. This alignment between the forward weights and the backward feedback weights led to the name feedback alignment, and is considered to be the main reason for the effectiveness of this algorithm.

However, the phenomenon of alignment has turned out to be hard to establish rigorously. One reason behind this is that the alignment between forward and backward weights may not increase monotonically (see Example \ref{ex:convergence-1d} for details). We observe that a small tweak to the feedback alignment update where $W$ is updated optimally, leads to monotonically increasing alignment between $C$ and $W$ (for an appropriately defined notion of alignment). We call this version of feedback alignment FA*, and its updates are given by
\begin{align}
\label{eq:fa_plus_update}
\begin{split}
    \frac{dZ}{dt} &= (Y - \hat{Y})C^T\\
    W &= (Z^TZ)^{-1}Z^TY.
\end{split}
\end{align}
Notice that the only difference between FA and FA* is that $W$ is chosen optimally (given $Z$ and $Y$) in the FA* update, while it moves in the negative gradient direction in the FA update. The update to $Z$ remains the same, and involves a fixed feedback matrix $C$.
\begin{figure*}[t]
    \centering 
\begin{subfigure}{0.49\textwidth}
  \includegraphics[width=\linewidth]{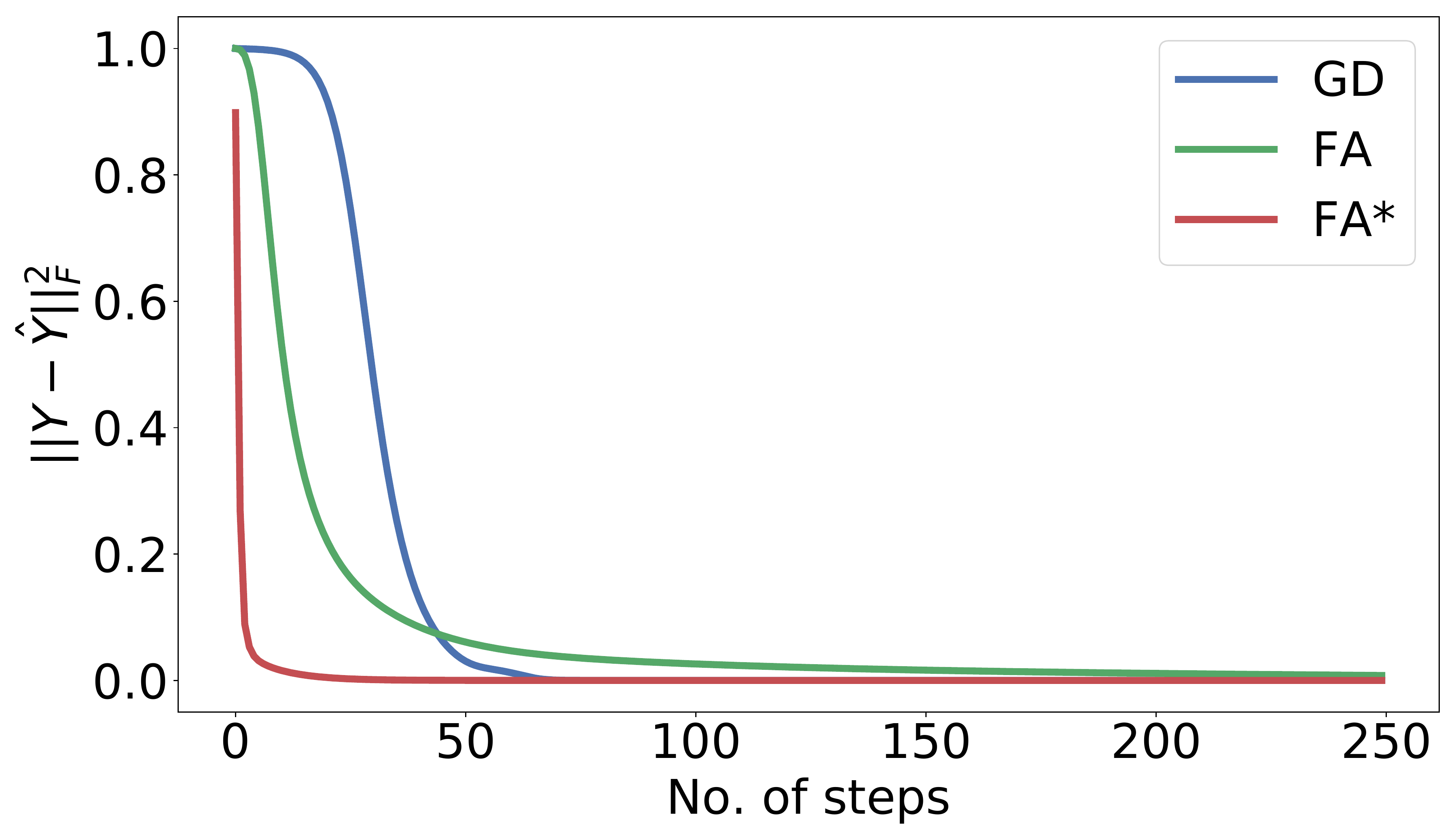}
  \caption{}
  \label{fig:separation_GD_FA_1}
\end{subfigure}
\begin{subfigure}{0.49\textwidth}
  \includegraphics[width=\linewidth]{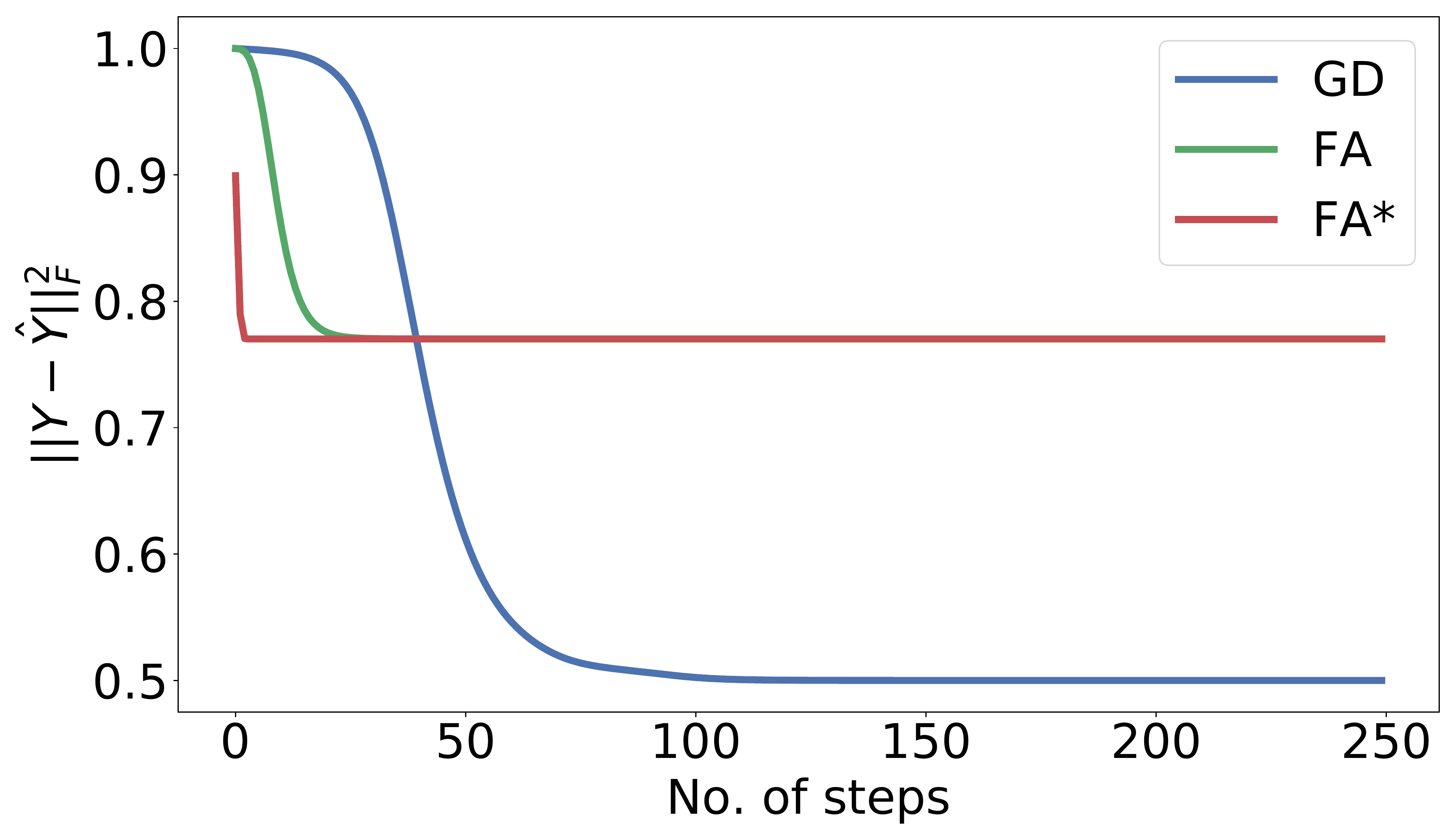}
  \caption{}
  \label{fig:separation_GD_FA_2}
\end{subfigure}
\caption{(a) Feedback alignment (both FA (\ref{eq:fa_update}) and FA* (\ref{eq:fa_plus_update})) converge to the optimal solution when $r \geq rank(Y)$. In this plot, $n = m = 500$ and $r = rank(Y) = 50$. (b) Feedback alignment solution can be far from optimal when $r < rank(Y)$. In this plot, $n = m = rank(Y) = 500$ and $r = 50$. Gradient descent finds the optimal solution in both the cases. \vspace{-10pt}}
\label{fig:separation_GD_FA}
\end{figure*}
We show that FA* initialized with an arbitrary full column rank $Z$ converges to a stationary point where $(Y - \hat{Y})C^T = 0$ (Theorem \ref{thm:convergence-mat-fac}). Our analysis rigorously demonstrates the phenomenon of alignment, and sheds light on how it facilitates convergence (Section \ref{sec:convergence_fa}).

Convergence of FA to a stationary point has also been shown in past works \citep{baldi2018learning, lillicrap2016random}. \citet{baldi2018learning} proves convergence of feedback alignment for learning one-hidden layer neural networks with linear activation, starting from arbitrary initialization. For low-rank matrix factorization, this implies convergence of FA to a stationary point. However, as we discussed, alignment between forward and backward weights may not increase monotonically in FA. Due to this, these works are not able to say much about the dynamics of alignment. The main feature of our analysis is that by analyzing a slight variant of FA (FA*), we obtain a better understanding of the phenomenon of alignment and its implications for convergence.

After analysing how feedback alignment works, we shift our attention to the question of when it works. 
We characterize the solution $\hat{Y}$ at the stationary points of feedback alignment, when $C$ is chosen randomly (Lemma \ref{lem:FA_sol_char}). Building on this characterization, we show that feedback alignment finds the optimal solution when $r \geq rank(Y)$ (Theorem \ref{thm:FA_over_param_opt}). However, it can be far from optimal when $r < rank(Y)$ (Theorem \ref{thm:FA_separation_GD}). To the best of our knowledge, this is the first provable separation result between gradient flow and feedback alignment (see Fig. \ref{fig:separation_GD_FA} for an illustration).  

Moreover, the representations found by feedback alignment and gradient flow are very different. We show that even when their errors $\norm{ZW-Y}_F^2$ are approximately equal, the representations found ($Z$) can be almost orthogonal (Theorem \ref{thm:FA_GD_rep_orthogonal}).

Since the stationary point equations for FA and FA* are same, these results about suboptimality of feedback alignment and difference in representations apply to both versions of feedback alignment.

In summary, we give a comprehensive analysis of how and when feedback alignment works, focusing on the problem of low-rank matrix factorization. Here is a list of our contributions:
\begin{enumerate}
    \item We prove convergence of feedback alignment (FA*) to a stationary point, shedding light on the dynamics of alignment and its implications for convergence (Section \ref{sec:convergence_fa}).
    \item We show that feedback alignment (both FA and FA*) find the optimal solution when $r \geq rank(Y)$, but can be far from optimal when $r < rank(Y)$. This shows provable separation between feedback alignment and gradient flow (Section \ref{sec:stationary_points}). 
    \item We characterize the representations found by feedback alignment (both FA and FA*), and show that they can be very different from the representations found by gradient flow, even when their errors are approximately equal (Section \ref{sec:stationary_points}).
\end{enumerate}

As a corollary, all our results also hold for training  two-layer linear neural networks, assuming the training input is isotropic and the output is a linear function of the input (Section \ref{sec:lin_nn}).  We defer all proofs and simulation details to the appendix. 

\paragraph{Notation. }
For any matrix $M$, $M(t)$ denotes its value at time $t$. We will not explicitly show $t$ when it is clear from context. $\sigma_i(M)$ denotes the $i^{\text{th}}$ largest singular value of $M$. $M^{(i)}$ denotes the $i^{\text{th}}$ column of $M$. $\norm{M}_F$ denotes the Frobenius norm of $M$ and $\norm{v}$ denotes the $\ell_2$ norm of vector $v$. 

\section{Related Work}
\paragraph{Feedback alignment.} \cite{lillicrap2016random} show convergence of feedback alignment dynamics for learning one-hidden layer neural networks with linear activation, starting from zero initialization. \citet{baldi2018learning} generalize this result to arbitrary initialization, and also show convergence for linear neural networks of arbitrary depth when the input and all hidden layers are one dimensional. 

In recent work, \citep{song2021convergence} study feedback alignment for highly overparameterized one-hidden layer neural networks where the width of the hidden layer is much larger than the size of training set. This work builds on past work on Neural Tangent Kernels \citep{jacot2018neural}, and shows that feedback alignment converges to a solution with zero training error.  Contrary to the popular understanding of feedback alignment, they show that forward and backward weights may not align in this highly overparameterized regime. However, in the parameter regime typically encountered in practice, alignment is a robust phenomenon \citep{lillicrap2016random}.

\citet{refinetti2021align} obtain a set of ODEs that describe the progression of feedback alignment test error for neural networks in certain parameter regimes. Using simulations, and analysis of these ODEs at initialization, they argue that neural network training proceeds in two phases: the initial alignment phase where the forward and backward weights align with each other, followed by a memorization phase where learning happens. 
In Section \ref{sec:convergence_fa}, we show that while such a progression can take places in simple cases (see Example \ref{ex:convergence-1d}), in general, the dynamics are much more involved with highly interleaved phases. This paper also presents intuition about the behaviour of feedback alignment for deeper networks, and possible reasons for its poor performance with  convolutional neural networks (CNNs).

The focus of our work is twofold: (i) understanding how feedback alignment works by studying the dynamics of alignment and its impact on convergence, (ii) understanding when feedback alignment works by contrasting its solution and representations with gradient descent. Our work complements the existing line of work on understanding feedback alignment.

\paragraph{Biologically plausible learning.}
Many algorithms have been proposed to address the weight transport problem \citep{lillicrap2020backpropagation}. 
Most of these algorithms either encourage alignment between forward and backward weights implicitly \citep{lillicrap2016random, nokland2016direct, moskovitz2018feedback, akrout2019deep}, or  learn weights that try to preserve information between adjacent layers \citep{bengio2014auto, lee2015difference, kunin2019loss, kunin2020two}. A parallel line of work studies how training algorithms can be implemented in the brain using spiking neurons without distinct inference (forward propagation) and training (backward propagation) phases \citep{xie2003equivalence, bengio2017stdp, scellier2017equilibrium,  whittington2017approximation, guerguiev2017towards, sacramento2018dendritic}. More recent work more directly models plasticity and inhibition in the brain and shows that memorization and learning are emergent phenomena \citep{papadimitriou2020brain, dabagia2021assemblies}.

Building the mathematical foundation of such biologically plausible algorithms can lead to illuminating insights applicable to the brain as well as to the general theory of optimization. Our work can be viewed as progress in this direction.

\section{Convergence}
\label{sec:convergence_fa}
In this section, we show that FA* (\ref{eq:fa_plus_update}) converges to a stationary point satisfying $(Y - \hat{Y})C^T = 0$, where $\hat{Y} = ZW$ and $W = (Z^TZ)^{-1}Z^TY$. 

\begin{restatable}{theorem}{thmconvg}
\label{thm:convergence-mat-fac} Let $Z(0)$ be full column rank. 
For any $\epsilon > 0$ and 
\[
T \geq  \frac{24}{\epsilon} \left( \frac{\sigma_1\left(Y\right)  \sigma_1\left(C\right)  \sigma_1\left(Z(0)\right)^6  \sqrt{r \ min(m, n)}}{\sigma_r\left(Z(0)\right)^5} \right),
\]
FA* dynamics (\ref{eq:fa_plus_update}) satisfy 
\[
\min_{t \leq T} \ \norm{(Y - \hat{Y}(t))C^T}_F^2 \leq \epsilon. 
\]
Moreover,
\[
\lim_{t \to \infty} \norm{(Y - \hat{Y}(t))C^T}_F^2 = 0.
\]
\end{restatable}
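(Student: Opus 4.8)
The plan is to exhibit a bounded scalar potential $\Phi(t)$ that is non-decreasing along the FA* flow with $\dot\Phi=\norm{(Y-\hat Y)C^T}_F^2$; the best-iterate bound then follows by integrating $\dot\Phi$ over $[0,T]$, and the limit from the fact that a bounded monotone function converges. \textbf{Conserved quantities.} Since $W=(Z^TZ)^{-1}Z^TY$, we have $\hat Y=P_ZY$ where $P_Z=Z(Z^TZ)^{-1}Z^T$ is the orthogonal projector onto $\operatorname{col}(Z)$, so the residual $E:=Y-\hat Y=(I-P_Z)Y$ is orthogonal to $\operatorname{col}(Z)$ and to $\hat Y$; in particular $Z^TE=0$ and $E^TY=E^TE$. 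Because $\dot Z=EC^T$ with $Z^TE=0$, we get $Z^T\dot Z=0$, hence
\[
\frac{d}{dt}(Z^TZ)=\dot Z^TZ+Z^T\dot Z=0 ,
\]
so $Z^TZ$ — and therefore every singular value of $Z$ — is frozen at its $t=0$ value. Consequently the orbit stays on the compact set $\mathcal M=\{Z\in\R^{n\times r}:Z^TZ=Z(0)^TZ(0)\}$, which lies inside the open set of full-column-rank matrices on which the vector field $Z\mapsto(I-P_Z)YC^T$ is smooth; so FA* is well defined for all $t\ge0$ and $Z(t)$ remains full column rank.

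\textbf{The alignment potential.} Define
\[
\Phi(t)\;:=\;\langle Z(t)C,\,Y\rangle\;=\;\langle Z(t)C,\,\hat Y(t)\rangle\;=\;\operatorname{tr}\!\big(C^TZ(t)^TY\big),
\]
the alignment between the feedback image $ZC$ and the target $Y$ (equivalently the prediction $\hat Y$); it equals $\langle C,(Z^TZ)W\rangle$, a positive-definite reweighting of the usual $\langle C,W\rangle$, which is the ``appropriately defined notion of alignment''. Differentiating and using $\dot Z=EC^T$ together with $E^TY=E^TE$,
\[
\dot\Phi\;=\;\operatorname{tr}\!\big(C^T\dot Z^TY\big)\;=\;\operatorname{tr}\!\big(C^TC\,E^TY\big)\;=\;\operatorname{tr}\!\big((C^TC)(E^TE)\big)\;=\;\norm{(Y-\hat Y)C^T}_F^2\;\ge\;0 .
\]
Thus $\Phi$ increases monotonically, at a rate equal to exactly the quantity we wish to drive to zero — this is the rigorous form of the alignment phenomenon under FA*. (One can equivalently use $\langle C,W\rangle$ itself as the potential: $\frac{d}{dt}\langle C,W\rangle=\operatorname{tr}\!\big((E^TE)(C^T(Z^TZ)^{-1}C)\big)\ge\sigma_1(Z(0))^{-2}\norm{(Y-\hat Y)C^T}_F^2\ge0$, which gives the same conclusion with a few extra factors of $\sigma_1(Z(0))/\sigma_r(Z(0))$, matching the shape of the stated bound.)

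\textbf{Rate and limit.} The potential is bounded: $|\Phi(t)|\le\norm{Z(t)C}_F\norm{Y}_F\le\sigma_1(Z(t))\norm{C}_F\norm{Y}_F=\sigma_1(Z(0))\norm{C}_F\norm{Y}_F$, since $\sigma_1(Z)$ is conserved. Integrating the identity for $\dot\Phi$,
\[
\int_0^T\norm{(Y-\hat Y(t))C^T}_F^2\,dt=\Phi(T)-\Phi(0)\le 2\,\sigma_1(Z(0))\,\norm{C}_F\,\norm{Y}_F ,
\]
so $\min_{t\le T}\norm{(Y-\hat Y(t))C^T}_F^2\le\tfrac1T\int_0^T\norm{(Y-\hat Y(t))C^T}_F^2\,dt\le 2\sigma_1(Z(0))\norm{C}_F\norm{Y}_F/T$; bounding $\norm{C}_F\le\sqrt{r}\,\sigma_1(C)$ and $\norm{Y}_F\le\sqrt{\min(m,n)}\,\sigma_1(Y)$ (and, if one works through $\langle C,W\rangle$ instead, absorbing the remaining $\sigma_1(Z(0))/\sigma_r(Z(0))$ powers) yields a sufficient $T$ of the displayed form. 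For the limit: $\Phi$ is monotone and bounded, hence convergent, so $\int_0^\infty\norm{(Y-\hat Y(t))C^T}_F^2\,dt<\infty$; since $Z(t)$ ranges over the compact set $\mathcal M$, the map $t\mapsto\norm{(Y-\hat Y(t))C^T}_F^2$ has bounded time-derivative, hence is uniformly continuous, so Barbalat's lemma gives $\norm{(Y-\hat Y(t))C^T}_F^2\to0$ (equivalently, LaSalle's invariance principle places the $\omega$-limit set of the orbit in $\{Z:(Y-P_ZY)C^T=0\}$, on which $EC^T$ vanishes).

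\textbf{Main obstacle.} The one genuinely non-mechanical step is finding the potential $\Phi$ whose monotone increase is driven precisely by $\norm{(Y-\hat Y)C^T}_F^2$; this hinges on the orthogonalities $Z^TE=0$ and $E^T\hat Y=0$, which hold exactly because FA* chooses $W$ to be the least-squares optimum. For vanilla FA these fail, $\langle C,W\rangle$ need not be monotone (cf.\ Example~\ref{ex:convergence-1d}), and this route breaks down. Given $\Phi$ and the conservation of $Z^TZ$, the norm estimates and the Barbalat/LaSalle step for the $t\to\infty$ claim are routine.
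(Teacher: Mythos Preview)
Your proof is correct and takes a genuinely different, more direct route than the paper. You exhibit the scalar potential $\Phi(t)=\operatorname{tr}(C^TZ(t)^TY)$, verify $\dot\Phi=\norm{(Y-\hat Y)C^T}_F^2$ using the orthogonalities $Z^TE=0$ and $E^T\hat Y=0$ that hold specifically for FA*, bound $\Phi$ via conservation of $Z^TZ$, and finish with an averaging argument plus Barbalat. The paper instead tracks the matrix-valued alignment $A=(Z^TZ)^{-1}CW^T+WC^T(Z^TZ)^{-1}$ and the loss $\ell=\norm{R(Z^TZ)^{-1/2}}_F^2$ with $R=(Y-\hat Y)C^T$; it shows $\dot A=2(Z^TZ)^{-1}R^TR(Z^TZ)^{-1}\succeq0$ and $\dot\ell=-\operatorname{tr}(RAR^T)$, then proves a delicate integral inequality (their Lemma~\ref{lem:norm-bad-vec} and Lemma~\ref{lem:R-ub}) to bound both the total increase of $\ell$ and the time spent where $\ell$ is large but decreasing slowly, and pieces these together. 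Your argument is shorter and in fact yields a tighter $T$ (only $\sigma_1(Z(0))$ appears rather than $\sigma_1(Z(0))^6/\sigma_r(Z(0))^5$), so the stated bound follows \emph{a fortiori}. What the paper's more elaborate route buys is the mechanistic picture it advertises: it identifies, direction by direction, how the rows $R^i$ that cause $\ell$ to increase are exactly the directions along which $x^TAx$ is forced to grow, explaining \emph{how} alignment drives convergence rather than just certifying that it does. The paper itself notes (Appendix~\ref{sec:baldi_proof_idea}) that a Baldi-style potential argument for FA ``doesn't say much about the dynamics of alignment''; your proof is essentially that style of argument, cleanly adapted to FA* where the least-squares optimality of $W$ makes the potential exactly monotone.
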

Note that the time for convergence of minimum of $ \norm{(Y - \hat{Y}(t))C^T}_F^2$ depends linearly on $\frac{1}{\epsilon}$. We describe the complete proof of Theorem \ref{thm:convergence-mat-fac} in Appendix \ref{sec:convg_proof}.

To understand this result, we first discuss a toy example where $m = 1$ (recall $Y$ is an $n \times m$ matrix).

\begin{example}
\label{ex:convergence-1d}
Suppose we want to factorize $y_{n \times 1}$ as $\hat{y}_{n \times 1} = Z_{n \times r} w_{r \times 1}$, and we use $c_{r \times 1}$ for feedback. FA* update is given by
\begin{align*}
    \frac{dZ}{dt} &= (y - \hat{y})c^T\\
    w &= (Z^TZ)^{-1}Z^Ty.
\end{align*}
This gives
\begin{align*}
    \frac{d \ \norm{(y-\hat{y})c^T}_F^2}{dt} &= -2 \norm{y - \hat{y}}^2 \norm{c}^2 c^Tw\\
    \frac{d \ c^T w}{dt} &= c^T (Z^TZ)^{-1} c \ \norm{y - \hat{y}}^2.
\end{align*}
Note that $\frac{d \ c^T w}{dt} \geq 0$. And $\frac{d \ \norm{(y-\hat{y})c^T}_F^2}{dt} \geq 0$ only when $c^T w < 0$. So $c^T w$  increases with time (when $\norm{y - \hat{y}}^2 > 0$). $\norm{y - \hat{y}}^2 \norm{c}^2$ increases in the beginning if $c^T w < 0$, but it starts decreasing once $c^T w > 0$, and eventually goes to $0$ (see Fig. \ref{fig:dynamics_1}). This shows how alignment of $c$ and $w$ (measured by $c^T w$) facilitates convergence.

The alignment between $c$ and $w$  increases monotonically in FA* dynamics. This is not true in FA dynamics. FA update is given by 
\begin{align*}
    \frac{dZ}{dt} &= (y - \hat{y})c^T\\
    \frac{dw}{dt} &= Z^T(y - \hat{y}).
\end{align*}
This gives
\begin{align*}
    \frac{d \ c^T w}{dt} = c^T Z^T(y - \hat{y}).
\end{align*}
Suppose $Z(0) = -y c^T$ and $w(0) = 0$. In this case, at $t=0$, $\frac{d \ c^T w}{dt} = - \norm{c}^2 \norm{y}^2 < 0$. Therefore, alignment between $c$ and $w$ can also decrease in FA dynamics. This makes FA* more suitable to understand the dynamics of alignment and its implications for convergence.
\end{example}
\begin{figure*}[t]
    \centering 
\begin{subfigure}{0.5\textwidth}
  \includegraphics[width=  \linewidth]{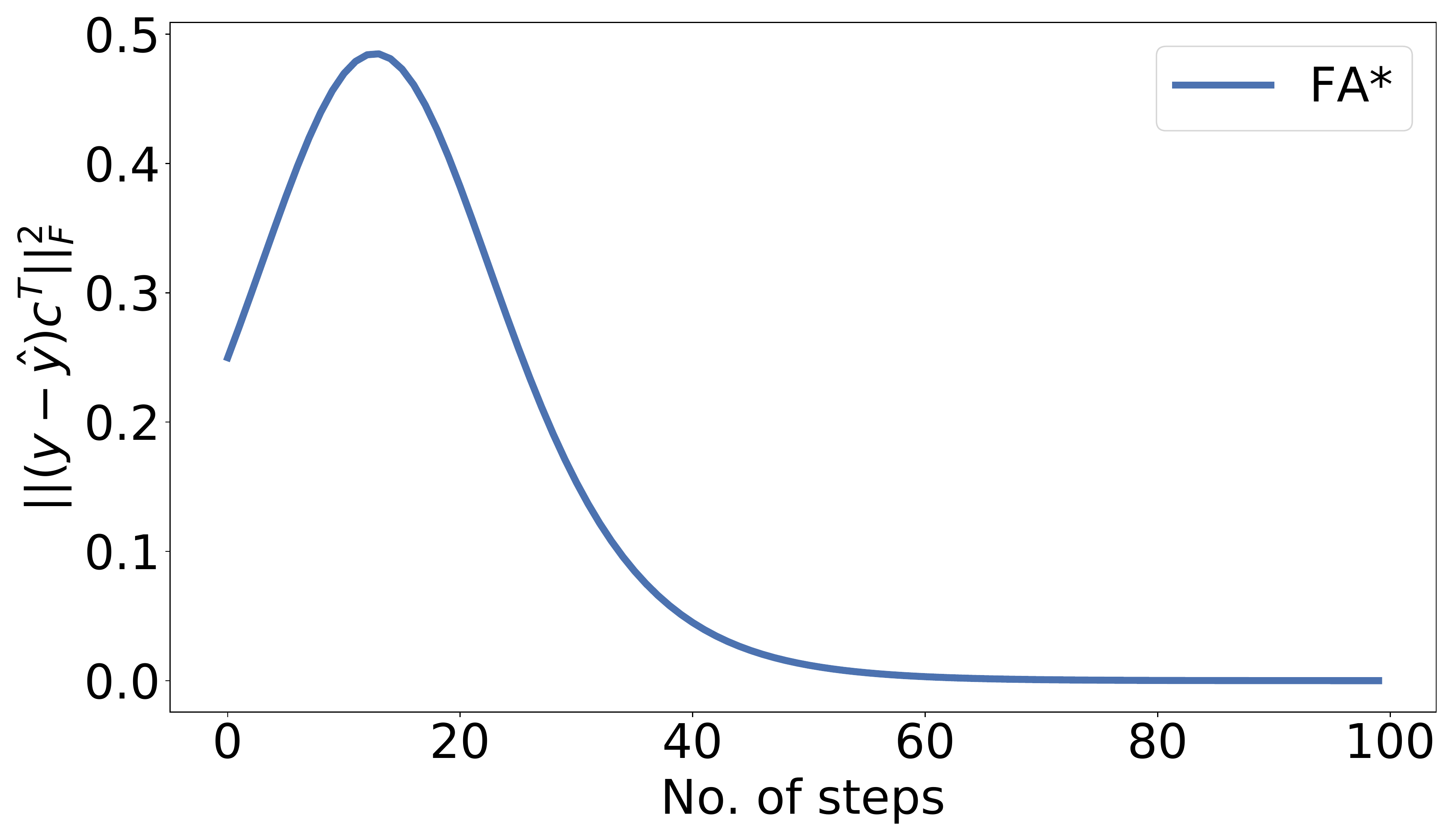}
  \caption{}
  \label{fig:dynamics_1}
\end{subfigure}
\begin{subfigure}{0.5\textwidth}
  \includegraphics[width=\linewidth]{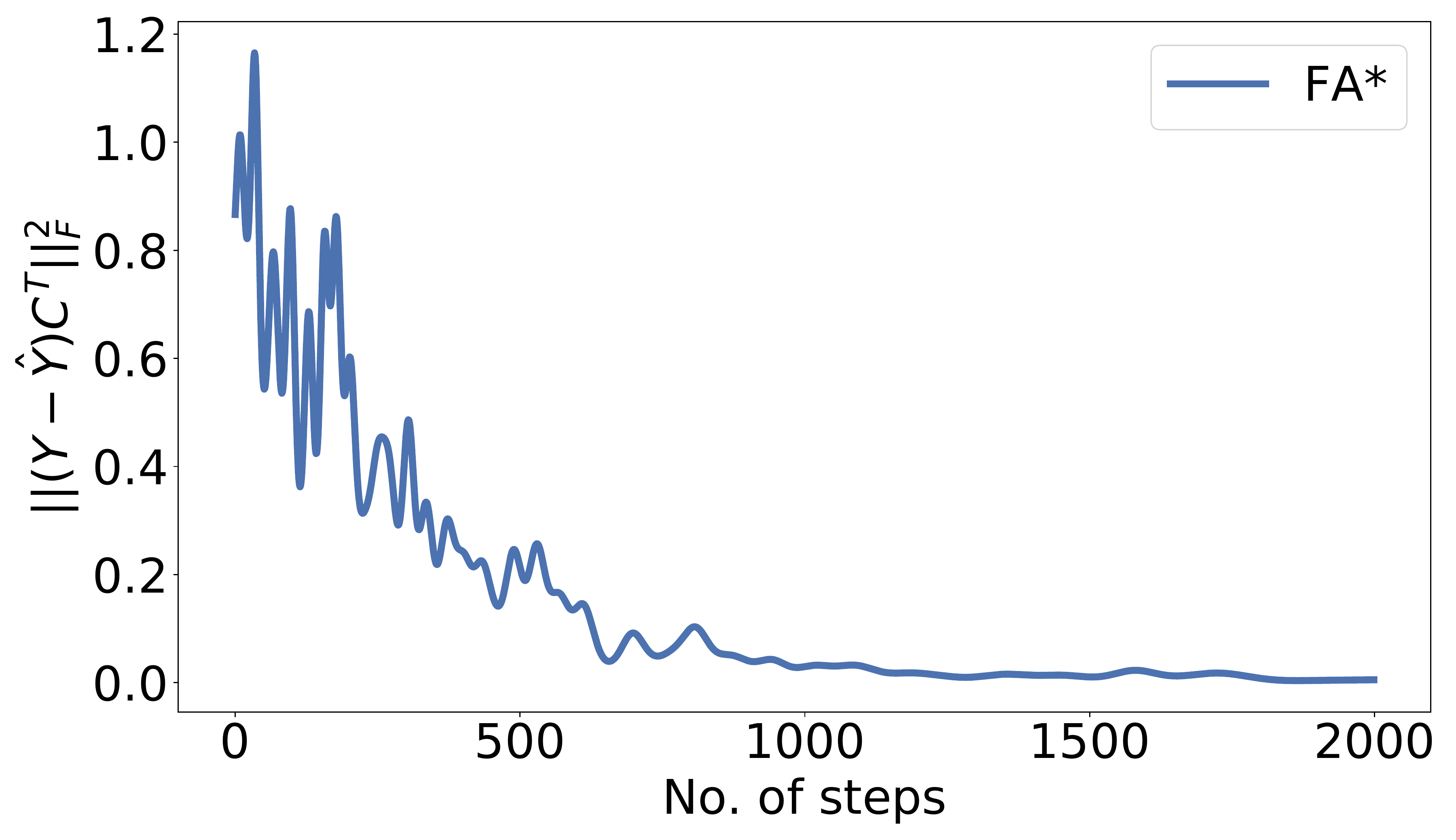}
  \caption{}
  \label{fig:dynamics_2}
\end{subfigure}
\caption{(a)When $y$ and $\hat{y}$ are column vectors, $\norm{(y-\hat{y})c^T}_F^2$ increases monotonically in the beginning followed by monotonic decrease (see Example \ref{ex:convergence-1d}). (b) For general matrices $Y$ and $\hat{Y}$, progression of $\norm{(Y-\hat{Y})C^T}_F^2$ can be highly non-monotonic. \vspace{-10pt} }
\label{fig:dynamics}
\end{figure*}

In this example with $m = 1$, we saw that the loss $\norm{(Y - \hat{Y})C^T}_F^2$ has an initial phase in which it increases monotonically, followed by a phase in which it decreases monotonically. However, the loss can be highly non-monotone in the general case. We illustrate this in Fig. \ref{fig:dynamics_2}, where we show the loss progression for FA* for the case where $m = n = 100$ and $r = 99$. In our simulations, we observe such highly non-monotonic behaviour when $r$ is close to $n$. We observe a similar highly non-monotone behavior of loss for FA as well. 

Therefore, we need a more careful analysis to understand the dynamics for the general case.
From the FA* update equations (\ref{eq:fa_plus_update}), we get $\frac{d \ Z^T Z}{dt} = 0$. That is, $Z^TZ$ does not change with time. For this discussion, let us assume $Z$ is initialized such that $Z(0)^TZ(0) = I$, which implies $Z^TZ = I$ throughout. 

Also, let $R$ denote the residual matrix $(Y - \hat{Y})C^T$, $A$ denote the alignment matrix $CW^T + WC^T$, and $\ell$ denote the loss $\norm{R}_F^2$. $R_i$ denotes the $i^\text{th}$ row of $R$ (viewed as a column vector). Using basic matrix calculus, we get
\begin{align}
    \frac{d \ell}{dt} &= -Tr(RAR^T) = -\sum_{i=1}^n R_i^T A R_i, \label{eq:FA_plus_loss_update} \\
    \frac{dA}{dt} &= 2 R^TR. \label{eq:FA_plus_A_update}
\end{align}
 Equation \ref{eq:FA_plus_loss_update} says that if $A$ is positive semi-definite (PSD), then the loss  $\ell$ decreases with time. Equation \ref{eq:FA_plus_A_update} says that $A$ becomes more PSD with time, that is, $x^T A x$ never decreases for any fixed $x$ (see Fig. \ref{fig:alignment_1}).  This is the sense in which alignment between $C$ and $W$ increases monotonically.  However, unlike Example \ref{ex:convergence-1d}, this is not sufficient to claim that loss starts decreasing monotonically after some time. This is because
 $A$ may never become PSD as there can be some $x$ for which $\frac{d \ x^T A x}{dt}$ remains $0$  after some time. We demonstrate this in Fig. \ref{fig:alignment_2} where we show an instance where the minimum eigenvalue of $A$ is monotonically increasing, but stays negative. That is, $A$ does not become PSD. However, $\ell$ still converges to zero (Fig. \ref{fig:dynamics_2} shows the corresponding loss progression).

\begin{figure*}[t]
    \centering 
\begin{subfigure}{0.5\textwidth}
  \includegraphics[width=\linewidth]{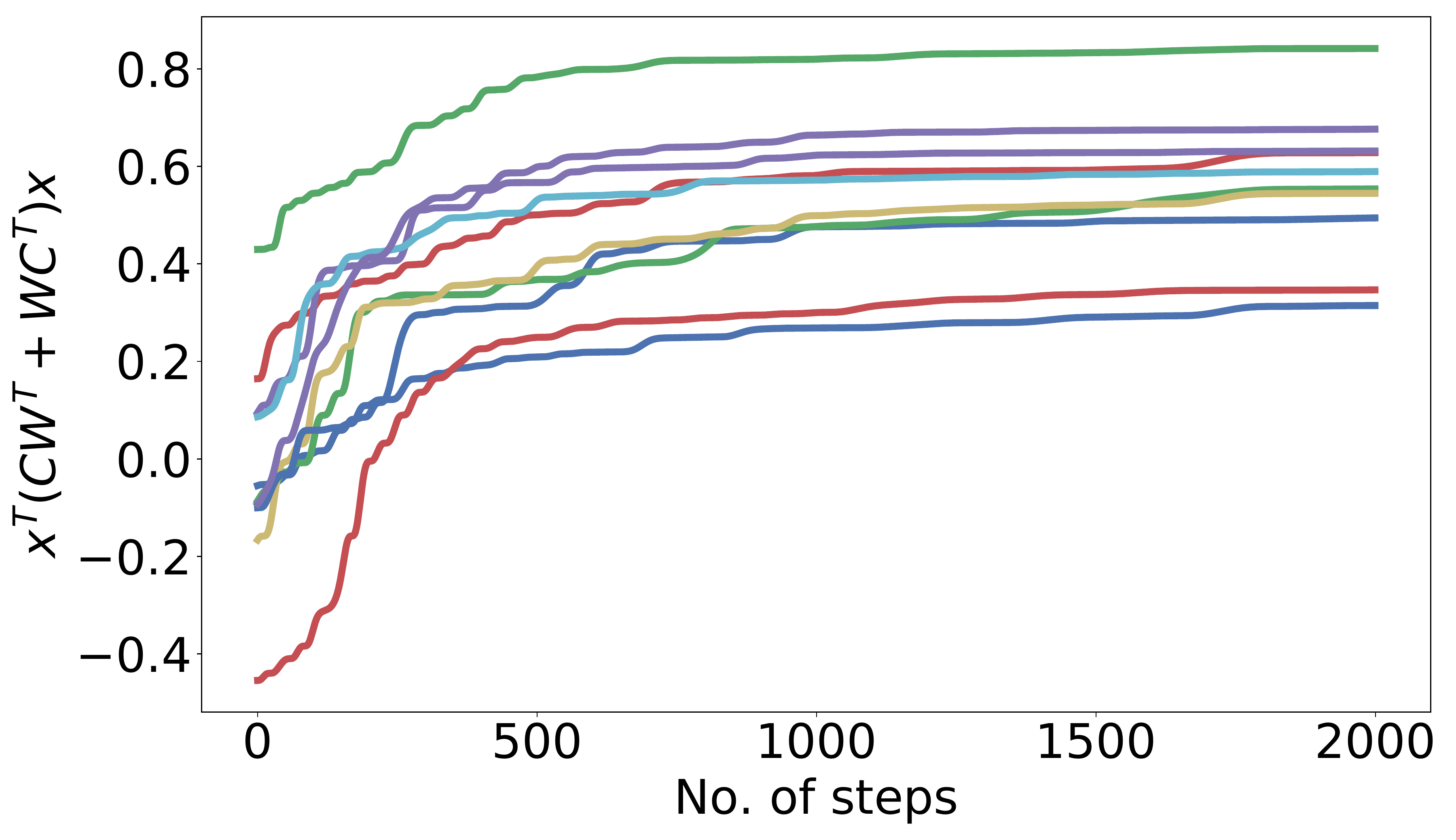}
  \caption{}
  \label{fig:alignment_1}
\end{subfigure}
\begin{subfigure}{0.5\textwidth}
  \includegraphics[width=\linewidth]{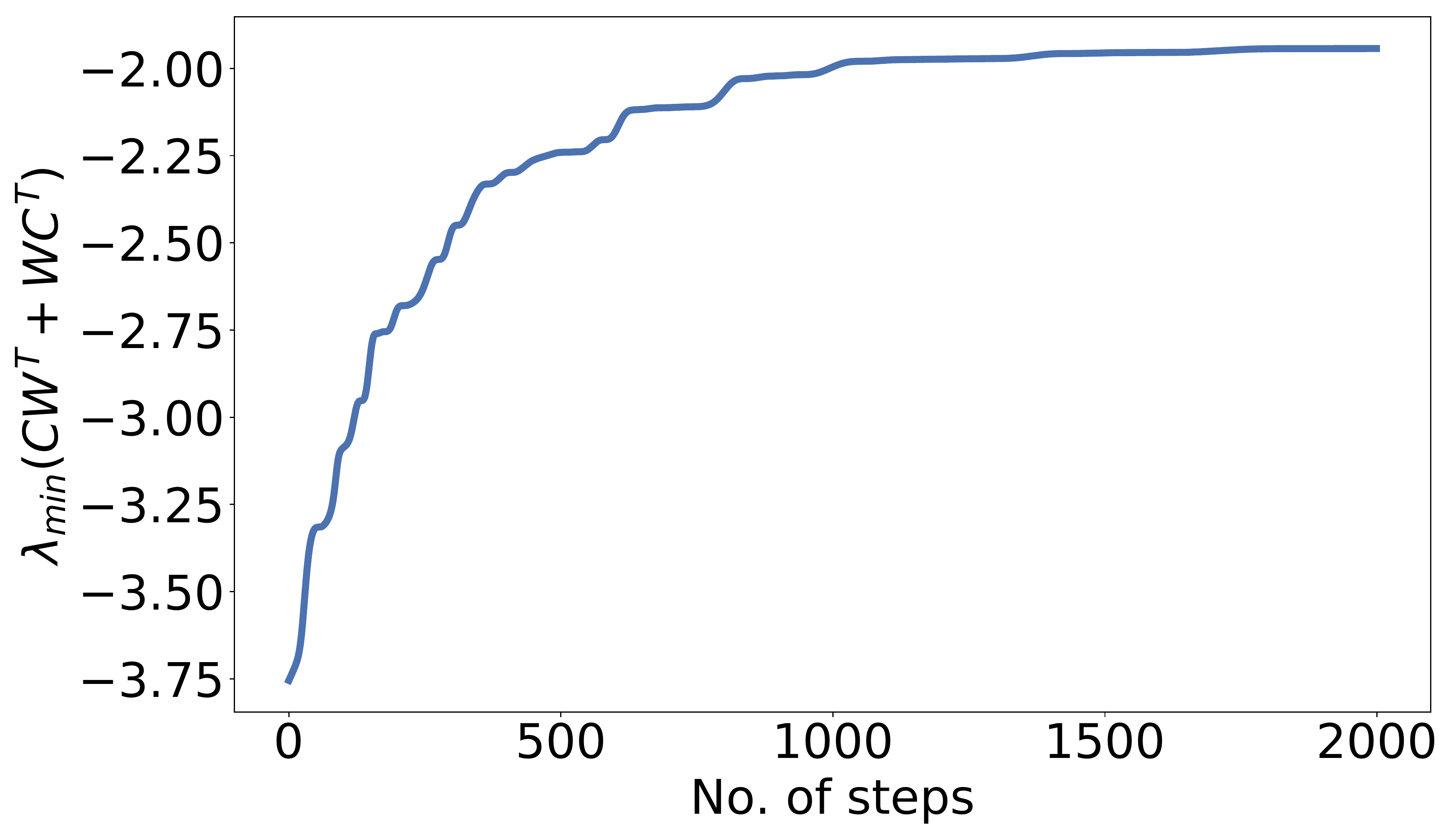}
  \caption{}
  \label{fig:alignment_2}
\end{subfigure}
\caption{ (a) $x^T (CW^T + WC^T)x$ vs time for 10 randomly chosen $x$. $x^T (CW^T + WC^T)x$ is monotonically increasing for all $x$. (b) Minimum eigenvalue of $CW^T + WC^T$ is monotonically increasing but can stay negative.\vspace{-10pt}}
\label{fig:alignment}
\end{figure*}
To get past this hurdle, we need to understand the directions $x$ for which $\frac{d \ x^T A x}{dt} > 0$.  Observe that  when $\frac{d\ell}{dt} > 0$, there is some row $R_j$  of $R$ such that $R_j^T A R_j < 0$. And $x^T A x$ increases sufficiently for all $x$ for which $R_j^T x$ is large enough. In other words, when the loss increases, $C$ and $W$ become better aligned with respect to the direction  which led to increase in loss ($R_j$), and all directions close to it.  And such an $R_j$ --- with respect to which $C$ and $W$ are not aligned, satisfying $R_j^TAR_j < 0$ --- must exist whenever the loss increases.  Therefore, the loss can not increase indefinitely. Using this idea, we bound the total possible increase in loss, $\int_0^T \frac{d\ell}{dt} \mathds{1} \left[\frac{d\ell}{dt} \geq 0 \right]$ dt,  for all $T$. Here, $\mathds{1}[\cdot]$ denotes the indicator function which is equal to $1$ if the condition inside the bracket is true, and $0$ otherwise.

Using a similar argument, we can bound the total time for which the loss is large and is either increasing or decreasing very slowly. That is, we bound $\int_0^T \mathds{1} \left[\ell \geq \epsilon \text{ and } \frac{d\ell}{dt} \geq -\delta  \right] dt$ for all $\epsilon > 0, \delta > 0, T \geq 0$. At any other time, if the loss is large, it has to decrease sharply.

Therefore, the loss cannot increase too much, and cannot be in a slowly decreasing phase for too long. 
Using this, we show a bound on the time by which loss goes below $\epsilon$ (first part of Theorem \ref{thm:convergence-mat-fac}). Building on these ideas, we can also show that the loss converges to $0$ eventually. We refer the readers to Appendix \ref{sec:convg_proof} for more details.

In summary, here is the crux of the argument: whenever a bad event happens (increase in loss or slow decrease in loss), the alignment between $C$ and $W$ increases with respect to the direction which caused the bad event ($R_j$), and all directions close to it. Such a bad event can not happen when $C$ and $W$ are sufficiently aligned with respect to all rows of $R$. Therefore a bad event can not happen many times. This identifies the directions with respect to which alignment increases, and how this phenomenon facilitates convergence.

\paragraph{Implications for FA.}

\begin{wrapfigure}{r}{0.47\textwidth}
\vspace{-16pt}
\centering
\includegraphics[width=0.47\textwidth]{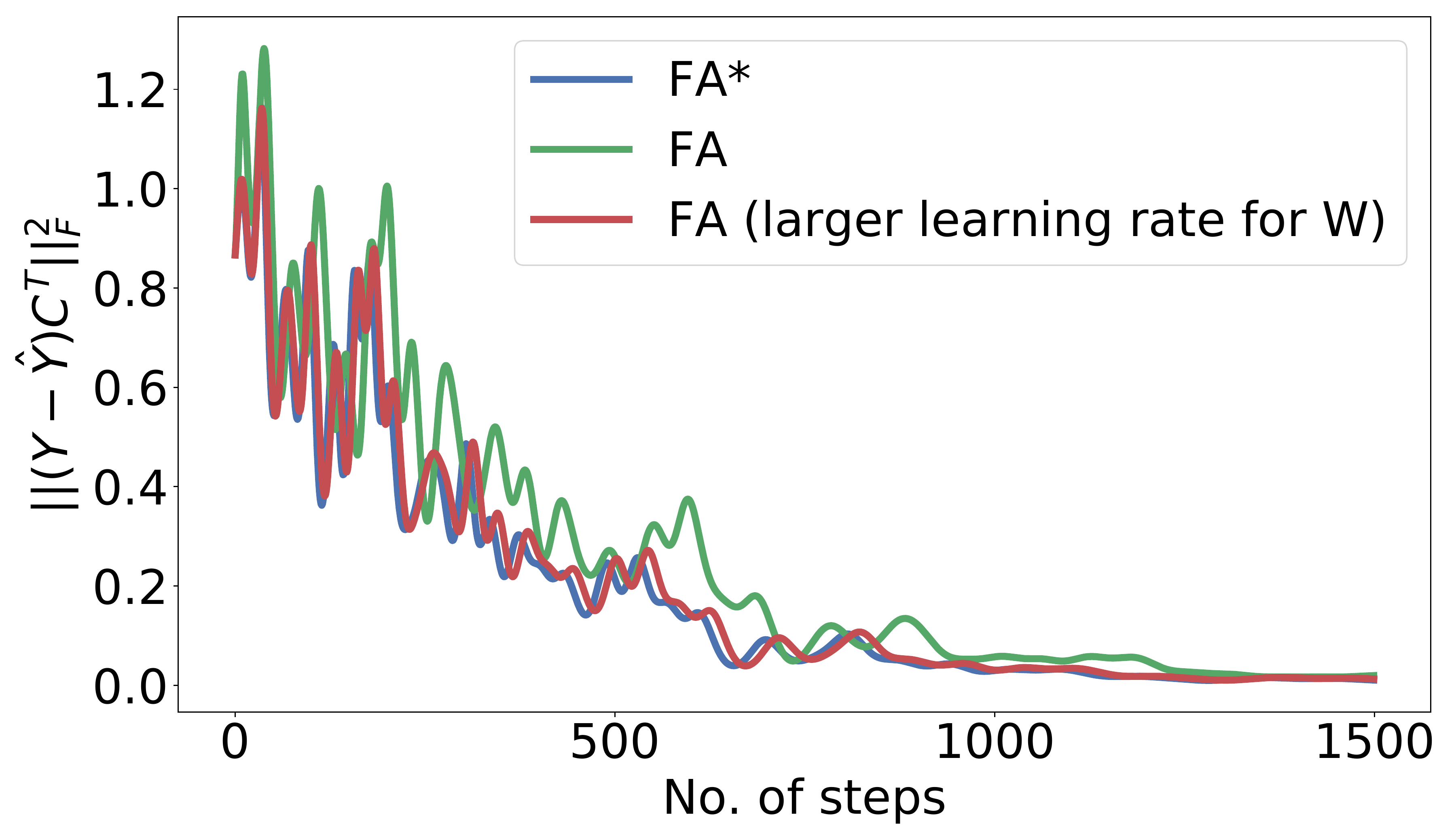}
  \captionof{figure}{\small{
  FA and FA* loss progression when $W$ is initialized optimally for FA.
  }}
    \label{fig:dynamics_comparison}
\end{wrapfigure}

 The only difference between FA and FA* is that we set $W$ optimally in the FA* update whereas we take the gradient step for $W$ in the FA update. As we discussed in Example \ref{ex:convergence-1d}, the dynamics for FA and FA* can generally be very different. However, if we initialize FA with the optimal $W$ (for the given $Z$) at $t=0$, we observe that its loss progression is similar to FA* (see Fig. \ref{fig:dynamics_comparison}).
The similarity is even more apparent if we initialize $W$ optimally and choose a larger learning rate for $W$ (compared to $Z$) in which case $W$ continues to be close to optimal throughout the dynamics. For reference, we also include a plot for FA with randomly initialized $W$ in Appendix \ref{sec:sim_details}.

More generally, we believe that the ideas behind understanding alignment for FA*  may also be helpful for FA. To see this, observe that for FA,   
\begin{align*}
\frac{dA}{dt} &= C(Y - \hat{Y})^TZ + Z^T(Y - \hat{Y})C^T,\\
    \frac{d^2 A}{ dt^2} &= 2 R^TR \\ &  - Z^T(Y-\hat{Y})C^TWC^T  - CWC^T(Y - \hat{Y})^TZ \\ &  -  Z^TZZ^T(Y - \hat{Y})C^T  - C(Y - \hat{Y})^TZZ^TZ.
\end{align*}
 Here, $A = CW^T + WC^T$ and $R = (Y - \hat{Y})C^T$. If $W$ is close to optimal, then $Z^T(Y - \hat{Y}) \approx 0$, $dA/dt \approx 0$ and $d^2A/dt^2 \approx 2R^TR$. Recall from Equation \ref{eq:FA_plus_A_update} that $dA/dt = 2R^TR$ for FA*. This $R^TR$ term is the main reason for alignment. Thus, when $Z^T(Y - \hat{Y}) \approx 0$, one can hope to argue that $A$ becomes more PSD with time and the ideas behind the analysis of FA* may be helpful for analysing FA. In general, if
one can understand the progression of $Z^T(Y - \hat{Y})$, combining it with our insights about FA* mays yield a rigorous understanding of FA alignment dynamics. 

\section{Understanding the stationary points}
\label{sec:stationary_points}
In the previous section, we saw that FA* converges to a stationary point satisfying $(Y-\hat{Y})C^T = 0$, where $\hat{Y}_{n \times m} = Z_{n \times r}W_{r \times m}$ and $W = (Z^TZ)^{-1}Z^TY$. In this section, we study the stationary points of feedback alignment and compare them to those of gradient flow (\ref{eq:gd_update}). From prior work \citep[Theorem 39, part (b)]{bah2019learning}, we know that gradient flow starting from a random initialization converges to the optimal solution almost surely. We investigate when the solution found by feedback alignment is optimal, and how different the representations found using feedback alignment and gradient flow can be.

Note that the stationary point equations are  same for FA and FA*, and are given by
\begin{align}
\begin{split}
\label{eq:stationary_FA}
    (Y - \hat{Y})C^T &= 0\\
    Z^T(Y - \hat{Y}) &= 0\\
    \hat{Y} &= ZW.
\end{split}
\end{align}
So the results of this section apply to both versions of feedback alignment.

\paragraph{Characterization of stationary points.}
In the next lemma, we characterize the solution $\hat{Y}$ at stationary points when $C$ is chosen randomly.
\begin{restatable}{lemma}{firstlemma}
\label{lem:FA_sol_char}
Suppose $C$ is chosen randomly with entries drawn i.i.d. from $\mathcal{N}(0,1)$ and the stationary point equations for feedback alignment (\ref{eq:stationary_FA}) are satisfied. Let $A_{n \times r}  = YC^T$ and $B_{r \times m} = \argmin_{B} \norm{A B - Y}_F^2$.  Then $\hat{Y} = AB$ almost surely.
\end{restatable}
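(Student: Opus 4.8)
The plan is to reduce the statement to two facts: (i) at a stationary point the column space of $A = YC^T$ equals that of $\hat Y$, and (ii) $\hat Y$ is itself the orthogonal projection of $Y$ onto $\mathrm{col}(\hat Y)$. Granting these, since $B = A^{+}Y$ is a minimizer of $\norm{AB-Y}_F^2$ we have $AB = AA^{+}Y = P_{\mathrm{col}(A)}Y = P_{\mathrm{col}(\hat Y)}Y = \hat Y$, where $P_{\mathcal S}$ denotes orthogonal projection onto a subspace $\mathcal S$. So the proof amounts to establishing (i) and (ii).

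First I would extract the geometric content of the stationary equations (\ref{eq:stationary_FA}). From $Z^T(Y-\hat Y)=0$ together with $\hat Y = ZW$ we get $\hat Y\in\mathrm{col}(Z)$ and $Y-\hat Y\perp\mathrm{col}(Z)$, so $\hat Y = P_{\mathrm{col}(Z)}Y$; no rank assumption on $Z$ is needed here. In particular $\mathrm{rank}(\hat Y)\le\min(\mathrm{rank}(Y),r)$ — at most $r$ because $\hat Y=ZW$, and at most $\mathrm{rank}(Y)$ because $\hat Y$ is a projection of $Y$. From the other equation, $(Y-\hat Y)C^T=0$ gives $A = YC^T = \hat Y C^T$, hence $\mathrm{col}(A)\subseteq\mathrm{col}(\hat Y)$. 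Also, decomposing $Y=\hat Y+(Y-\hat Y)$ with $Y-\hat Y\perp\mathrm{col}(Z)\supseteq\mathrm{col}(\hat Y)\supseteq\mathrm{col}(A)$ shows $P_{\mathrm{col}(A)}Y = P_{\mathrm{col}(A)}\hat Y$, so (ii) (and the final identity) will follow once we know $\mathrm{col}(A)=\mathrm{col}(\hat Y)$.

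The key step is a rank computation for $A=YC^T$. Here the point is that $Y$ is a fixed matrix independent of $C$ (even though the stationary point $(Z,W)$ generally depends on $C$). Writing the reduced SVD $Y=U\Sigma V^T$ with $\rho=\mathrm{rank}(Y)$, we get $YC^T = U\Sigma(CV)^T$, and $CV$ is an $r\times\rho$ matrix whose rows are i.i.d.\ $\mathcal N(0,I_\rho)$ because $V$ has orthonormal columns and is independent of $C$; hence $CV$ has rank $\min(r,\rho)$ almost surely, and since $U\Sigma$ has full column rank $\rho$ this gives $\mathrm{rank}(A)=\min(\mathrm{rank}(Y),r)$ a.s. Combining with $\mathrm{col}(A)\subseteq\mathrm{col}(\hat Y)$ and $\mathrm{rank}(\hat Y)\le\min(\mathrm{rank}(Y),r)$ forces $\mathrm{rank}(A)\ge\mathrm{rank}(\hat Y)$, whence $\mathrm{col}(A)=\mathrm{col}(\hat Y)$.

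Putting it together: $AB = P_{\mathrm{col}(A)}Y = P_{\mathrm{col}(A)}\hat Y = \hat Y$ since $\mathrm{col}(A)=\mathrm{col}(\hat Y)$. I do not expect a serious obstacle; the only place randomness is used is the rank identity $\mathrm{rank}(YC^T)=\min(\mathrm{rank}(Y),r)$, and the one subtlety worth stating carefully is that this holds almost surely purely because $Y$ is fixed — the possible dependence of $Z$ on $C$ is irrelevant, since for $\hat Y$ we only ever invoke the deterministic bound $\mathrm{rank}(\hat Y)\le\min(\mathrm{rank}(Y),r)$.
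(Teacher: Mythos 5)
Your proof is correct. It rests on the same probabilistic ingredient as the paper's — that $\mathrm{rank}(YC^T)=\min(\mathrm{rank}(Y),r)$ almost surely, which only uses that $Y$ is fixed and $C$ is Gaussian — but the deterministic bookkeeping is organized differently. The paper splits into the cases $\mathrm{rank}(Y)\le r$ and $\mathrm{rank}(Y)>r$: in the first it shows $\mathrm{col}(A)=\mathrm{col}(Y)\subseteq\mathrm{col}(Z)$ so that both $AB$ and $ZW$ equal $Y$; in the second it shows $\mathrm{col}(A)=\mathrm{col}(Z)$ with both of rank $r$, writes $A=ZR$ for an invertible $R$, and verifies $AB=ZW$ by an explicit pseudo-inverse computation. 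You instead extract from $(Y-\hat Y)C^T=0$ the sharper containment $\mathrm{col}(A)\subseteq\mathrm{col}(\hat Y)$ (not merely $\subseteq\mathrm{col}(Z)$), which together with $\mathrm{rank}(\hat Y)\le\min(\mathrm{rank}(Y),r)=\mathrm{rank}(A)$ forces $\mathrm{col}(A)=\mathrm{col}(\hat Y)$ in both regimes at once; the conclusion then falls out of the projection identities $AB=P_{\mathrm{col}(A)}Y$ and $\hat Y=P_{\mathrm{col}(Z)}Y$. Your handling of the dependence issue is also exactly right: the rank identity for $YC^T$ is a statement about $C$ alone, and the stationary point enters only through the deterministic bound on $\mathrm{rank}(\hat Y)$. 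What your route buys is a unified, case-free argument with no pseudo-inverse algebra; what the paper's route buys is the explicit fact $\hat Y=Y$ in the overparameterized case, which it reuses verbatim to prove Theorem \ref{thm:FA_over_param_opt}.
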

The proof of Lemma \ref{lem:FA_sol_char} can be found in Appendix \ref{sec:sol_char_proof_FA_over_param_proof}.
To understand Lemma \ref{lem:FA_sol_char}, we write 
$Y = \sum_{i=1}^n \sigma_i u_i v_i^T$ where $\sigma_i$ is the $i^{\text{th}}$ singular value of $Y$, and $u_i$ and $v_i$ are the corresponding left and right singular vectors respectively. Then the $j^{\text{th}}$ column of $A$,  
\[A^{(j)} = \sum_{i=1}^n \sigma_i u_i R_{ij},\] where $R_{ij} = \langle v_i, C^{(j)} \rangle$ is a $\mathcal{N}(0,1)$ random variable.
That is, $A^{(j)}$ is a random linear combination of singular vectors of $Y$, scaled by its singular values. Lemma \ref{lem:FA_sol_char} says that feedback alignment finds the solution $\hat{Y}$ that corresponds to the best approximation of $Y$ (in Frobenius norm) in the space spanned by $A^{(j)}$s. On the other hand, gradient descent finds the solution that corresponds to the best approximation of $Y$ in the space spanned by top-$r$ singular vectors of $Y$, which is also the optimal solution (see e.g., \citep{blum2020foundations}).

\paragraph{Optimality of solution.}
Next, we show that when $r \geq rank(Y)$,  feedback alignment stationary points correspond to the optimal solution (see Fig. \ref{fig:separation_GD_FA_1} for an illustration).
\begin{restatable}{theorem}{overparamopt}
\label{thm:FA_over_param_opt}
Suppose $C$ is chosen randomly with entries drawn i.i.d. from $\mathcal{N}(0,1)$, the stationary point equations for feedback alignment (\ref{eq:stationary_FA}) are satisfied, and $r \geq rank(Y)$. Then $ZW = \hat{Y} = Y$ almost surely, which also minimizes $\norm{ZW - Y}_F^2.$
\end{restatable}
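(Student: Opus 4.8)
The plan is to combine the characterization in Lemma~\ref{lem:FA_sol_char} with a rank argument showing that the column space of $A = YC^T$ already contains the column space of $Y$ as soon as $r \ge \mathrm{rank}(Y)$. Once this is established, $Y$ lies in the column span of $A$, so the minimum of $\norm{AB-Y}_F^2$ over $B$ is $0$; hence for the minimizer $B$ from the lemma we get $\hat Y = AB = Y$, which trivially minimizes $\norm{ZW-Y}_F^2$.

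Concretely, let $k = \mathrm{rank}(Y)$ and take the thin SVD $Y = U\Sigma V^T$ with $U \in \R^{n\times k}$ and $V \in \R^{m\times k}$ having orthonormal columns and $\Sigma \in \R^{k\times k}$ diagonal with strictly positive entries. Then
\[
A = YC^T = U\Sigma V^T C^T = U\,\Sigma\,(CV)^T ,
\]
with $CV \in \R^{r\times k}$. Since $U$ has full column rank and $\Sigma$ is invertible, $\mathrm{col}(A) = U\,\Sigma\,\mathrm{col}\big((CV)^T\big)$, and if $CV$ has rank $k$ then $(CV)^T$ (a $k\times r$ matrix) has column space all of $\R^k$, giving $\mathrm{col}(A) = U\,\Sigma\,\R^k = \mathrm{col}(U) = \mathrm{col}(Y)$.

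It remains to argue that $CV$ has rank $k$ almost surely; this is the only place randomness of $C$ enters, and the mild subtlety is that $V$ depends on $Y$. Conditioning on $Y$ (hence on $V$), extend the columns of $V$ to an orthonormal basis of $\R^m$, i.e.\ pick an orthogonal $Q\in\R^{m\times m}$ whose first $k$ columns are those of $V$. By rotational invariance of the standard Gaussian, $CQ$ has i.i.d.\ $\mathcal N(0,1)$ entries, and $CV$ is the submatrix consisting of its first $k$ columns. An $r\times m$ matrix with i.i.d.\ $\mathcal N(0,1)$ entries has every $r\times k$ column-submatrix of rank $\min(r,k)=k$ (using $r\ge k$) almost surely, since the rank-deficient locus in $\R^{r\times k}$ is a proper algebraic subvariety and hence Lebesgue-null. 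Everything else is deterministic linear algebra.

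Finally, since $\mathrm{col}(Y)\subseteq \mathrm{col}(A)$, for any $B$ minimizing $\norm{AB-Y}_F^2$ the product $AB$ equals the column-wise orthogonal projection of $Y$ onto $\mathrm{col}(A)$, which is $Y$ itself; thus $\hat Y = AB = ZW = Y$ almost surely, and $\norm{ZW-Y}_F^2 = 0$ is clearly the global minimum. The main obstacle is precisely the almost-sure full-rank claim for $CV$ (and handling the data-dependence of $V$, resolved by conditioning plus rotational invariance); the rest follows directly from Lemma~\ref{lem:FA_sol_char}.
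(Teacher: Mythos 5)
Your proposal is correct and follows essentially the same route as the paper: the paper's proof also reduces the theorem to Case 1 of Lemma~\ref{lem:FA_sol_char}, where it asserts $\mathrm{col}(YC^T)=\mathrm{col}(Y)$ almost surely when $r\ge\mathrm{rank}(Y)$ and concludes $\hat Y = AB = ZW = Y$ via the projection characterization. The only difference is that you spell out the almost-sure full-rank claim for $CV$ (via rotational invariance and the nullity of the rank-deficient locus), which the paper states without detail.
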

The proof of Theorem \ref{thm:FA_over_param_opt} follows directly from Lemma \ref{lem:FA_sol_char} and can be found in Appendix \ref{sec:sol_char_proof_FA_over_param_proof}. Columns of $A$ correspond to random linear combinations of singular vectors of $Y$, scaled by its singular values. We have at least $rank(Y)$ such columns. Therefore, the columns of $A$ span the singular vectors of $Y$ (corresponding to non-zero singular values) almost surely. Theorem  \ref{thm:FA_over_param_opt} follows since $\hat{Y}$ is the best approximation of $Y$ in the column span of $A$, which is equal to $Y$ almost surely.

Also, note that this result does not hold for arbitrary $C$. For instance, suppose $Y$ is a rank-$1$ matrix and $r=1$. Let $Z$ be any arbitrary full column-rank matrix, and $W = (Z^TZ)^{-1}Z^TY$.
In this case, $Y-\hat{Y}$ has rank at most $2$. If we choose $C_{1 \times m}$ such that its only row is orthogonal to the row space of $Y-\hat{Y}$, then $(Y-\hat{Y})C^T = 0$ and the stationary point equations (\ref{eq:stationary_FA}) are satisfied. However, $\hat{Y}$ may not be equal to $Y$. This motivates the random choice of $C$.

Next, we show that the feedback alignment solution can be far from optimal when $r$ is much smaller than $rank(Y)$ (see Fig.  \ref{fig:separation_GD_FA_2} for an illustration).
\begin{restatable}{theorem}{faseparationgd}
\label{thm:FA_separation_GD}
Suppose \\
(i)  $C$ has entries drawn i.i.d. from $\mathcal{N}(0,1)$, \\
(ii) the stationary point equations for feedback alignment (\ref{eq:stationary_FA}) are satisfied, \\
(iii) the singular values of $Y$ satisfy 
\[
    \sigma_i = \begin{cases}
        \frac{1}{\sqrt{2r}}, & \text{for \ } i \leq r\\
        \frac{1}{\sqrt{2(n-r)}}, & \text{for \ } r+1 \leq i \leq n
        \end{cases}
  \]\\
  (iv) $c_1 \leq  r \leq c_2 n$  for some absolute constants $c_1, \ c_2$, 
  
  then the  error $\norm{ZW - Y}_F^2 \geq 0.74$ with probability at least 0.99 over the choice of $C$. 
  
  On the other hand, gradient flow (\ref{eq:gd_update}) starting from randomly initialized $Z$ and $W$ ( with i.i.d. $\mathcal{N}(0, 1)$ entries) converges to the optimum solution with $\norm{ZW - Y}_F^2 = 0.5$ almost surely.
\end{restatable}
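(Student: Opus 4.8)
The plan is to treat the two assertions separately. The gradient-flow claim is immediate from prior work: by \citep[Theorem 39]{bah2019learning}, gradient flow from a random start converges almost surely to a global minimizer of $\norm{ZW-Y}_F^2$ over rank-$r$ matrices, and (Eckart--Young) the best rank-$r$ approximation of $Y$ keeps its top $r$ singular values, so the optimal error is $\sum_{i=r+1}^{n}\sigma_i^2 = (n-r)\cdot\tfrac{1}{2(n-r)} = \tfrac12$. All of the work is in the feedback-alignment lower bound.

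By Lemma~\ref{lem:FA_sol_char}, at a stationary point $\hat Y = AB$ with $A = YC^T$ and $B = \argmin_{B}\norm{AB-Y}_F^2$; minimizing column by column, $\hat Y = P_A Y$, the orthogonal projection of $Y$ onto $\mathrm{col}(A)$, so $\norm{ZW-Y}_F^2 = \|Y\|_F^2 - \mathrm{Tr}(P_A YY^T)$. Write the SVD $Y = U\Sigma V^T$; since (iii) forces $\mathrm{rank}(Y) = n$, $U\in\R^{n\times n}$ is orthogonal, $\Sigma\in\R^{n\times n}$, and $V^TV = I_n$. Then $A = U\Sigma G$ with $G := V^TC^T$, and $G$ has i.i.d.\ $\mathcal N(0,1)$ entries because the columns of $V$ are orthonormal and $C$ is Gaussian. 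Almost surely $A$ (hence $\Sigma G$) has full column rank, so $P_A = U P U^T$ with $P := \Sigma G(G^T\Sigma^2 G)^{-1}G^T\Sigma$, and a short trace manipulation gives $\norm{ZW-Y}_F^2 = \mathrm{Tr}(\Sigma^2) - \mathrm{Tr}(\Sigma^2 P) = 1 - \mathrm{Tr}\big((G^T\Sigma^2 G)^{-1}G^T\Sigma^4 G\big)$, using $\|\Sigma\|_F^2 = 1$. Splitting $G$ into its first $r$ rows $G_1$ and last $n-r$ rows $G_2$ and substituting the two-level $\Sigma$ yields
\[
\norm{ZW-Y}_F^2 \;=\; 1 - \frac{a}{2r} - \frac{r-a}{2(n-r)}, \qquad a := \mathrm{Tr}\big((P_1+P_2)^{-1}P_1\big),\ \ P_1 := \tfrac1r G_1^TG_1,\ \ P_2 := \tfrac1{n-r}G_2^TG_2,
\]
where $a\in[0,r]$ since $\mathrm{Tr}((P_1+P_2)^{-1}(P_1+P_2)) = r$. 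Because $a\ge0$ and $r\le c_2 n$, the last term is at most $\tfrac{c_2}{2(1-c_2)}$, which I make $\le 0.05$ by choosing $c_2$ a small absolute constant; hence it remains to show $a\le 0.42\,r$ with probability $\ge 0.99$, after which $\norm{ZW-Y}_F^2 \ge 1 - 0.21 - 0.05 = 0.74$.

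The bound $a\le 0.42\,r$ is the crux, and the subtle point is that it is a statement about the \emph{bulk} of the spectrum of the square Gaussian Wishart matrix $P_1 = \tfrac1r G_1^TG_1$, not just its extreme eigenvalues: using only $\lambda_{\max}(P_1)\lesssim 4$ gives the useless $a\lesssim 0.8\,r$. I would write $a = r - \mathrm{Tr}\big(P_2^{1/2}(P_1+P_2)^{-1}P_2^{1/2}\big)$ and lower-bound the subtracted trace. From $P_1+P_2\preceq P_1+\lambda_{\max}(P_2)I$ and $P_2\succeq\lambda_{\min}(P_2)I$,
\[
\mathrm{Tr}\big(P_2^{1/2}(P_1+P_2)^{-1}P_2^{1/2}\big) \;\ge\; \lambda_{\min}(P_2)\cdot\mathrm{Tr}\big((P_1+\lambda_{\max}(P_2)I)^{-1}\big).
\]
Standard non-asymptotic singular-value bounds for the $(n-r)\times r$ Gaussian $G_2$ show that, for $c_2$ small, $\lambda_{\min}(P_2)$ and $\lambda_{\max}(P_2)$ lie within $o(1)$ of $1$ on an event of probability $1-e^{-\Omega(r)}$. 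The remaining quantity $\tfrac1r\mathrm{Tr}\big((P_1+tI)^{-1}\big)$ with $t\approx1$ is the Stieltjes transform of the empirical spectrum of $P_1$; by Gaussian (Lipschitz) concentration it is within $O(1/r)$ of its mean, and the mean converges to the Marchenko--Pastur value $m_{\mathrm{MP}(1)}(-t) = \tfrac{1}{2t}\big(\sqrt{t^2+4t}-t\big)$, equal to $\tfrac{\sqrt5-1}{2}\approx 0.618$ at $t=1$ --- this input can be supplied by a quantitative Marchenko--Pastur law or by a self-consistent (leave-one-out) estimate for this single resolvent functional. Combining, $a\le r\big(1 - (1-o(1))\cdot0.618\big)\le 0.42\,r$ provided $r\ge c_1$ (so the concentration errors are negligible) and $c_2$ is small (so $t$ is near $1$) --- which is precisely why both $r\ge c_1$ and $r\le c_2 n$ are assumed. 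A union bound over the handful of high-probability events completes the proof with probability $\ge 0.99$.
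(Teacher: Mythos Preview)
Your proposal is correct and follows a genuinely different route from the paper. After the common reduction via Lemma~\ref{lem:FA_sol_char}, the paper also arrives (implicitly) at the same scalar $a=\sum_{i=1}^r\|\hat A^Tu_i\|^2=\mathrm{Tr}\big((P_1+P_2)^{-1}P_1\big)$, but bounds it by an explicit Gram--Schmidt argument: it orthonormalizes the columns of $A$ and shows, column by column, that $\|U_{1:r}^T\hat A^{(j)}\|_2^2\le \tfrac12+O\big(\sqrt{r/(n-r)}\big)$ via a ratio trick and only elementary tools (chi-squared and Gaussian tail bounds, union bounds). This yields $a\le r/2+o(r)$ and hence error $\ge 3/4-o(1)$, rounded to $0.74$. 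Your approach instead writes $a=r-\mathrm{Tr}\big(P_2^{1/2}(P_1+P_2)^{-1}P_2^{1/2}\big)$, replaces $P_2$ by its extreme eigenvalues, and evaluates the resulting Stieltjes transform of the square Wishart $P_1$ at $-1$ using (a quantitative form of) Marchenko--Pastur; this gives the sharper $a\le (1-\tfrac{\sqrt5-1}{2}+o(1))r\approx 0.382\,r$ and hence an asymptotic error bound near $0.809$, comfortably beating the stated $0.74$. The trade-off is that the paper's proof is entirely self-contained with explicit constants ($c_1=2000$, $c_2=1/40001$), whereas your argument is shorter and tighter but imports a non-asymptotic Marchenko--Pastur/Stieltjes-transform estimate as a black box; making the constants $c_1,c_2$ and the $0.99$ probability explicit would require unpacking that black box a little (your Lipschitz-concentration sketch for the resolvent trace is the right ingredient, and the mean can be handled by the standard self-consistent/leave-one-out equation).
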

The proof of Theorem \ref{thm:FA_separation_GD} can be found in Appendix \ref{sec:FA_separation_GD_proof}. 

To understand Theorem \ref{thm:FA_separation_GD}, it is instructive to consider the case when $r=1$. Let the singular values of $Y$ satisfy
\[
    \sigma_i = \begin{cases}
        \frac{1}{\sqrt{2}}, & \text{for \ } i = 1\\
        \frac{1}{\sqrt{2(n-1)}}, & \text{for \ } 2 \leq i \leq n.
        \end{cases}
  \]
From Lemma \ref{lem:FA_sol_char}, we know that $\hat{Y} = A_{n \times 1} B_{1 \times m}$ almost surely. Here $A$ is a column vector satisfying
\[
A = \frac{1}{\sqrt{2}}u_1 R_1+ \frac{1}{\sqrt{2(n-1)}}\sum_{i=2}^n u_i R_{i},
\]
where $u_i$ are the left singular vectors of $Y$ and $R_i$ are drawn i.i.d. from $\mathcal{N}(0, 1)$.  On the other hand, the optimum solution corresponds to $A = u_1$. So while the optimum $A$ aligns with the top singular vector, the $A$ corresponding to feedback alignment has a significant component  in the orthogonal subspace. This causes the feedback alignment solution to be far from optimal.

\paragraph{Comparison of representations.}
In the previous result, we saw that the error achieved by the feedback alignment solution can be much higher than the gradient flow solution. Next, we demonstrate that even when the two errors are approximately equal, the representations recovered by the two algorithms can be almost orthogonal, again in the rank-deficient setting.
\begin{restatable}{theorem}{fagdorthrep}
\label{thm:FA_GD_rep_orthogonal}
Suppose\\
(i)  $C$ has entries drawn i.i.d. from $\mathcal{N}(0,1)$, \\
(ii) the stationary point equations for feedback alignment (\ref{eq:stationary_FA}) are satisfied, \\
(iii) the singular values of $Y$ satisfy $\sigma_1 = 1$ and $\sigma_i = \epsilon$ for $i > 1$, where $0 < \epsilon < 1$,\\
(iv) $r=1$ (rank 1 approximation) and $n \geq c$ for some absolute constant $c$.  
  
  Let $Z_{FA}$ and $W_{FA}$ denote the $Z$ and $W$ satisfying the above conditions respectively, and $Z_{GD}$ and $W_{GD}$ represent the factors found by gradient flow (\ref{eq:gd_update}) starting from randomly initialized $Z$ and $W$ (with i.i.d. $\mathcal{N}(0, 1)$ entries). Then 
  \[
  \norm{Z_{FA} W_{FA} - Y}_F^2 \leq \norm{Z_{GD} W_
  {GD}- Y}_F^2 \left(1 + \frac{2}{\epsilon^2 n}\right)
  \] 
  and 
  \[
  \left| \left\langle \frac{Z_{FA}}{\norm{Z_{FA}}_2}, \frac{Z_{GD}}{\norm{Z_{GD}}_2} \right\rangle \right| \leq \frac{4}{\epsilon\sqrt{n}}
  \]
  with probability at least $0.99$ over the choice of $C$ and random initialization of gradient flow.
\end{restatable}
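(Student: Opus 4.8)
The plan is to reduce everything to the rank-one structure and then compute in the SVD basis of $Y$. Since $r=1$, every $Z$ is a column vector and $C$ is a single row; write $c := C^T \in \mathbb{R}^m$. First I would invoke Lemma~\ref{lem:FA_sol_char}: almost surely $\hat{Y}_{FA} = A B$ with $A = Y c \in \mathbb{R}^n$ and $B = \argmin_B \norm{AB - Y}_F^2 = A^T Y / \norm{A}_2^2$, so $\hat{Y}_{FA} = \tfrac{A A^T}{\norm{A}_2^2} Y$ is the orthogonal projection of the columns of $Y$ onto $\mathrm{span}(A)$; it is rank one with left factor parallel to $A$, hence $Z_{FA}/\norm{Z_{FA}}_2 = \pm A / \norm{A}_2$. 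Writing $Y = \sum_{i=1}^{n} \sigma_i u_i v_i^T$ and repeating the computation from the discussion after Theorem~\ref{thm:FA_separation_GD}, $A = \sum_{i} \sigma_i R_i u_i$ with $R_i := \langle v_i, c \rangle$ i.i.d.\ $\mathcal{N}(0,1)$; plugging in $\sigma_1 = 1$, $\sigma_i = \epsilon$ for $i > 1$ and putting $S := \sum_{i=2}^{n} R_i^2$, orthonormality of the $u_i$ and $v_i$ gives the two identities I will use: $\norm{A}_2^2 = R_1^2 + \epsilon^2 S$ and $\norm{A^T Y}_2^2 = \sum_i \sigma_i^4 R_i^2 = R_1^2 + \epsilon^4 S$.

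For the error bound, since $\hat{Y}_{FA}$ is an orthogonal projection of the columns of $Y$, $\norm{Y - \hat{Y}_{FA}}_F^2 = \norm{Y}_F^2 - \norm{\hat{Y}_{FA}}_F^2 = \bigl(1 + (n-1)\epsilon^2\bigr) - \tfrac{R_1^2 + \epsilon^4 S}{R_1^2 + \epsilon^2 S} = (n-1)\epsilon^2 + \tfrac{\epsilon^2(1-\epsilon^2) S}{R_1^2 + \epsilon^2 S}$, using $\norm{\hat{Y}_{FA}}_F^2 = \norm{A^T Y}_2^2 / \norm{A}_2^2$. Gradient flow converges almost surely to the best rank-one approximation of $Y$ (the cited Bah et al.\ result; it is unique since $\sigma_1 > \sigma_2$), namely $u_1 v_1^T$, so $\norm{Z_{GD} W_{GD} - Y}_F^2 = \sum_{i \geq 2} \sigma_i^2 = (n-1)\epsilon^2$ and $Z_{GD}/\norm{Z_{GD}}_2 = \pm u_1$. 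Taking the ratio, it equals $1 + \tfrac{(1-\epsilon^2) S}{(n-1)(R_1^2 + \epsilon^2 S)} \leq 1 + \tfrac{1}{(n-1)\epsilon^2} \leq 1 + \tfrac{2}{\epsilon^2 n}$, where the last inequality holds for every $n \geq 2$; so the first claimed inequality is in fact deterministic given the almost-sure events above.

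For the orthogonality bound, the projection formula gives $\bigl|\langle Z_{FA}/\norm{Z_{FA}}_2,\ Z_{GD}/\norm{Z_{GD}}_2 \rangle\bigr| = |\langle A, u_1\rangle| / \norm{A}_2 = |R_1| / \sqrt{R_1^2 + \epsilon^2 S} \leq |R_1| / (\epsilon \sqrt{S})$, so it suffices to show $n R_1^2 \leq 16 S$ with probability at least $0.99$. I would obtain this by combining a Gaussian tail bound (so that $R_1^2$ is below an absolute constant $K < 8$ with probability at least $0.995$) with a $\chi^2$ lower-tail bound (Laurent–Massart), which gives $S \geq (n-1)/2$ with probability at least $0.995$ once $n$ exceeds an absolute constant $c$; a union bound then yields $n R_1^2 \leq K n \leq 8(n-1) \leq 16 S$ for $n \geq c$, with total failure probability at most $0.01$. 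The main obstacle — really the only delicate point — is calibrating these two tail probabilities so that the resulting constant is exactly $4/(\epsilon\sqrt{n})$ while keeping the failure probability below $0.01$ (and fixing $c$ accordingly); everything else is the SVD bookkeeping above, and the almost-sure events ($\hat{Y}_{FA} = AB$, $A \neq 0$, gradient-flow convergence) do not count against the $0.99$ probability budget.
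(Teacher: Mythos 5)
Your proposal is correct and follows essentially the same route as the paper: invoke Lemma~\ref{lem:FA_sol_char}, expand $A = YC^T$ in the SVD basis of $Y$ so that $Z_{FA}/\norm{Z_{FA}}_2 = \pm A/\norm{A}_2$, reduce the normalized inner product with $u_1$ to $|R_1|/\sqrt{R_1^2 + \epsilon^2 S}$, and finish with Gaussian and $\chi^2$ tail bounds. The only minor differences are cosmetic --- you compute $\norm{Y-\hat{Y}_{FA}}_F^2$ exactly and observe that the error inequality is deterministic conditional on the almost-sure events, whereas the paper simply upper bounds it by $\norm{Y}_F^2$ --- and the calibration you flag as the remaining delicate step is handled in the paper by taking $R_1^2 \leq 9$ and $\sum_{i\geq 2} R_i^2 \geq 0.9n$ for $n \geq 10^4$, which gives $\sqrt{10/(\epsilon^2 n)} < 4/(\epsilon\sqrt{n})$ directly.
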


The proof of Theorem \ref{thm:FA_GD_rep_orthogonal} can be found in Appendix \ref{sec:proof_FA_GD_rep_orthogonal}.

To understand Theorem \ref{thm:FA_GD_rep_orthogonal}, 
let us set $\epsilon = 0.5$. We get that the error of feedback alignment solution is at most $1+O\left(\frac{1}{n}\right)$ times that of the gradient flow solution, while $Z_{FA}$ and $Z_{GD}$ are almost orthogonal, with normalized absolute inner product $O\left(\frac{1}{\sqrt{n}}\right)$.

From Lemma \ref{lem:FA_sol_char}, we know that $\hat{Y} = A_{n \times 1} B_{1 \times m}$ almost surely. Here $A$ is a column vector satisfying
\[
A = u_1 R_1+ 0.5\sum_{i=2}^n u_i R_{i},
\]
where $u_i$ are the left singular vectors of $Y$ and $R_i$ are drawn i.i.d. from $\mathcal{N}(0, 1)$. Since $Z_{FA}$ and $A$ are column vectors and $Z_{FA}W_{FA} = AB$ almost surely, we get
\[
\frac{Z_{FA}}{\norm{Z_{FA}}_2} = \frac{u_1 R_1+ 0.5\sum_{i=2}^n u_i R_{i}}{\sqrt{R_1^2 + 0.25\sum_{i=2}^n R_i^2}}
\]
almost surely.
Since gradient flow converges to the optimum solution almost surely, we know
\[
\frac{Z_{GD}}{\norm{Z_{GD}}_2} = u_1
\]
almost surely. Using concentration of a $\chi$-squared random variable, we get that the normalized absolute inner product between $Z_{GD}$ and $Z_{FA}$ is $O\left(\frac{1}{\sqrt{n}}\right)$ with high probability.

The error of gradient flow solution $\norm{Z_{GD} W_
  {GD}- Y}_F^2$ is equal to the optimum error which is $\sum_{i=2}^n \sigma_i^2 = 0.25(n-1)$. It is not hard to see that the error of feedback alignment solution is at most $\norm{Y}_F^2$ which is equal to $1 + 0.25(n-1)$. From here, we get that the error of feedback alignment solution is at most $1+O\left(\frac{1}{n}\right)$ times that of the gradient flow solution.

Therefore, we get that the errors of feedback alignment solution and the gradient flow solution can be approximately equal, while the representations they find are almost orthogonal. We note that for low-rank matrix factorization, this phenomenon only occurs when the optimum error is large. For instance, when the optimum error is $0$, we are in the regime where $r \geq rank(Y)$. In this case, the column space of $Z_{FA}$ and $Z_{GD}$ are equal to the column space of $Y$ almost surely. It would be interesting to understand to what extent the representations found by gradient flow and feedback alignment are different for other problems such as for learning neural networks. And are there problems for which the 
 representations are significantly different even when the optimum error is small?

\section{LINEAR NEURAL NETWORKS}
\label{sec:lin_nn}

As a direct corollary, all our results for matrix factorization also hold for training two-layer linear neural networks, assuming the training input is isotropic and the output is a linear function of the input. Specifically, let $O = XY$, where the rows $X_i$ and $O_i$ represent the $i^{\text{th}}$ training input and output respectively, and let $X^TX = I$.  We want to find $Z$ and $W$ that minimize the training error
\begin{align}
\|XZW - O \|_F^2, 
\end{align}
which is equal to the matrix factorization error $\|ZW - Y \|_F^2$ (Equation \ref{eq:error_fa}). We use $X^TX = I$ here, and in the update equations below.
The gradient flow (GD) update for this problem is given by
\begin{align}
\label{eq:gd_update_nn}
\begin{split}
    \frac{dZ}{dt} &= X^T(O - XZW)W^T = (Y - \hat{Y})W^T \\
    \frac{dW}{dt} &= Z^TX^T(O - XZW) = Z^T(Y - \hat{Y}).
\end{split}
\end{align}
where $\hat{Y} = ZW$. The feedback alignment (FA) update is given by

\begin{align}
\label{eq:fa_update_nn}
\begin{split}
    \frac{dZ}{dt} &= X^T(O - XZW)C^T = (Y - \hat{Y})C^T \\
    \frac{dW}{dt} &= Z^TX^T(O - XZW) = Z^T(Y - \hat{Y}).
\end{split}
\end{align}
The update for feedback alignment with optimal $W$ (FA*) is given by

\begin{align}
\label{eq:fa_plus_update_nn}
\begin{split}
    \frac{dZ}{dt} &= X^T(O - XZW)C^T = (Y - \hat{Y})C^T \\
    W &= (Z^T X^T X Z)^{-1} Z^T X^T O =  (Z^TZ)^{-1}Z^TY
\end{split}
\end{align}

 As the GD, FA and FA* updates, and the error term are same as the corresponding updates and the error term for matrix factorization (Equations \ref{eq:gd_update}, \ref{eq:fa_update}, \ref{eq:fa_plus_update}), all our results also hold in this case.

\section{Conclusion}
We investigate how and when feedback alignment works, focusing on the problem of low-rank matrix factorization. For the ``how" question, we studied the dynamics of alignment between forward and backward weights, and its implications for convergence. For the ``when" question, we showed that feedback alignment converges to the optimal solution when the factorization has rank $r \geq rank(Y)$, but it can be far from optimal in the rank-deficient case where $r < rank(Y)$. To the best of our knowledge, this is the first rigorous separation result between feedback alignment and gradient descent. We also demonstrate that the representations learned by feedback alignment and gradient descent can be very different, even when their errors are approximately equal.

There are many interesting directions for future research. A natural next step is to extend our understanding of alignment dynamics to the problem of learning non-linear neural networks. \citet{song2021convergence} show that alignment may not happen in highly overparameterized neural networks. But it is a robust phenomenon in the parameter regimes typically encountered in practice, and therefore important to understand. It would also be interesting to understand the implicit regularization properties of feedback alignment and compare them to gradient descent by considering  problems such as matrix sensing in the overparameterized regime \citep{gunasekar2017implicit, li2018algorithmic}.
From the point of view of the theory of optimization, a fundamental question is whether feedback alignment is part of a larger family of algorithms (e.g., that replace parts of the gradient with random values) and whether it might be applicable to problems even without layered structure. More generally, building the mathematical foundations of biologically plausible learning is a fruitful direction that can reveal surprising algorithms while advancing our understanding of the brain.

\section{Acknowledgements}
We thank Pulkit Tandon, Rahul Trivedi and Tian Ye for helpful discussions. S.G. was supported by NSF awards AF-1813049 and AF-1704417, and a Stanford Interdisciplinary Graduate
Fellowship. S.S.V. was supported in part by NSF awards AF-1909756, AF-2007443 and AF-2134105. 

\bibliography{references}

\begin{thebibliography}{35}
\providecommand{\natexlab}[1]{#1}
\providecommand{\url}[1]{\texttt{#1}}
\expandafter\ifx\csname urlstyle\endcsname\relax
  \providecommand{\doi}[1]{doi: #1}\else
  \providecommand{\doi}{doi: \begingroup \urlstyle{rm}\Url}\fi

\bibitem[Lillicrap et~al.(2016)Lillicrap, Cownden, Tweed, and
  Akerman]{lillicrap2016random}
Timothy~P Lillicrap, Daniel Cownden, Douglas~B Tweed, and Colin~J Akerman.
\newblock Random synaptic feedback weights support error backpropagation for
  deep learning.
\newblock \emph{Nature communications}, 7\penalty0 (1):\penalty0 1--10, 2016.

\bibitem[Rumelhart et~al.(1986)Rumelhart, Hinton, and
  Williams]{rumelhart1986learning}
David~E Rumelhart, Geoffrey~E Hinton, and Ronald~J Williams.
\newblock Learning representations by back-propagating errors.
\newblock \emph{nature}, 323\penalty0 (6088):\penalty0 533--536, 1986.

\bibitem[Crick(1989)]{crick1989recent}
Francis Crick.
\newblock The recent excitement about neural networks.
\newblock \emph{Nature}, 337\penalty0 (6203):\penalty0 129--132, 1989.

\bibitem[Grossberg(1987)]{grossberg1987competitive}
Stephen Grossberg.
\newblock Competitive learning: From interactive activation to adaptive
  resonance.
\newblock \emph{Cognitive science}, 11\penalty0 (1):\penalty0 23--63, 1987.

\bibitem[N{\o}kland(2016)]{nokland2016direct}
Arild N{\o}kland.
\newblock Direct feedback alignment provides learning in deep neural networks.
\newblock In \emph{Proceedings of the 30th International Conference on Neural
  Information Processing Systems}, pages 1045--1053, 2016.

\bibitem[Launay et~al.(2020)Launay, Poli, Boniface, and
  Krzakala]{launay2020direct}
Julien Launay, Iacopo Poli, Fran{\c{c}}ois Boniface, and Florent Krzakala.
\newblock Direct feedback alignment scales to modern deep learning tasks and
  architectures.
\newblock \emph{Advances in Neural Information Processing Systems}, 33, 2020.

\bibitem[Bartunov et~al.(2018)Bartunov, Santoro, Richards, Marris, Hinton, and
  Lillicrap]{bartunov2018assessing}
Sergey Bartunov, Adam Santoro, Blake~A Richards, Luke Marris, Geoffrey~E
  Hinton, and Timothy~P Lillicrap.
\newblock Assessing the scalability of biologically-motivated deep learning
  algorithms and architectures.
\newblock In \emph{Proceedings of the 32nd International Conference on Neural
  Information Processing Systems}, pages 9390--9400, 2018.

\bibitem[Moskovitz et~al.(2018)Moskovitz, Litwin-Kumar, and
  Abbott]{moskovitz2018feedback}
Theodore~H Moskovitz, Ashok Litwin-Kumar, and LF~Abbott.
\newblock Feedback alignment in deep convolutional networks.
\newblock \emph{arXiv preprint arXiv:1812.06488}, 2018.

\bibitem[Russakovsky et~al.(2015)Russakovsky, Deng, Su, Krause, Satheesh, Ma,
  Huang, Karpathy, Khosla, Bernstein, et~al.]{russakovsky2015imagenet}
Olga Russakovsky, Jia Deng, Hao Su, Jonathan Krause, Sanjeev Satheesh, Sean Ma,
  Zhiheng Huang, Andrej Karpathy, Aditya Khosla, Michael Bernstein, et~al.
\newblock Imagenet large scale visual recognition challenge.
\newblock \emph{International journal of computer vision}, 115\penalty0
  (3):\penalty0 211--252, 2015.

\bibitem[Du et~al.(2018)Du, Hu, and Lee]{du2018algorithmic}
Simon~S Du, Wei Hu, and Jason~D Lee.
\newblock Algorithmic regularization in learning deep homogeneous models:
  Layers are automatically balanced.
\newblock \emph{arXiv preprint arXiv:1806.00900}, 2018.

\bibitem[Valavi et~al.(2020)Valavi, Liu, and Ramadge]{valavi2020revisiting}
Hossein Valavi, Sulin Liu, and Peter Ramadge.
\newblock Revisiting the landscape of matrix factorization.
\newblock In \emph{International Conference on Artificial Intelligence and
  Statistics}, pages 1629--1638. PMLR, 2020.

\bibitem[Ye and Du(2021)]{ye2021global}
Tian Ye and Simon~S Du.
\newblock Global convergence of gradient descent for asymmetric low-rank matrix
  factorization.
\newblock \emph{arXiv preprint arXiv:2106.14289}, 2021.

\bibitem[Bah et~al.(2019)Bah, Rauhut, Terstiege, and
  Westdickenberg]{bah2019learning}
Bubacarr Bah, Holger Rauhut, Ulrich Terstiege, and Michael Westdickenberg.
\newblock Learning deep linear neural networks: Riemannian gradient flows and
  convergence to global minimizers.
\newblock \emph{arXiv preprint arXiv:1910.05505}, 2019.

\bibitem[Baldi et~al.(2018)Baldi, Sadowski, and Lu]{baldi2018learning}
Pierre Baldi, Peter Sadowski, and Zhiqin Lu.
\newblock Learning in the machine: Random backpropagation and the deep learning
  channel.
\newblock \emph{Artificial intelligence}, 260:\penalty0 1--35, 2018.

\bibitem[Song et~al.(2021)Song, Xu, and Lafferty]{song2021convergence}
Ganlin Song, Ruitu Xu, and John Lafferty.
\newblock Convergence and alignment of gradient descentwith random back
  propagation weights.
\newblock \emph{arXiv preprint arXiv:2106.06044}, 2021.

\bibitem[Jacot et~al.(2018)Jacot, Gabriel, and Hongler]{jacot2018neural}
Arthur Jacot, Franck Gabriel, and Cl{\'e}ment Hongler.
\newblock Neural tangent kernel: Convergence and generalization in neural
  networks.
\newblock \emph{arXiv preprint arXiv:1806.07572}, 2018.

\bibitem[Refinetti et~al.(2021)Refinetti, d’Ascoli, Ohana, and
  Goldt]{refinetti2021align}
Maria Refinetti, St{\'e}phane d’Ascoli, Ruben Ohana, and Sebastian Goldt.
\newblock Align, then memorise: the dynamics of learning with feedback
  alignment.
\newblock In \emph{International Conference on Machine Learning}, pages
  8925--8935. PMLR, 2021.

\bibitem[Lillicrap et~al.(2020)Lillicrap, Santoro, Marris, Akerman, and
  Hinton]{lillicrap2020backpropagation}
Timothy~P Lillicrap, Adam Santoro, Luke Marris, Colin~J Akerman, and Geoffrey
  Hinton.
\newblock Backpropagation and the brain.
\newblock \emph{Nature Reviews Neuroscience}, 21\penalty0 (6):\penalty0
  335--346, 2020.

\bibitem[Akrout et~al.(2019)Akrout, Wilson, Humphreys, Lillicrap, and
  Tweed]{akrout2019deep}
Mohamed Akrout, Collin Wilson, Peter~C Humphreys, Timothy Lillicrap, and
  Douglas Tweed.
\newblock Deep learning without weight transport.
\newblock \emph{arXiv preprint arXiv:1904.05391}, 2019.

\bibitem[Bengio(2014)]{bengio2014auto}
Yoshua Bengio.
\newblock How auto-encoders could provide credit assignment in deep networks
  via target propagation.
\newblock \emph{arXiv preprint arXiv:1407.7906}, 2014.

\bibitem[Lee et~al.(2015)Lee, Zhang, Fischer, and Bengio]{lee2015difference}
Dong-Hyun Lee, Saizheng Zhang, Asja Fischer, and Yoshua Bengio.
\newblock Difference target propagation.
\newblock In \emph{Joint european conference on machine learning and knowledge
  discovery in databases}, pages 498--515. Springer, 2015.

\bibitem[Kunin et~al.(2019)Kunin, Bloom, Goeva, and Seed]{kunin2019loss}
Daniel Kunin, Jonathan Bloom, Aleksandrina Goeva, and Cotton Seed.
\newblock Loss landscapes of regularized linear autoencoders.
\newblock In \emph{International Conference on Machine Learning}, pages
  3560--3569. PMLR, 2019.

\bibitem[Kunin et~al.(2020)Kunin, Nayebi, Sagastuy-Brena, Ganguli, Bloom, and
  Yamins]{kunin2020two}
Daniel Kunin, Aran Nayebi, Javier Sagastuy-Brena, Surya Ganguli, Jonathan
  Bloom, and Daniel Yamins.
\newblock Two routes to scalable credit assignment without weight symmetry.
\newblock In \emph{International Conference on Machine Learning}, pages
  5511--5521. PMLR, 2020.

\bibitem[Xie and Seung(2003)]{xie2003equivalence}
Xiaohui Xie and H~Sebastian Seung.
\newblock Equivalence of backpropagation and contrastive hebbian learning in a
  layered network.
\newblock \emph{Neural computation}, 15\penalty0 (2):\penalty0 441--454, 2003.

\bibitem[Bengio et~al.(2017)Bengio, Mesnard, Fischer, Zhang, and
  Wu]{bengio2017stdp}
Yoshua Bengio, Thomas Mesnard, Asja Fischer, Saizheng Zhang, and Yuhuai Wu.
\newblock Stdp-compatible approximation of backpropagation in an energy-based
  model.
\newblock \emph{Neural computation}, 29\penalty0 (3):\penalty0 555--577, 2017.

\bibitem[Scellier and Bengio(2017)]{scellier2017equilibrium}
Benjamin Scellier and Yoshua Bengio.
\newblock Equilibrium propagation: Bridging the gap between energy-based models
  and backpropagation.
\newblock \emph{Frontiers in computational neuroscience}, 11:\penalty0 24,
  2017.

\bibitem[Whittington and Bogacz(2017)]{whittington2017approximation}
James~CR Whittington and Rafal Bogacz.
\newblock An approximation of the error backpropagation algorithm in a
  predictive coding network with local hebbian synaptic plasticity.
\newblock \emph{Neural computation}, 29\penalty0 (5):\penalty0 1229--1262,
  2017.

\bibitem[Guerguiev et~al.(2017)Guerguiev, Lillicrap, and
  Richards]{guerguiev2017towards}
Jordan Guerguiev, Timothy~P Lillicrap, and Blake~A Richards.
\newblock Towards deep learning with segregated dendrites.
\newblock \emph{Elife}, 6:\penalty0 e22901, 2017.

\bibitem[Sacramento et~al.(2018)Sacramento, Costa, Bengio, and
  Senn]{sacramento2018dendritic}
Jo{\~a}o Sacramento, Rui~Ponte Costa, Yoshua Bengio, and Walter Senn.
\newblock Dendritic cortical microcircuits approximate the backpropagation
  algorithm.
\newblock \emph{arXiv preprint arXiv:1810.11393}, 2018.

\bibitem[Papadimitriou et~al.(2020)Papadimitriou, Vempala, Mitropolsky,
  Collins, and Maass]{papadimitriou2020brain}
Christos~H Papadimitriou, Santosh~S Vempala, Daniel Mitropolsky, Michael
  Collins, and Wolfgang Maass.
\newblock Brain computation by assemblies of neurons.
\newblock \emph{Proceedings of the National Academy of Sciences}, 117\penalty0
  (25):\penalty0 14464--14472, 2020.

\bibitem[Dabagia et~al.(2021)Dabagia, Papadimitriou, and
  Vempala]{dabagia2021assemblies}
Max Dabagia, Christos~H. Papadimitriou, and Santosh~S. Vempala.
\newblock Assemblies of neurons can learn to classify well-separated
  distributions, 2021.

\bibitem[Blum et~al.(2020)Blum, Hopcroft, and Kannan]{blum2020foundations}
Avrim Blum, John Hopcroft, and Ravindran Kannan.
\newblock \emph{Foundations of data science}.
\newblock Cambridge University Press, 2020.

\bibitem[Gunasekar et~al.(2017)Gunasekar, Woodworth, Bhojanapalli, Neyshabur,
  and Srebro]{gunasekar2017implicit}
Suriya Gunasekar, Blake~E Woodworth, Srinadh Bhojanapalli, Behnam Neyshabur,
  and Nati Srebro.
\newblock Implicit regularization in matrix factorization.
\newblock \emph{Advances in Neural Information Processing Systems}, 30, 2017.

\bibitem[Li et~al.(2018)Li, Ma, and Zhang]{li2018algorithmic}
Yuanzhi Li, Tengyu Ma, and Hongyang Zhang.
\newblock Algorithmic regularization in over-parameterized matrix sensing and
  neural networks with quadratic activations.
\newblock In \emph{Conference On Learning Theory}, pages 2--47. PMLR, 2018.

\bibitem[Wainwright(2015)]{wainwright2015basic}
Martin Wainwright.
\newblock Basic tail and concentration bounds.
\newblock \emph{URl: https://www. stat. berkeley.
  edu/.../Chap2\_TailBounds\_Jan22\_2015. pdf (visited on 12/31/2017)}, 2015.

\end{thebibliography}

\newpage
\appendix

\section{PROOF OF THEOREM \ref{thm:convergence-mat-fac}}
\label{sec:convg_proof}
Let $Y_{n \times m}$ be the matrix we want to factorize and $\hat{Y}_{n \times k} = Z_{n \times r}W_{r \times m}$. Let $C_{r \times m}$ be the feedback matrix.  Feedback alignment (FA*) updates $Z$ and $W$ as follows:
\begin{align*}
    \frac{dZ}{dt} &= (Y - \hat{Y})C^{T}\\
    W &= (Z^TZ)^{-1}Z^TY.
\end{align*}

\paragraph{Notation.} We will use $M(t)$ to denote  matrix $M$ at time $t$. However, we will not show time $t$ when it is clear from context. For a symmetric matrix $M$, we use $\lambda_i(M)$  to denote the $i^{\text{th}}$ largest eigenvalue of $M$. For any vector $v$, we use $\norm{v}$ to denote the $\ell_2$ norm of $v$. For any matrix $M$, we use $M^i$ to denote its $i^{\text{th}}$ row.  \\

We use $\mathds{1}[.]$ to denote the indicator function which is equal to $1$ if the condition inside the square brackets is true and $0$ otherwise.\\

We use $A$ to denote the alignment matrix $\left(\left(Z^TZ\right)^{-1}CW^T + WC^T\left(Z^TZ\right)^{-1}\right)$ and $R$ to denote the residual $(Y-\hat{Y})C^T$.\\

 For a non-zero vector $x(t)$, we use $x_{\leq k}(t)$ to denote vector $x(t)$ if $\frac{x(t)^T  A(t) x(t)}{\norm{x(t)}^2} \leq k$, and zero vector otherwise. Similarly, we use $x_{> k}(t)$ to denote vector $x(t)$ if $\frac{x(t)^T A(t) x(t)}{\norm{x(t)}^2} > k$, and zero vector otherwise.  We define $x_{\leq k}(t)$ and  $x_{> k}(t)$ to be equal to $x(t)$ when $x(t)$ is a zero vector.   For a matrix $M(t)$ with $i^{\text{th}}$ row $M^i(t)$, $M_{\leq k}(t)$ denotes the matrix whose $i^{th}$ row equals $M^i_{\leq k}(t)$ for all $i$. Similarly, we define $M_{> k}(t)$ to be the matrix whose $i^{th}$ row equals $M^i_{> k}(t)$ for all $i$.
Note that we can write $x(t) = x_{\leq k}(t) + x_{> k}(t)$ and $M(t) = M_{\leq k}(t) + M_{> k}(t)$.\\

We will use the loss function $\ell(t) = \norm{(Y-\hat{Y}(t))C^T \left(Z(t)^TZ(t)\right)^{-1/2}}_F^2$.

\thmconvg*

\paragraph{Note on the bound on T.}
The bound on $T$ depends on the condition number of $Z(0)$ and the top singular values of $Y$, $C$ and $Z(0)$. To understand this bound, suppose we set $\epsilon  = \epsilon_1 \norm{Y}_F^2 \norm{C}_F^2$ (for some $\epsilon_1 > 0$), so that $\epsilon$ has same scale as $\norm{(Y-\hat{Y})C^T}_F^2$. Then the bound on $T$ is given by 
\[
\frac{24}{\epsilon_1} \left( \frac{\sigma_1\left(Y\right)  \sigma_1\left(C\right)  \sigma_1\left(Z(0)\right)^6  \sqrt{r \ min(m, n)}}{\norm{Y}_F^2 \norm{C}_F^2  \sigma_r\left(Z(0)\right)^5 } \right).
\]
This bound decreases if we scale up $Y$ and $C$ and scale down $Z(0)$. This is because $\frac{dZ}{dt} = (Y - \hat{Y})C^T$. So the relative magnitude of update to $Z$ increases if we scale up $Y$ and $C$ and scale down $Z(0)$. The bound obtained would not be dependent on the scales of $Z(0)$, $C$, and $Y$, if we choose a scale independent  update given by $\frac{dZ}{dt} = \frac{(Y - \hat{Y})C^T \norm{Z}_F}{\norm{Y}_F \norm{C}_F}$.

\subsection{Proof Overview}
From Fact \ref{fact:Z_derivative}, we know that $Z^TZ$ does not change with time. While our formal proof holds for arbitrarily initialized $Z$ (with full column rank), in this proof sketch, we will assume that $Z$ is initialized such that $Z^TZ = I$.
In Lemma \ref{lem:dldt},  we show that \begin{align*}
\frac{d\ell}{dt} &= - Tr(RAR^T)\\
&= - \sum_{i=1}^n {R^i}^T A R^i
\end{align*}
where $R$ and $A$ are residual and alignment matrices respectively, as defined above, and $R^i$ is the $i^\text{th}$ row (viewed as a column vector) of $R$ . This implies that $\frac{d\ell}{dt} \leq 0$ if $A$ is PSD. In Lemma \ref{lem:dAdt}, we show that
\begin{align*}
\frac{dA}{dt} &= 2 (Z^TZ)^{-1} R^T R (Z^TZ)^{-1}\\
&= 2 R^T R.
\end{align*}
This implies that $x^T A x$ never decreases with time for all $x$. In this sense, $A$ becomes more PSD with time. However, this is not sufficient to claim that $A$ will become PSD eventually as there can exist $x$ for which $\frac{d  x^TAx}{dt} = 0$ at all times. Therefore, a more careful analysis of the directions in which $A$ becomes PSD ($x$ such that $x^T A x \geq 0$) is needed.

Whenever $\frac{d\ell}{dt}$ is positive, there is some row $R^i$ of $R$ for which ${R^i}^T A R^i$ is  negative. Also, note that $\frac{d \ x^TAx}{dt}  = 2\norm{R x}^2 \geq 2 ({R^i}^Tx)^2 > 0$, for $x$ such that ${R_i}^T x \neq 0$. So whenever a direction $R^i$ causes the loss to increase sufficiently,
$x^TAx$ also increases sufficiently for all $x$ close to $R^i$. And when $x^T A x > 0$ for all $x$, the loss can not increase anymore. That is, whenever some direction causes the loss to increase, $A$ becomes ``more PSD'' for all directions close to this direction, and when $A$ is PSD for all directions, the loss can not increase anymore. Using this idea, we bound the total increase in loss possible $\left(\int_0^T \frac{d\ell}{dt} \mathds{1} \left[\frac{d\ell}{dt} \geq 0\right] \ dt \right)$ in Lemma \ref{lem:loss-ub}. Using a similar idea, in lemma \ref{lem:loss-stable-ub}, we upper bound the total time for which the following holds: the loss is large and the loss is either increasing or decreasing slowly. At any other time, if the loss is large, it has to decrease sharply. Combining these two lemmas, in Lemma \ref{lem:FA-final-with-delta}, we show a bound on time by which the loss goes below $\epsilon$ at least once. In Lemma \ref{lem:FA-final}, we optimize the bound proved in Lemma \ref{lem:FA-final-with-delta}. In Lemma \ref{lem:FA-loss-final}, we translate the guarantee on $\ell(t)$ to the desired guarantee on $\norm{(Y-\hat{Y})C^T}_F^2$. The proves the first part of the theorem.

The results in Lemma \ref{lem:loss-ub} and \ref{lem:loss-stable-ub} crucially rely on Lemma \ref{lem:R-ub} which gives an upper bound on 
\[
\int_0^T \norm{R_{\leq k}(t)}_F^2 \ dt.
\]
for all $k \geq 0$ and for all $T$. This lemma helps formalize the intuition discussed above. An upper bound on $\int_0^T \norm{R_{\leq 0}(t)}_F^2 \ dt$ lets us upper bound the total increase in loss (Lemma \ref{lem:loss-stable-ub}). An upper bound on $\int_0^T \norm{R_{\leq k}(t)}_F^2 \ dt$ for positive $k$ lets us bound the total time for which the loss is large and is either increasing or decreasing slowly (Lemma \ref{lem:R-ub}).

Note that whenever $\norm{R_{\leq k}(t)x}_2^2$ is large for any $x$, $x^T A(t) x$ increases by a large amount. Also, by definition,  for any row $R^i_{\leq k}(T)$ (viewed as a column vector) of $R_{\leq k}(T)$, $R^i_{\leq k}(T)^ T A(T) R^i_{\leq k}(T)$ can not be too large, that is, $R^i_{\leq k}(T)^T A(T) R^i_{\leq k}(T) \leq k \norm{R^i_{\leq k}(T)}_2^2$. We also know that $R^i_{\leq k}(T)^T A(0) R^i_{\leq k}(T) \geq \lambda_r( A(0)) \norm{R^i_{\leq k}(T)}_2^2$.  Therefore, for any row $R^i_{\leq k}(T)$,
\begin{align*}
    \int_0 ^ T  \norm{R_{\leq k}(t) R^i_{\leq k}(T)}_2^2 dt &\leq \int_0 ^ T  \norm{R(t) R^i_{\leq k}(T)}_2^2 dt\\
    &= \frac{1}{2} \int_{0}^{T} R^i_{\leq k}(T)^T \frac{dA(t)}{dt} R^i_{\leq k}(T) dt\\
    &= \frac{1}{2} \left( R^i_{\leq k}(T)^T {A(T)} R^i_{\leq k}(T) - R^i_{\leq k}(T)^T {A(0)} R^i_{\leq k}(T) \right)\\
    &\leq \frac{1}{2} (k - \lambda_r(A(0))) \norm{R^i_{\leq k}(T)}_2^2
\end{align*}
In other words, this bounds the inner product of rows of $R_{\leq k}(T)$ with the rows of $R_{\leq k}(t)$ for $t \leq T$.
This fact lets us upper bound the integral of sum of squared norms of these rows. We prove such a bound for general vectors in Lemma \ref{lem:norm-bad-vec}, and use it to prove Lemma \ref{lem:R-ub}.  

We prove convergence of $\ell(t)$ to zero (second part of the theorem) in Lemma \ref{lem:convergence-last-iterate}, where we use the following argument. In Lemma \ref{lem:FA-final-with-delta}, we show that for all $\epsilon > 0$ and all $T \geq 0$, there exists $t \geq T$ such that $\ell(t) \leq \epsilon$. Now, if the loss doesn't converge to $0$, then there must exist some $\epsilon_1 > 0$, such that for all $T \geq 0$, there exists some $t \geq T$ satisfying $\ell(t) > \epsilon_1$. Using these two arguments, we can generate an infinite increasing sequence $T_1, T_1', T_2, T_2', \cdots$ such that $\ell(T_i) \leq \epsilon_1/2$ and $\ell(T_i') > \epsilon_1$ for all $i$. Thus we can get infinitely many disjoint intervals $[T_i, T_i']$ on which the loss increases by at least $\epsilon_1/2$, implying that the total increase in loss is unbounded which contradicts Lemma \ref{lem:loss-ub}, where we show $\int_0^T \frac{d\ell}{dt} \mathds{1} \left[\frac{d\ell}{dt} \geq 0\right]  dt $ is bounded for all $T$. Therefore, the loss $\ell(t)$ must converge to $0$. 

 
 \subsection{Proof}
 The following two facts will be useful for the proof.

\begin{fact}\label{fact:Z}
\[
Z^T(Y-\hat{Y})=0.
\]
\end{fact}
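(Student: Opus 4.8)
\textbf{Proof proposal for Fact \ref{fact:Z}.}
The plan is to simply substitute the FA* closed form for $W$ into the expression $Z^T(Y-\hat Y)$ and observe that the terms telescope. Recall that in the FA* dynamics we always have $\hat Y = ZW$ with $W = (Z^TZ)^{-1}Z^TY$, and since $Z(0)$ is full column rank and (by Fact \ref{fact:Z_derivative}, to be established) $Z^TZ$ is constant in time, $Z(t)$ remains full column rank for all $t$, so $(Z^TZ)^{-1}$ is well defined throughout. Thus
\[
Z^T(Y-\hat Y) = Z^TY - Z^TZW = Z^TY - Z^TZ (Z^TZ)^{-1} Z^TY = Z^TY - Z^TY = 0.
\]

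There is essentially no obstacle here: the only thing to be careful about is that $(Z^TZ)^{-1}$ exists, which follows from full column rank of $Z(0)$ together with the invariance of $Z^TZ$. One could equivalently note that $W = (Z^TZ)^{-1}Z^TY$ is by definition the minimizer of $\|ZW - Y\|_F^2$ over $W$, hence satisfies the normal equations $Z^TZW = Z^TY$, i.e. $Z^T(ZW - Y) = 0$, which is exactly the claim. Either phrasing gives the result in one line, and it will be used repeatedly in the sequel (e.g. when differentiating $\ell$ and $A$) to kill cross terms involving $Z^T(Y-\hat Y)$.
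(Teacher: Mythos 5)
Your proof is correct and is essentially the paper's argument: the paper justifies the fact in one line by the same substitution $\hat Y = ZW = Z(Z^TZ)^{-1}Z^TY$, and your normal-equations phrasing is just a restatement of that. One small caveat worth noting: you invoke Fact~\ref{fact:Z_derivative} to keep $Z$ full column rank, but the paper derives Fact~\ref{fact:Z_derivative} \emph{from} Fact~\ref{fact:Z}; this is not a real circularity, since at any fixed $t$ with $Z(t)$ full column rank the identity is pure algebra, and full column rank propagates forward by a short continuity argument once the two facts are combined, but it would be cleaner to state Fact~\ref{fact:Z} as holding whenever $Z(t)$ is full column rank and defer the rank-preservation claim.
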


\begin{fact}\label{fact:Z_derivative}
\[
\frac{d(Z^TZ)}{dt}=0.
\]
\end{fact}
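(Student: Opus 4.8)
The plan is to establish this by a direct differentiation of $Z^TZ$ using the product rule together with the FA* update for $\tfrac{dZ}{dt}$, and then to reduce the vanishing of the resulting expression to Fact~\ref{fact:Z}. Concretely, the first step is to write
\[
\frac{d(Z^TZ)}{dt} = \left(\frac{dZ}{dt}\right)^{\!T} Z + Z^T \frac{dZ}{dt},
\]
and then substitute the FA* update $\frac{dZ}{dt} = (Y - \hat{Y})C^T$ to obtain
\[
\frac{d(Z^TZ)}{dt} = C\,(Y - \hat{Y})^T Z + Z^T(Y - \hat{Y})\,C^T.
\]
The second summand is $\bigl(Z^T(Y-\hat{Y})\bigr)C^T$ and the first is its transpose, so it suffices to show $Z^T(Y - \hat{Y}) = 0$; this is exactly Fact~\ref{fact:Z}.

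The second step is to record why Fact~\ref{fact:Z} holds under the FA* dynamics. Since $\hat{Y} = ZW$ with $W = (Z^TZ)^{-1}Z^TY$, we get $Z^T\hat{Y} = Z^TZ\,W = Z^TZ(Z^TZ)^{-1}Z^TY = Z^TY$, hence $Z^T(Y - \hat{Y}) = Z^TY - Z^T\hat{Y} = 0$. Plugging this into the display above makes both terms vanish, giving $\frac{d(Z^TZ)}{dt} = 0$, as claimed.

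There is essentially no obstacle here: the statement is a two-line matrix-calculus identity once the FA* update and the optimality formula for $W$ are in hand. The only point worth a word of care is the well-definedness of $W = (Z^TZ)^{-1}Z^TY$ along the trajectory, i.e.\ that $Z(t)$ stays full column rank (it does so at $t=0$ by hypothesis); but this is a downstream consequence of the very fact being proved — constancy of $Z^TZ$ keeps $\sigma_r(Z(t))$ equal to $\sigma_r(Z(0))>0$ — so there is no circularity in deriving the pointwise derivative identity, which only requires $Z(t)$ to be full column rank at the instant $t$ under consideration.
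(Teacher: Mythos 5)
Your proof is correct and follows exactly the paper's route: differentiate $Z^TZ$ via the product rule, substitute $\frac{dZ}{dt}=(Y-\hat Y)C^T$, and kill both terms with Fact~\ref{fact:Z}, which you justify the same way the paper does (via $W=(Z^TZ)^{-1}Z^TY$). The added remark on full column rank along the trajectory is a reasonable bonus but not needed beyond what the paper already assumes.
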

Fact \ref{fact:Z} follows since $\hat{Y} = ZW = Z(Z^TZ)^{-1}Z^T Y$. 
Fact \ref{fact:Z_derivative} follows since $\frac{d(Z^TZ)}{dt} = Z^T(Y - \hat{Y})C^T + (Z^T(Y - \hat{Y})C^T)^T  = 0$. 

Fact \ref{fact:Z_derivative} says that $Z^TZ$ does not change with time. While our result holds for arbitrarily initialized $Z$ (with full column rank), it might be helpful for the reader to assume that $Z$ is initialized such that $Z^TZ = I$. 

Now, we evaluate the expression for $\frac{d\ell}{dt}$.
\begin{lemma}
\label{lem:dldt}
\[
\frac{d\ell}{dt} = -Tr\left((Y-\hat{Y})C^T \left(\left(Z^TZ\right)^{-1}CW^T + WC^T\left(Z^TZ\right)^{-1}\right)C(Y - \hat{Y})^T\right).
\]
\end{lemma}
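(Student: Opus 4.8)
The plan is to differentiate $\ell(t)=\norm{(Y-\hat{Y})C^T(Z^TZ)^{-1/2}}_F^2$ directly, exploiting the fact that $Z^TZ$ — hence also $(Z^TZ)^{-1}$ and $(Z^TZ)^{-1/2}$ — is frozen along the FA* trajectory (Fact \ref{fact:Z_derivative}). First I would rewrite the loss using $\norm{B}_F^2=Tr(BB^T)$ and the symmetry of $(Z^TZ)^{-1/2}$ as
\[
\ell = Tr\left((Y-\hat{Y})C^T(Z^TZ)^{-1}C(Y-\hat{Y})^T\right),
\]
so only the factor $\hat{Y}$ carries time dependence. The product rule together with invariance of the trace under transposition (the two cross terms are transposes of each other, using that $(Z^TZ)^{-1}$ is symmetric) gives
\[
\frac{d\ell}{dt} = -2\,Tr\left(\frac{d\hat{Y}}{dt}\,C^T(Z^TZ)^{-1}C(Y-\hat{Y})^T\right).
\]

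Next I would compute $\frac{d\hat{Y}}{dt}$ from $\hat{Y}=Z(Z^TZ)^{-1}Z^TY$ with $(Z^TZ)^{-1}$ constant and $\frac{dZ}{dt}=(Y-\hat{Y})C^T$, obtaining
\[
\frac{d\hat{Y}}{dt} = (Y-\hat{Y})C^T(Z^TZ)^{-1}Z^TY + Z(Z^TZ)^{-1}C(Y-\hat{Y})^TY = (Y-\hat{Y})C^TW + Z(Z^TZ)^{-1}C(Y-\hat{Y})^TY,
\]
using $(Z^TZ)^{-1}Z^TY=W$. Two applications of Fact \ref{fact:Z} ($Z^T(Y-\hat{Y})=0$) then do the rest. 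First, $(Y-\hat{Y})^T\hat{Y}=(Y-\hat{Y})^TZW=0$, so $(Y-\hat{Y})^TY=(Y-\hat{Y})^T(Y-\hat{Y})=:S$. Second, substituting $\frac{d\hat{Y}}{dt}$ into the trace above, the contribution of $Z(Z^TZ)^{-1}CS$ is $-2\,Tr\left((Y-\hat{Y})^TZ(Z^TZ)^{-1}CSC^T(Z^TZ)^{-1}C\right)=0$ because $(Y-\hat{Y})^TZ=0$. Hence
\[
\frac{d\ell}{dt} = -2\,Tr\left((Y-\hat{Y})C^TW\,C^T(Z^TZ)^{-1}C(Y-\hat{Y})^T\right).
\]

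It remains to match this with the claimed expression. Expanding the RHS of the lemma into its two summands and transposing one of them shows the two summands have equal trace, so the claimed RHS equals $-2\,Tr\left((Y-\hat{Y})C^T(Z^TZ)^{-1}CW^TC(Y-\hat{Y})^T\right)$. So I must check
\[
Tr\left((Y-\hat{Y})C^TW\,C^T(Z^TZ)^{-1}C(Y-\hat{Y})^T\right) = Tr\left((Y-\hat{Y})C^T(Z^TZ)^{-1}CW^TC(Y-\hat{Y})^T\right).
\]
Cycling the trace and writing $N:=CSC^T$ (symmetric) and $M:=(Z^TZ)^{-1}$ (symmetric), the left side is $Tr(NWC^TM)$ and the right side is $Tr(NMCW^T)$; transposing the first and using symmetry of $M$ and $N$ turns $Tr(NWC^TM)$ into $Tr(MCW^TN)=Tr(NMCW^T)$, which closes the argument.

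The only real obstacle here is bookkeeping — tracking transposes and cyclic permutations through these long traces, and spotting the two spots where Fact \ref{fact:Z} is used: once to replace $Y$ by $Y-\hat{Y}$ inside $(Y-\hat{Y})^TY$, and once to kill the term arising from differentiating the second $Z$ in $Z(Z^TZ)^{-1}Z^TY$. Everything else is routine matrix calculus. One could alternatively avoid computing $\frac{d\hat{Y}}{dt}$ explicitly by writing $\hat{Y}=PY$ with $P=Z(Z^TZ)^{-1}Z^T$ the orthogonal projector onto $\mathrm{col}(Z)$ and differentiating $P$, but the direct route above seems cleanest.
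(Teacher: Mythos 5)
Your proof is correct and follows essentially the same route as the paper's: a direct matrix-calculus differentiation that exploits the constancy of $Z^TZ$, invokes Fact \ref{fact:Z} twice (once to replace $(Y-\hat{Y})^TY$ by $(Y-\hat{Y})^T(Y-\hat{Y})$, once to annihilate the term coming from the second factor of $Z$), and finishes with transpose/cyclic trace identities to reach the symmetrized form. The only difference is bookkeeping — you route the chain rule through $\frac{d\hat{Y}}{dt}$ whereas the paper routes it through $\frac{d\ell}{dZ}$ — which changes nothing of substance.
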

\begin{proof}
We can write 
\begin{align}
\label{eq:1-1}
    \frac{d\ell}{dt} = Tr\left(\left(\frac{dZ}{dt}\right)^{T}\left(\frac{d\ell}{dZ}\right)\right).
\end{align}
where
\begin{align*}
\frac{d\ell}{dZ} = -2(Y - \hat{Y})C^T(Z^TZ)^{-1}CW^T -2Z(Z^TZ)^{-1}C(Y-\hat{Y})^T(Y - \hat{Y})C^T(Z^TZ)^{-1}.
\end{align*}
Here, we used $Z^T(Y - \hat{Y}) = 0$ (see Fact \ref{fact:Z}) to simplify the expression.
Substituting this in equation \ref{eq:1-1}, we get
\begin{align*}
    \frac{d\ell}{dt} = Tr\left(\left(C(Y - \hat{Y})^T \right)\left(-2(Y - \hat{Y})C^T(Z^TZ)^{-1}CW^T -2Z(Z^TZ)^{-1}C(Y-\hat{Y})^T(Y - \hat{Y})C^T(Z^TZ)^{-1} \right)\right).
\end{align*}
Again using $Z^T(Y - \hat{Y}) = 0$, we get
\begin{align*}
    \frac{d\ell}{dt} &= -2Tr\left(C(Y - \hat{Y})^T  (Y - \hat{Y})C^T \left((Z^TZ)^{-1}CW^T\right)
    \right)
    \end{align*}
    Using the identities $Tr(MN) = Tr(NM)$ and $Tr(M^T) = Tr(M)$, we get
    \begin{align*}
    \frac{d\ell}{dt}
    &=-2Tr\left(  (Y - \hat{Y})C^T \left((Z^TZ)^{-1}CW^T\right) C(Y - \hat{Y})^T \right)\\
    &= -Tr\left((Y-\hat{Y})C^T \left(\left(Z^TZ\right)^{-1}CW^T + WC^T\left(Z^TZ\right)^{-1}\right)C(Y - \hat{Y})^T\right).
\end{align*}
\end{proof}

Recall that the alignment matrix is $A = \left(\left(Z^TZ\right)^{-1}CW^T + WC^T\left(Z^TZ\right)^{-1}\right)$ and the residual is $R = (Y-\hat{Y})C^T$.
Next, we show how $A$ changes with time.

\begin{lemma}
\label{lem:dAdt}
\begin{align*}
\frac{dA}{dt} = 2 (Z^TZ)^{-1} R^T R (Z^TZ)^{-1}
\end{align*}
\end{lemma}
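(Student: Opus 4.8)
The plan is to compute $\frac{dA}{dt}$ directly by differentiating $A = (Z^TZ)^{-1}CW^T + WC^T(Z^TZ)^{-1}$ term by term, and then simplify using the two facts already available: Fact \ref{fact:Z_derivative}, which tells us $\frac{d(Z^TZ)}{dt} = 0$ (so $(Z^TZ)^{-1}$ is constant in time and its derivative vanishes), and Fact \ref{fact:Z}, which gives $Z^T(Y-\hat{Y}) = 0$. Since $(Z^TZ)^{-1}$ is time-independent, the only time-dependent factor in each summand is $W^T$ (resp. $W$), because $C$ is fixed. So $\frac{dA}{dt} = (Z^TZ)^{-1}C\frac{dW^T}{dt} + \frac{dW}{dt}C^T(Z^TZ)^{-1}$, and everything reduces to finding $\frac{dW}{dt}$.

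Next I would differentiate $W = (Z^TZ)^{-1}Z^TY$. Again using Fact \ref{fact:Z_derivative}, the factor $(Z^TZ)^{-1}$ is constant, so $\frac{dW}{dt} = (Z^TZ)^{-1}\frac{dZ^T}{dt}Y = (Z^TZ)^{-1}\left(\frac{dZ}{dt}\right)^T Y = (Z^TZ)^{-1}\big((Y-\hat{Y})C^T\big)^T Y = (Z^TZ)^{-1}C(Y-\hat{Y})^T Y$, where in the last step I substituted the FA* update $\frac{dZ}{dt} = (Y-\hat{Y})C^T$. Now I want to replace $(Y-\hat{Y})^T Y$ by $(Y-\hat{Y})^T(Y-\hat{Y}) = R'$-like terms: since $\hat{Y} = ZW$ and $Z^T(Y-\hat{Y}) = 0$ by Fact \ref{fact:Z}, we get $\hat{Y}^T(Y-\hat{Y}) = W^TZ^T(Y-\hat{Y}) = 0$, hence $(Y-\hat{Y})^T Y = (Y-\hat{Y})^T(Y-\hat{Y}) + (Y-\hat{Y})^T\hat{Y} = (Y-\hat{Y})^T(Y-\hat{Y})$. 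Therefore $\frac{dW}{dt} = (Z^TZ)^{-1}C(Y-\hat{Y})^T(Y-\hat{Y})$.

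Plugging this back: $\frac{dA}{dt} = (Z^TZ)^{-1}C\,\big[(Y-\hat{Y})^T(Y-\hat{Y})\big]^T C^T(Z^TZ)^{-1} + (Z^TZ)^{-1}C(Y-\hat{Y})^T(Y-\hat{Y})C^T(Z^TZ)^{-1}$. Since $(Y-\hat{Y})^T(Y-\hat{Y})$ is symmetric, both terms are equal, giving $\frac{dA}{dt} = 2(Z^TZ)^{-1}C(Y-\hat{Y})^T(Y-\hat{Y})C^T(Z^TZ)^{-1} = 2(Z^TZ)^{-1}R^TR(Z^TZ)^{-1}$, recalling $R = (Y-\hat{Y})C^T$ so $R^TR = C(Y-\hat{Y})^T(Y-\hat{Y})C^T$. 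This matches the claim.

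The calculation is essentially routine matrix calculus; the only mild subtlety — and the one place I'd be careful — is the cancellation $(Y-\hat{Y})^T Y = (Y-\hat{Y})^T(Y-\hat{Y})$, which relies precisely on the optimality of $W$ (Fact \ref{fact:Z}) and would fail for plain FA. I should also double-check that I have not dropped a term from differentiating $(Z^TZ)^{-1}$ or $Z^T$ inside $W$; the former vanishes by Fact \ref{fact:Z_derivative}, and in $\frac{dW}{dt}$ the derivative hits only the explicit $Z^T$ factor, not the $(Z^TZ)^{-1}$ prefactor. No other obstacles are expected.
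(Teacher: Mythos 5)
Your proposal is correct and follows essentially the same route as the paper: both use Fact \ref{fact:Z_derivative} to freeze $(Z^TZ)^{-1}$, compute $\frac{dW}{dt} = (Z^TZ)^{-1}C(Y-\hat{Y})^TY$, and invoke Fact \ref{fact:Z} to replace $(Y-\hat{Y})^TY$ with $(Y-\hat{Y})^T(Y-\hat{Y})$ before recombining the two symmetric terms. No gaps.
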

\begin{proof}
From Fact \ref{fact:Z_derivative}, we know that $\frac{d(Z^TZ)}{dt} = 0$.
This implies
\begin{align}
\label{eq:dAdt1}
\frac{dA}{dt} = \left(\left(Z^TZ\right)^{-1}C\frac{dW}{dt}^T + \frac{dW}{dt}C^T\left(Z^TZ\right)^{-1}\right)
\end{align}
Here, $W = (Z^TZ)^{-1}Z^TY$. Again, using $\frac{d(Z^TZ)}{dt} = 0$ and $\frac{dZ}{dt} = (Y-\hat{Y})C^T$, we get
\[
\frac{dW}{dt} = (Z^TZ)^{-1}C(Y-\hat{Y})^TY.
\]
From Fact \ref{fact:Z}, we know that $Z^T(Y - \hat{Y}) = 0$ which implies $(Y-\hat{Y})^T\hat{Y} = 0$. Using this, we get
\[
\frac{dW}{dt} = (Z^TZ)^{-1}C(Y-\hat{Y})^T(Y-\hat{Y}).
\]
Substituting this in Equation \ref{eq:dAdt1}, we get 
\begin{align*}
\frac{dA}{dt} 
&= 2 (Z^TZ)^{-1}  C(Y-\hat{Y})^T(Y-\hat{Y})C^T (Z^TZ)^{-1}\\
&= 2 (Z^TZ)^{-1} R^T R (Z^TZ)^{-1}.
\end{align*}

\end{proof}

The lemma below essentially says the following: suppose we observe $n$ vectors $v_1(t), v_2(t) \cdots v_n(t)$ at each time $t$ , and let the vectors observed at any time $T$ have small inner product with all vectors observed before time $T$ ($\sum_{i=1}^n \int_0^{T} \langle v_k(T), v_i(t) \rangle^2 dt$ is small),for all $T$. Then $\sum_{i=1}^n \ \int_0^{T} \norm{v_i(t)}^2 dt$ can not be too large, for all $T$.

\begin{lemma}
\label{lem:norm-bad-vec}
Let $v_i(t):\R \rightarrow \R^r$ for $i \in \{1, 2 \cdots, n\}$ such that 
\begin{equation}
\label{eq:cond_inner_prod}
\sum_{i=1}^n \int_0^{T} \langle v_k(T), v_i(t) \rangle^2 dt \leq c \norm{v_k(T)}^2
\end{equation}
for all $T, k$ and for some constant $c \geq 0$. Then 
\begin{equation}
\sum_{i=1}^n \ \int_0^{T} \norm{v_i(t)}^2 dt \leq 2  r c,
\end{equation}
for all $T$ (assuming $\norm{v_i(t)}^2$ is integrable).
\end{lemma}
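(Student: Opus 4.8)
The plan is to compress the whole hypothesis into a statement about a single $r\times r$ positive semidefinite matrix and then run a short potential argument. Define
\[
G(T)\;=\;\int_0^T \sum_{i=1}^n v_i(t)\,v_i(t)^T\,dt,
\]
which is symmetric positive semidefinite, satisfies $G(0)=0$, and has $Tr(G(T))=\sum_{i=1}^n\int_0^T\norm{v_i(t)}^2\,dt$ — exactly the quantity we want to bound. Since $\sum_{i=1}^n\langle v_k(T),v_i(t)\rangle^2 = v_k(T)^T\big(\sum_{i} v_i(t)v_i(t)^T\big)v_k(T)$, integrating over $t\in[0,T]$ shows that hypothesis (\ref{eq:cond_inner_prod}) is equivalent to
\[
v_k(T)^T G(T)\, v_k(T)\;\le\;c\,\norm{v_k(T)}^2\qquad\text{for all }k\text{ and all }T .
\]

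Next I would differentiate the potential $Tr(G(T)^2)$. Writing $G=G(T)$ and $G'=dG/dT=\sum_{i} v_i(T)v_i(T)^T$ (valid for a.e.\ $T$ by Lebesgue differentiation, the entries of the integrand being integrable since $|(v_i)_a(v_i)_b|\le\norm{v_i}^2$), cyclicity of the trace gives $\tfrac{d}{dT}Tr(G^2)=2\,Tr(GG')=2\sum_{i=1}^n v_i(T)^T G(T) v_i(T)$. Applying the reformulated hypothesis with $k=i$ bounds this by $2c\sum_i\norm{v_i(T)}^2 = 2c\,\tfrac{d}{dT}Tr(G(T))$. Hence the absolutely continuous function $\Psi(T):=Tr(G(T)^2)-2c\,Tr(G(T))$ has $\Psi'\le 0$ a.e.\ and $\Psi(0)=0$, so $\Psi(T)\le 0$, i.e.\ $Tr(G(T)^2)\le 2c\,Tr(G(T))$ for all $T\ge 0$.

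Finally I would pass from this quadratic trace inequality to the linear bound via Cauchy--Schwarz on the eigenvalues: if $\lambda_1,\dots,\lambda_r\ge 0$ are the eigenvalues of $G(T)$, then $Tr(G(T))^2=(\sum_j\lambda_j)^2\le r\sum_j\lambda_j^2 = r\,Tr(G(T)^2)\le 2rc\,Tr(G(T))$. Dividing by $Tr(G(T))$ (the claim being trivial when it vanishes) yields $Tr(G(T))\le 2rc$, which is exactly $\sum_{i=1}^n\int_0^T\norm{v_i(t)}^2\,dt\le 2rc$.

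I do not expect a serious obstacle here. The only point requiring a bit of care is the measure-theoretic bookkeeping in the differentiation step — $G(T)$ is only absolutely continuous, not $C^1$, so the derivative identities hold a.e.\ and must be integrated back up via the fact that an absolutely continuous function with a.e.\ nonpositive derivative is nonincreasing. The one genuine idea is the choice of potential: because the hypothesis controls $G(T)$ itself (a ``backward-looking'' Gram matrix) rather than its inverse, $Tr(G^2)$ is the right monotone quantity, whereas the more familiar $\log\det$-type potentials, which differentiate against $G^{-1}$, would not plug into the hypothesis.
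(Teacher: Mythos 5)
Your proposal is correct and is essentially the paper's own argument: the paper defines the same Gram matrix $L_T=\int_0^T\sum_i v_i(t)v_i(t)^T\,dt$, bounds $Tr(L_T^2)\le 2c\,Tr(L_T)$ by symmetrizing the double integral $\int_0^T\int_0^T\langle v_i(t_1),v_j(t_2)\rangle^2$ over the two triangles (the integrated form of your differential potential argument), and closes with the same Cauchy--Schwarz inequality on the eigenvalues. The only cosmetic difference is that you derive the quadratic trace bound by differentiating $Tr(G^2)-2c\,Tr(G)$ rather than by the explicit two-triangle decomposition.
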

\begin{proof}
Let $L_T = \sum_{i=1}^n \ \int_0^{T} v_i(t)v_i(t)^T dt$. Sum of eigenvalues of $L$ is given by 
\begin{equation}
\label{eq:norm-bad-vec_eig-bound}
    \sum_{i=1}^r \lambda_i(L_T) 
    = \sum_{i=1}^n \ \int_0^{T} \norm{v_i(t)}^2 dt.
\end{equation}
Here, we used $\sum_{i=1}^r \lambda_i(L_T)  = Tr(L_T)$. Using $\sum_{i=1}^r \lambda_i(L_T)^2  = Tr(L_TL_T^T)$, we get
\begin{equation*}
\begin{split}
    \sum_{i=1}^r \lambda_i(L_T)^2 
    &= \sum_{i=1}^n \sum_{j=1}^n \ \int_0^{T} \int_0^{T} \langle v_i(t_1), v_j(t_2) \rangle^2 \ dt_1 \ dt_2\\
    &= \sum_{i=1}^n \sum_{j=1}^n  \ \int_0^{T}  \int_0^{t_2} \langle v_i(t_1), v_j(t_2) \rangle^2 \ dt_1 \ dt_2 + \sum_{i=1}^n \sum_{j=1}^n  \ \int_0^{T}  \int_0^{t_1} \langle v_i(t_1), v_j(t_2) \rangle^2 \ dt_2 \ dt_1\\
    &= \sum_{j=1}^n   \ \int_0^{T} \sum_{i=1}^n \int_0^{t_2} \langle v_i(t_1), v_j(t_2) \rangle^2 \ dt_1 \ dt_2 +  \sum_{i=1}^n  \ \int_0^{T} \sum_{j=1}^n  \int_0^{t_1} \langle v_i(t_1), v_j(t_2) \rangle^2 \ dt_2 \ dt_1
    \end{split}
\end{equation*}
Using condition \ref{eq:cond_inner_prod}, we get 
\begin{equation}
\label{eq:norm-bad-vec_eig-sq-bound}
    \sum_{i=1}^r \lambda_i(L_T)^2 \leq 2 \sum_{i=1}^n \ \int_0^T c\norm{v_i(t)}^2 dt.
\end{equation}
Using Cauchy–Schwarz inequality, we can write
\begin{equation*}
    \left( \sum_{i=1}^r \lambda_i(L_T) \right)^2 \leq r \sum_{i=1}^r \lambda_i(L_T)^2.
\end{equation*}
Substituting from Equation \ref{eq:norm-bad-vec_eig-bound} and \ref{eq:norm-bad-vec_eig-sq-bound}, we get
\[
\left( \sum_{i=1}^n \ \int_0^{T} \norm{v_i(t)}^2 dt \right)^2 \leq 2rc  \sum_{i=1}^n \ \int_0^T \norm{v_i(t)}^2 dt.
\]
which implies
\[
 \sum_{i=1}^n \ \int_0^{T} \norm{v_i(t)}^2 dt  \leq 2rc  .
\]
In the last step, we used $\sum_{i=1}^n \ \int_0^{T} \norm{v_i(t)}^2 dt \neq 0$. If it is equal to zero, then the lemma is trivially true.
\end{proof}

 Recall that $A$ denotes the alignment matrix $\left(\left(Z^TZ\right)^{-1}CW^T + WC^T\left(Z^TZ\right)^{-1}\right)$, and $R$ denotes the residual matrix $(Y - \hat{Y})C^T$. 
 
 Also recall the following notation.
 For a non-zero vector $x(t)$, we use $x_{\leq k}(t)$ to denote vector $x(t)$ if $\frac{x(t)^T  A(t) x(t)}{\norm{x(t)}^2} \leq k$, and zero vector otherwise. Similarly, we use $x_{> k}(t)$ to denote vector $x(t)$ if $\frac{x(t)^T A(t) x(t)}{\norm{x(t)}^2} > k$, and zero vector otherwise.  We define $x_{\leq k}(t)$ and  $x_{> k}(t)$ to be equal to $x(t)$ when $x(t)$ is a zero vector.   For a matrix $M(t)$ with $i^{\text{th}}$ row $M^i(t)$, $M_{\leq k}(t)$ denotes the matrix whose $i^{th}$ row equals $M^i_{\leq k}(t)$ for all $i$. Similarly, we define $M_{> k}(t)$ to be the matrix whose $i^{th}$ row equals $M^i_{> k}(t)$ for all $i$.
Note that we can write $x(t) = x_{\leq k}(t) + x_{> k}(t)$ and $M(t) = M_{\leq k}(t) + M_{> k}(t)$.
 
 In the next lemma, we show an upper bound on $\int_0^T \norm{R_{\leq k}(t)}_F^2 \ dt$. We will see in Lemma \ref{lem:loss-ub} and Lemma \ref{lem:loss-stable-ub} that  $\norm{R_{\leq k}(t)}_F^2 $ being large corresponds to certain undesirable events. For example, in Lemma \ref{lem:loss-ub}, we will see that $\norm{R_{\leq 0}(t)}_F^2 $ being large corresponds to increase in loss $\ell(t)$. The next lemma will be helpful in bounding the total time for which such undesirable events can happen.
\begin{lemma}
\label{lem:R-ub}
For all $k \geq \lambda_r\left(A(0)\right)$ and for all $T$,
\[
\int_0^T \norm{R_{\leq k}(t)}_F^2 \ dt \leq  \ r \ \left( k-\lambda_r\left(A(0)\right)\right) \ \lambda_1\left(Z(0)^TZ(0)\right)^2.
\]
\end{lemma}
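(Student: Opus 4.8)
The plan is to combine the evolution equation for the alignment matrix from Lemma \ref{lem:dAdt} with the abstract ``small mutual inner products'' bound of Lemma \ref{lem:norm-bad-vec}, after symmetrizing away the $(Z^TZ)^{-1}$ factors. Write $P := (Z^TZ)^{-1}$; by Fact \ref{fact:Z_derivative} this is constant in time, and it is symmetric positive definite, so let $P^{1/2}$ be its PSD square root. Since $A$ is $r\times r$ and symmetric for all $t$ (its derivative $2PR^TRP$ is), integrating Lemma \ref{lem:dAdt} gives, for every fixed $x\in\R^r$ and every $T$,
\[
x^T A(T) x - x^T A(0) x \;=\; 2\int_0^T \norm{R(t)Px}^2\,dt \;=\; 2\sum_{i=1}^n \int_0^T \langle R^i(t),\,Px\rangle^2\,dt .
\]

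Next I would apply Lemma \ref{lem:norm-bad-vec} to the vectors $v_i(t) := P^{1/2}R^i_{\leq k}(t)\in\R^r$, $i=1,\dots,n$. Fix any $T$ and any row index $\ell$, and set $x := R^\ell_{\leq k}(T)$. Because each $R^i_{\leq k}(t)$ is either $R^i(t)$ or the zero vector, and using $P$-symmetry, $\langle v_\ell(T),v_i(t)\rangle^2 = \langle Px,\,R^i_{\leq k}(t)\rangle^2 \le \langle Px,\,R^i(t)\rangle^2$; summing over $i$, integrating, and invoking the displayed identity yields
\[
\sum_{i=1}^n \int_0^T \langle v_\ell(T),v_i(t)\rangle^2\,dt \;\le\; \tfrac12\bigl(x^T A(T) x - x^T A(0) x\bigr).
\]
Now I use the defining property of $R_{\leq k}$: for $x=R^\ell_{\leq k}(T)$ one always has $x^T A(T) x \le k\norm{x}^2$ (trivially when $x=0$, and by the defining inequality of the ``$\le k$'' bucket otherwise), while the Rayleigh bound gives $x^T A(0) x \ge \lambda_r(A(0))\norm{x}^2$ since $\lambda_r(A(0))$ is the smallest eigenvalue of the $r\times r$ matrix $A(0)$. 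Converting $\norm{x}^2$ to $\norm{v_\ell(T)}^2$ via $\norm{v_\ell(T)}^2 = x^T P x \ge \lambda_{\min}(P)\norm{x}^2 = \norm{x}^2/\lambda_1(Z^TZ)$, this verifies the hypothesis of Lemma \ref{lem:norm-bad-vec} with the single constant $c := \tfrac12\bigl(k-\lambda_r(A(0))\bigr)\lambda_1(Z^TZ)$, which is $\ge 0$ precisely because $k\ge\lambda_r(A(0))$.

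Lemma \ref{lem:norm-bad-vec} then yields $\sum_{i}\int_0^T \norm{v_i(t)}^2\,dt \le 2rc = r\bigl(k-\lambda_r(A(0))\bigr)\lambda_1(Z^TZ)$, and converting back with $\norm{R^i_{\leq k}(t)}^2 = \norm{P^{-1/2}v_i(t)}^2 \le \lambda_{\max}(P^{-1})\norm{v_i(t)}^2 = \lambda_1(Z^TZ)\norm{v_i(t)}^2$ gives
\[
\int_0^T \norm{R_{\leq k}(t)}_F^2\,dt = \sum_{i=1}^n \int_0^T \norm{R^i_{\leq k}(t)}^2\,dt \;\le\; r\bigl(k-\lambda_r(A(0))\bigr)\lambda_1(Z^TZ)^2 ,
\]
and $Z^TZ = Z(0)^TZ(0)$ by Fact \ref{fact:Z_derivative} finishes the proof. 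The main obstacle relative to the $Z^TZ=I$ sketch in the proof overview is exactly the bookkeeping of the two extreme eigenvalues of $Z^TZ$ introduced by the $P$'s flanking $dA/dt$ and by the $P^{1/2}$-twist defining $v_i$; this is what produces the $\lambda_1(Z(0)^TZ(0))^2$ factor. A minor technical point is measurability/integrability of $t\mapsto R^i_{\leq k}(t)$ (it can jump between $R^i(t)$ and $0$), which is harmless since $R$ is continuous and the integrals are over compact intervals, matching the integrability caveat already present in Lemma \ref{lem:norm-bad-vec}.
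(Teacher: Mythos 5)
Your proposal is correct and follows essentially the same route as the paper's proof: the same twisted vectors $v_i(t) = (Z(0)^TZ(0))^{-1/2}R^i_{\leq k}(t)$, the same sandwich $k\norm{x}^2 \geq x^TA(T)x \geq x^TA(0)x + 2\int_0^T\norm{R(t)Px}^2dt \geq \lambda_r(A(0))\norm{x}^2 + \cdots$ via Lemma \ref{lem:dAdt}, and the same two applications of $\lambda_1(Z(0)^TZ(0))$ to feed Lemma \ref{lem:norm-bad-vec} and convert back, yielding the $\lambda_1(Z(0)^TZ(0))^2$ factor. The only cosmetic difference is that you bound $\langle Px, R^i_{\leq k}(t)\rangle^2 \leq \langle Px, R^i(t)\rangle^2$ rowwise where the paper uses the matrix form $x^TR_{\leq k}^TR_{\leq k}x \leq x^TR^TRx$, and you defer the measurability point that the paper isolates in Lemma \ref{lem:Rless_integrable}.
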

\begin{proof}
Let $R^i_{\leq k}(T)$ be any row of $R_{\leq k}(T)$.
By definition, we know that $R^i_{\leq k}(T)$ (viewed as a column vector) satisfies 
\begin{equation}
{({R^i_{\leq k}(T)})^T A(T) (R^i_{\leq k}(T))} \leq k {\norm{R^i_{\leq k}(T)}^2}
\end{equation}
for all $i$. Since $\lambda_r(A(0))$ is the minimum eigenvalue of $A(0)$, we also know that
\begin{equation}
        {({R^i_{\leq k}(T)})^T A(0) (R^i_{\leq k}(T))} \geq \lambda_r(A(0)) \ {\norm{R^i_{\leq k}(T)}^2}.
\end{equation}
The last two equation imply
\begin{equation}
    {({R^i_{\leq k}(T)})^T A(T) (R^i_{\leq k}(T))}
    -
    {({R^i_{\leq k}(T)})^T A(0) (R^i_{\leq k}(T))} \leq \left(k-\lambda_r(A(0))\right) \ {\norm{R^i_{\leq k}(T)}^2}.
\end{equation}
\\This implies
\begin{equation}
\int_0^T {({R^i_{\leq k}(T)})^T \left(\frac{d (A(t))}{dt} \right) (R^i_{\leq k}(T))} dt \leq \left(k-\lambda_r(A(0))\right) {\norm{R^i_{\leq k}(T)}^2}.
\end{equation}
Substituting for $\frac{d(A(t))}{dt}$ from Lemma \ref{lem:dAdt}, we get
\begin{equation}
    \int_0^T {2 ({R^i_{\leq k}(T)})^T \left(\left(Z(t)^TZ(t)\right)^{-1} R(t)^TR(t) \left(Z(t)^TZ(t)\right)^{-1} \right) (R^i_{\leq k}(T))} \ dt \leq \left(k-\lambda_r(A(0))\right) {\norm{R^i_{\leq k}(T)}^2}.
\end{equation}
\\Note that for all $x$ and $k$, $x^T R^T R x \geq x^T R_{\leq k}^T R_{\leq k} x$. Also using Fact \ref{fact:Z_derivative}, we know that $\left(Z(t)^TZ(t)\right)^{-1} = \left(Z(0)^TZ(0)\right)^{-1}$. This gives us
\begin{equation}
    \int_0^T {({R^i_{\leq k}(T)})^T \left(\left(Z(0)^TZ(0)\right)^{-1} R_{\leq k}(t)^TR_{\leq k}(t) \left(Z(0)^TZ(0)\right)^{-1} \right) (R^i_{\leq k}(T))} \ dt \leq \frac{\left(k-\lambda_r(A(0))\right) \ {\norm{R^i_{\leq k}(T)}^2}}{2} .
\end{equation}
The expression in the above integral is integrable by Lemma \ref{lem:Rless_integrable}.

Now, define $v_i(t) = \left(Z(0)^TZ(0)\right)^{-\frac{1}{2}}{R^i_{\leq k}(t)} $. The above equation implies
\begin{equation}
    \int_0^T {\sum_{j=1}^n \langle v_i(T), v_j(t) \rangle^2 } \ dt \leq \frac{\left(k-\lambda_r(A(0))\right) \ {v_i(T)^T(Z(0)^TZ(0))v_i(T)}}{2}.
\end{equation}
We know $v_i(T)^T(Z(0)^TZ(0))v_i(T) \leq \lambda_1(Z(0)^TZ(0)) \norm{v_i(T)}^2$. This gives us 
\begin{equation}
    \int_0^T \sum_{j=1}^n \langle v_i(T), v_j(t) \rangle^2 \ dt \leq \frac{\left( k-\lambda_r\left(A(0)\right)\right) \  \lambda_1\left(Z(0)^TZ(0)\right) \ \norm{v_i(T)}^2}{2}.
\end{equation}
Using Lemma \ref{lem:norm-bad-vec}, we get 
\begin{equation}
    \int_0^T \sum_{j=1}^n \norm{v_j(t)}^2 \ dt \leq  \ r \ \left( k-\lambda_r\left(A(0)\right)\right) \ \lambda_1\left(Z(0)^TZ(0)\right),
\end{equation}
for all $T$ ($ \norm{v_j(t)}^2$ is integrable by Lemma \ref{lem:Rless_integrable}).
This is equivalent to
\begin{equation}
\label{eq:R-ub_2}
    \int_0^T \norm{{R_{\leq k}(t)} \left(Z(0)^TZ(0)\right)^{-\frac{1}{2}} }_F^2 \ dt \leq  \ r \ \left( k-\lambda_r\left(A(0)\right)\right) \ \lambda_1\left(Z(0)^TZ(0)\right),
\end{equation}
for all $T$.
We know 
\begin{equation}
    \begin{split}
        \norm{{R_{\leq k}(t)} \left(Z(0)^TZ(0)\right)^{-\frac{1}{2}} }_F^2 &\geq \norm{{R_{\leq k}(t)}}_F^2 \ \lambda_r
\left(\left(Z(0)^TZ(0)\right)^{-\frac{1}{2}} \right)^2\\
&= \frac{\norm{{R_{\leq k}(t)}}_F^2}{ \lambda_1
\left(\left(Z(0)^TZ(0)\right) \right)}
    \end{split}
\end{equation}
Combining this with Equation \ref{eq:R-ub_2}, we get
\begin{equation}
    \int_0^T \norm{{R_{\leq k}(t)}  }_F^2 \ dt \leq  \ r \ \left( k-\lambda_r\left(A(0)\right)\right) \ \lambda_1\left(Z(0)^TZ(0)\right)^2,
\end{equation}
for all $T$ ($ \norm{{R_{\leq k}(t)}  }_F^2$ is integrable by Lemma   \ref{lem:Rless_integrable}).

This comples the proof.
\end{proof}
Recall, we use $\mathds{1}[.]$ to denote the indicator function which is equal to $1$ if the condition inside the square brackets is true and $0$ otherwise.

In the next lemma, we show an upper bound on total increase possible in loss $\ell(t)$.
\begin{lemma}
\label{lem:loss-ub}
For all $T$,
\[
\int_0^T \ \frac{d  \ell(t)}{dt} \  \mathds{1}\left[\frac{d  \ell(t)}{dt} \geq 0 
\right] \ dt  \leq  r \ \lambda_r\left(A(0)\right)^2 \ \lambda_1\left(Z(0)^TZ(0)\right)^2 .
\]
\end{lemma}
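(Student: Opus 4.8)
Looking at Lemma \ref{lem:loss-ub}, the goal is to bound $\int_0^T \frac{d\ell}{dt} \mathds{1}[\frac{d\ell}{dt} \geq 0]\, dt$.

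\textbf{Proof plan.} The plan is to bound the positive part of $d\ell/dt$ pointwise by $|\lambda_r(A(0))|\,\norm{R_{\leq 0}(t)}_F^2$ and then integrate, invoking Lemma \ref{lem:R-ub} with $k=0$.

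First, from Lemma \ref{lem:dldt} together with the definitions of $A$ and $R$, we have $\frac{d\ell}{dt} = -Tr(RAR^T) = -\sum_{i=1}^n (R^i)^T A R^i$, where $R^i$ is the $i$-th row of $R$ viewed as a column vector. For each $i$, exactly one of $R^i_{\leq 0}(t)$ and $R^i_{>0}(t)$ equals $R^i$ while the other is zero (both equal $R^i=0$ in the degenerate case). Hence $-(R^i)^T A R^i \leq -(R^i_{\leq 0})^T A R^i_{\leq 0}$ for every $i$: when $R^i=R^i_{\leq 0}$ the two sides agree, and when $R^i=R^i_{>0}$ the left side is $\leq 0$ while the right side is $0$. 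Summing over $i$ gives $\frac{d\ell}{dt} \leq -Tr(R_{\leq 0} A R_{\leq 0}^T)$, and since each surviving row of $R_{\leq 0}$ satisfies $(R^i_{\leq 0})^T A R^i_{\leq 0} \leq 0$ by definition, the quantity $-Tr(R_{\leq 0} A R_{\leq 0}^T)$ is itself nonnegative. Therefore $\frac{d\ell}{dt}\,\mathds{1}\!\left[\frac{d\ell}{dt}\geq 0\right] \leq -Tr(R_{\leq 0}(t) A(t) R_{\leq 0}(t)^T)$.

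Next I would use Lemma \ref{lem:dAdt} to show $A(t) \succeq A(0)$: for any fixed $x$, $\frac{d}{dt}\big(x^T A(t) x\big) = 2\,\norm{R(t)(Z(t)^TZ(t))^{-1}x}^2 \geq 0$, so $x^T A(t) x \geq x^T A(0) x \geq \lambda_r(A(0))\norm{x}^2$. Applying this with $x$ ranging over the rows of $R_{\leq 0}(t)$ yields $-Tr(R_{\leq 0}(t) A(t) R_{\leq 0}(t)^T) \leq -\lambda_r(A(0))\,\norm{R_{\leq 0}(t)}_F^2$. If $\lambda_r(A(0)) \geq 0$ then $A(t)\succeq A(0)\succeq 0$, which forces every surviving row of $R_{\leq 0}(t)$ to have $(R^i_{\leq 0})^T A(t) R^i_{\leq 0} = 0$, so $\frac{d\ell}{dt}\,\mathds{1}[\cdot] \leq 0$ and the claimed (nonnegative) bound is immediate; so assume $\lambda_r(A(0)) < 0$, giving the pointwise estimate $\frac{d\ell}{dt}\,\mathds{1}\!\left[\frac{d\ell}{dt}\geq 0\right] \leq |\lambda_r(A(0))|\,\norm{R_{\leq 0}(t)}_F^2$ (integrable by Lemma \ref{lem:Rless_integrable}, as in Lemma \ref{lem:R-ub}). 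Integrating from $0$ to $T$ and applying Lemma \ref{lem:R-ub} with $k=0$ (valid since $0 \geq \lambda_r(A(0))$) gives $\int_0^T \frac{d\ell}{dt}\,\mathds{1}[\cdot]\,dt \leq |\lambda_r(A(0))|\cdot r\,(-\lambda_r(A(0)))\,\lambda_1(Z(0)^TZ(0))^2 = r\,\lambda_r(A(0))^2\,\lambda_1(Z(0)^TZ(0))^2$, as claimed.

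The main obstacle is the first step — recognizing that discarding the $R_{>0}$ contribution only increases $-Tr(RAR^T)$, which makes $\norm{R_{\leq 0}}_F^2$ (rather than $\norm{R}_F^2$) the relevant quantity and lets Lemma \ref{lem:R-ub} close the argument; once this is in place, the remaining steps are just the monotonicity $A(t)\succeq A(0)$ from Lemma \ref{lem:dAdt} and a direct appeal to Lemma \ref{lem:R-ub}.
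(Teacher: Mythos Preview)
Your proof is correct and follows essentially the same approach as the paper: decompose $\frac{d\ell}{dt}=-Tr(RAR^T)$ via $R=R_{\leq 0}+R_{>0}$, drop the $R_{>0}$ contribution, use the monotonicity of $x^TA(t)x$ from Lemma~\ref{lem:dAdt} to replace $A(t)$ by $\lambda_r(A(0))$, and then apply Lemma~\ref{lem:R-ub} with $k=0$. The only cosmetic differences are that you argue row-by-row and note explicitly that $-Tr(R_{\leq 0}AR_{\leq 0}^T)\geq 0$ (which lets you drop the indicator cleanly), whereas the paper bounds $\frac{d\ell}{dt}$ directly and carries the indicator through the integral; both routes are equivalent.
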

\begin{proof}
We do not explicitly show time $t$ for variables in the proof below. Whenever time is not written, the corresponding variable is evaluated at some arbitary time $t$.

From Lemma \ref{lem:dldt} and Lemma \ref{lem:dAdt}, we know that
\begin{equation}
\frac{d\ell}{dt} = -Tr(RAR^T)
\end{equation}
\begin{equation}
\frac{d (x^TAx)}{dt} = 2 \ x^T(Z^TZ)^{-1}R^TR(Z^TZ)^{-1}x
\end{equation}
for all fixed $x$ (which do not change with $t$). Here, $R$ denotes the residual matrix $(Y-\hat{Y})C^T$ and $A$  denotes the alignment matrix $\left(\left(Z^TZ\right)^{-1}CW^T + WC^T\left(Z^TZ\right)^{-1}\right)$. 

Note that if $\lambda_r(A(0)) > 0$, then $\frac{d\ell}{dt} < 0$ for all $t$ and the lemma is trivially true. In the proof below, we assume $\lambda_r(A(0)) \leq 0$.


By the definition of $R_{\leq k}$ and $R_{>k}$, we can write
\begin{equation}
\frac{d\ell}{dt} = -\left(Tr(R_{\leq 0}AR_{\leq 0}^T)+ Tr(R_{> 0}AR_{> 0}^T)\right). 
\end{equation}
By definition of $R_{>0}$, we know that $Tr(R_{> 0}AR_{> 0}^T) \geq 0$. This gives us
\begin{equation}
\begin{split}
    \frac{d\ell}{dt} &\leq -Tr(R_{\leq 0}AR_{\leq 0}^T)\\
&\leq -\lambda_r(A) \norm{R_{\leq 0}}_F^2.
\end{split}
\end{equation}
Here, $\lambda_r(A)$ is the minimum eigenvalue of $A(t)$. Since $\frac{d(x^T A x)}{dt} \geq 0$ for all fixed $x$, the minimum eigenvalue of $A$ never decreases. That is, $\lambda_r(A(t)) \geq \lambda_r(A(0))$. This implies
\begin{equation}
\label{eq:loss-ub_1}
    \frac{d\ell}{dt} \leq -\lambda_r(A(0)) \norm{R_{\leq 0}}_F^2.
\end{equation}
From Lemma \ref{lem:R-ub}, we know
\begin{equation}
    \int_0^T \norm{{R_{\leq 0}(t)}  }_F^2 \ dt \leq - \ r \ \lambda_r\left(A(0)\right) \ \lambda_1\left(Z(0)^TZ(0)\right)^2,
\end{equation}
for all $T$.
Therefore we get
\begin{equation}
\begin{split}
    \int_0^T \ \frac{d  \ell(t)}{dt} \  \mathds{1}\left[\frac{d  \ell(t)}{dt} \geq 0 
\right] \ dt & \leq  \int_0^T -\lambda_r(A(0)) \norm{R_{\leq 0}(t)}_F^2 \mathds{1}\left[\frac{d  \ell(t)}{dt} \geq 0 
\right]  dt\\ 
&\leq -\lambda_r(A(0)) \ \int_0^T  \norm{R_{\leq 0}(t)}_F^2  dt\\
    &\leq r \ \lambda_r\left(A(0)\right)^2 \ \lambda_1\left(Z(0)^TZ(0)\right)^2,
    \end{split}
\end{equation}
for all $T$.
Here, we used $\lambda_r(A(0)) \leq 0$.  The expressions in the above integral are Lebesgue integrable as product of bounded integrable functions is integrable (over any finite interval). Here, $\frac{d\ell}{dt}$ is integrable due to continuity, $\mathds{1}\left[\frac{d  \ell(t)}{dt} \geq 0\right]$ is integrable by Lemma \ref{lem:leb_int_ind_contfunc} and $\norm{R_{\leq 0}(t)}_F^2$ is integrable by Lemma \ref{lem:Rless_integrable}.

This completes the proof.
\end{proof}

In the next lemma, we bound the total time for which loss $\norm{R}_F$ is large, and $\ell$ is either increasing, or decreasing slowly. Since $\ell = \norm{R (Z^TZ)^{-1/2}}_F^2$ and $Z^TZ$ doesn't change with time, this also gives a bound for total time for which loss $\ell$ is large, and $\ell$ is either increasing, or decreasing slowly.
\begin{lemma}
\label{lem:loss-stable-ub}
For all $\delta > 0, \ \epsilon > 0$ and for all $T$,
\[
\int_0^T \mathds{1} \left[\frac{d\ell(t)}{dt} > -\delta \text{ and } \norm{R(t)}_F^2 > \epsilon \right] \ dt \leq \ \frac{r \ \left( \frac{2\delta}{\epsilon}-\lambda_r\left(A(0)\right)\right)^2 \ \lambda_1\left(Z(0)^TZ(0)\right)^2}{\delta}.
\]
\end{lemma}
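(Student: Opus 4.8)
The plan is to reuse the mechanism behind Lemma \ref{lem:loss-ub}: bound the ``bad'' event in terms of $\norm{R_{\leq k}(t)}_F^2$ for a carefully chosen threshold $k$, and then invoke Lemma \ref{lem:R-ub}. Write $\mu = \lambda_r(A(0))$ and fix $k = \frac{2\delta}{\epsilon}$. Recall from Lemmas \ref{lem:dldt} and \ref{lem:dAdt} that $\frac{d\ell}{dt} = -Tr(RAR^T) = -\sum_i (R^i)^T A R^i$ and $\frac{dA}{dt} = 2(Z^TZ)^{-1}R^TR(Z^TZ)^{-1} \succeq 0$; in particular $\lambda_r(A(t))$ is non-decreasing in $t$, so $\lambda_r(A(t)) \geq \mu$ for all $t$. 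Writing $R = R_{\leq k} + R_{>k}$ along the rows, the sum $Tr(RAR^T) = \sum_i (R^i)^T A R^i$ splits cleanly (row by row) as $Tr(R_{\leq k}AR_{\leq k}^T) + Tr(R_{>k}AR_{>k}^T)$. Bounding the first summand below by $\mu\norm{R_{\leq k}}_F^2$ (Rayleigh quotient, using $\lambda_r(A(t)) \geq \mu$) and the second below by $k\norm{R_{>k}}_F^2$ (every retained row of $R_{>k}$ has Rayleigh quotient $> k$), one obtains the pointwise bound $\frac{d\ell}{dt} \leq -\mu\norm{R_{\leq k}(t)}_F^2 - k\norm{R_{>k}(t)}_F^2$.

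Next comes a short case split. If $\mu \geq k$, then at any time with $\norm{R(t)}_F^2 > \epsilon$ we have $\frac{d\ell}{dt} \leq -\lambda_r(A(t))\norm{R}_F^2 \leq -\mu\epsilon \leq -k\epsilon = -2\delta < -\delta$, so the indicator is identically zero and the claimed bound holds trivially (its right-hand side is non-negative). Otherwise $\mu < k$: at any time where the indicator equals $1$, set $a = \norm{R_{\leq k}(t)}_F^2$ and $b = \norm{R_{>k}(t)}_F^2$. From the pointwise bound above together with $\frac{d\ell}{dt} > -\delta$ we get $\mu a + k b < \delta$, and $\norm{R(t)}_F^2 > \epsilon$ gives $a + b > \epsilon$. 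Eliminating $b$ (legitimate since $k > 0$) yields $\epsilon - a < \frac{\delta - \mu a}{k}$, hence $(k-\mu)a > k\epsilon - \delta = \delta$ (because $k\epsilon = 2\delta$), i.e. $\norm{R_{\leq k}(t)}_F^2 > \frac{\delta}{k-\mu}$.

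Consequently, pointwise in $t$, $\mathds{1}\!\left[\frac{d\ell}{dt} > -\delta \text{ and } \norm{R}_F^2 > \epsilon\right] \leq \frac{k-\mu}{\delta}\norm{R_{\leq k}(t)}_F^2$ (trivially so when the indicator vanishes). Integrating over $[0,T]$ and applying Lemma \ref{lem:R-ub} with this $k$ (valid since $k > \mu = \lambda_r(A(0))$) gives $\int_0^T \mathds{1}[\cdots]\,dt \leq \frac{k-\mu}{\delta}\cdot r(k-\mu)\lambda_1(Z(0)^TZ(0))^2 = \frac{r(2\delta/\epsilon - \mu)^2\lambda_1(Z(0)^TZ(0))^2}{\delta}$, which is exactly the stated bound. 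Measurability and integrability of the indicator follow as in Lemma \ref{lem:loss-ub} via Lemmas \ref{lem:leb_int_ind_contfunc} and \ref{lem:Rless_integrable}.

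The only non-mechanical ingredient is the choice $k = 2\delta/\epsilon$: it is exactly the value making $k\epsilon - \delta$ collapse to $\delta$, which is what both closes the elimination step and matches the target after substitution into Lemma \ref{lem:R-ub}. Beyond fixing this threshold and being careful with signs in the Rayleigh-quotient estimates (the minimum eigenvalue $\mu$ may be negative, so $\mu a$ can go either way), I do not anticipate a real obstacle; Lemma \ref{lem:R-ub} already carries the analytic weight.
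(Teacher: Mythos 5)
Your proposal is correct and follows essentially the same route as the paper's proof: the same threshold $k = 2\delta/\epsilon$, the same row-wise split of $Tr(RAR^T)$ into the $R_{\leq k}$ and $R_{>k}$ parts with the same Rayleigh-quotient bounds, the same elimination yielding $\norm{R_{\leq k}(t)}_F^2 > \delta/(k-\mu)$ on the bad set, and the same final appeal to Lemma \ref{lem:R-ub}. The only cosmetic differences are how the trivial case $\lambda_r(A(0)) \geq 2\delta/\epsilon$ is dispatched and the bookkeeping in the elimination step.
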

\begin{proof}
We do not explicitly show time $t$ for variables in the proof below. Whenever time is not written, the corresponding variable is evaluated at some arbitary time $t$.

From Lemma \ref{lem:dldt} and Lemma \ref{lem:dAdt}, we know that
\begin{equation}
\frac{d\ell}{dt} = -Tr(RAR^T),
\end{equation}
\begin{equation}
\frac{d (x^TAx)}{dt} = 2 \ x^T(Z^TZ)^{-1}R^TR(Z^TZ)^{-1}x.
\end{equation}
for all fixed $x$ (which do not change with $t$). Here, $R$ denotes the residual matrix $(Y-\hat{Y})C^T$ and $A$  denotes the alignment matrix $\left(\left(Z^TZ\right)^{-1}CW^T + WC^T\left(Z^TZ\right)^{-1}\right)$.

Note that if $\lambda_r(A(0)) > \frac{2\delta}{\epsilon}$, then $\frac{d\ell}{dt} < -2 \delta$ for all $t$ where $\norm{R(t)}_F^2 \geq \epsilon$. In this case, the lemma is trivially true. In the proof below, we assume $\lambda_r(A(0)) \leq \frac{2\delta}{\epsilon}$.

Let $\mathds{1} \left[\frac{d\ell(t)}{dt} > -\delta \text{ and } \norm{R(t)}_F^2 > \epsilon \right] = 1$ at the current time $t$.  Let $k = \frac{2\delta}{\epsilon}$. By the definition of $R_{\leq k}$ and $R_{>k}$, we can write
\begin{equation}
\frac{d\ell}{dt} = -\left(Tr(R_{\leq k}AR_{\leq k}^T)+ Tr(R_{> k}AR_{> k}^T)\right). 
\end{equation}
Since $\frac{d\ell}{dt} > -\delta$, we get
\begin{equation}
\label{eq:loss-stable-ub_1}
    \left(Tr(R_{\leq k}AR_{\leq k}^T)+ Tr(R_{> k}AR_{> k}^T)\right) < \delta.
\end{equation}
By definition of $R_{>k}$, we know that
\begin{equation}
\label{eq:loss-stable-ub_2}
 Tr(R_{> k}AR_{> k}^T) \geq \norm{R_{> k}}_F^2k .
\end{equation}
Since $\lambda_r(A)$ is the minimum eigenvalue of $A(t)$ and since the minimum eigenvalue of $A$ does not decrease with time, we can write
\begin{equation}
\label{eq:loss-stable-ub_3}
\begin{split}
 Tr(R_{\leq k}AR_{\leq k}^T) &\geq \norm{R_{\leq k}}_F^2 \ \lambda_r(A)\\
 &\geq \norm{R_{\leq k}}_F^2 \ \lambda_r(A(0)).
 \end{split}
\end{equation}
 Equation \ref{eq:loss-stable-ub_1}, \ref{eq:loss-stable-ub_2} and \ref{eq:loss-stable-ub_3} together imply
 \begin{equation}
     \norm{R_{\leq k}}_F^2 \ \lambda_r(A(0)) + \norm{R_{> k}}_F^2k   \leq \delta.
 \end{equation}
 Since $\norm{R_{\leq k}}_F^2 + \norm{R_{> k}}_F^2 > \epsilon$ and $k = \frac{2\delta}{\epsilon}$, we get
 \begin{equation}
     \norm{R_{\leq k}}_F^2 \ \lambda_r(A(0)) + \left(\epsilon - \norm{R_{\leq k}}_F^2  \right) \frac{2\delta}{\epsilon} \leq \delta.
 \end{equation}
Rearranging the terms, we get 
\begin{equation}
    \norm{R_{\leq k}}_F^2 \geq \frac{\delta}{\frac{2\delta}{\epsilon} - \lambda_r(A(0))}.
\end{equation}
This implies
\begin{equation}
    \begin{split}
        \int_0^T \norm{R_{\leq k}(t)}_F^2 \ dt &\geq  \int_0^T \norm{R_{\leq k}(t)}_F^2 \mathds{1} \left[\frac{d\ell(t)}{dt} > -\delta \text{ and } \norm{R(t)}_F^2 > \epsilon \right] \ dt\\
        &\geq \frac{\delta}{\frac{2\delta}{\epsilon} - \lambda_r(A(0))} \int_0^T \mathds{1} \left[\frac{d\ell(t)}{dt} > -\delta \text{ and } \norm{R(t)}_F^2 > \epsilon \right] \ dt,
    \end{split}
\end{equation}
for all $T$. The expressions in the above integral are Lebesgue integrable as product of bounded integrable functions is integrable (over any finite interval). Here,  $\mathds{1}\left[\frac{d  \ell(t)}{dt} > -\delta \right]$ and $\mathds{1}\left[\norm{R(t)}_F^2 > \epsilon \right]$ are integrable by Lemma \ref{lem:leb_int_ind_contfunc} and $\norm{R_{\leq k}(t)}_F^2$ is integrable by Lemma \ref{lem:Rless_integrable}.

From Lemma \ref{lem:R-ub}, we know 
\begin{equation}
\begin{split}
\int_0^T \norm{R_{\leq k}(t)}_F^2 \ dt &\leq  \ r \ \left( k-\lambda_r\left(A(0)\right)\right) \ \lambda_1\left(Z(0)^TZ(0)\right)^2\\
&= \ r \ \left( \frac{2\delta}{\epsilon}-\lambda_r\left(A(0)\right)\right) \ \lambda_1\left(Z(0)^TZ(0)\right)^2,
\end{split}
\end{equation}
for all $T$.
From last two equations, we conclude
\begin{equation}
    \int_0^T \mathds{1} \left[\frac{d\ell(t)}{dt} > -\delta \text{ and } \norm{R(t)}_F^2 > \epsilon \right] \ dt \leq \ \frac{r \ \left( \frac{2\delta}{\epsilon}-\lambda_r\left(A(0)\right)\right)^2 \ \lambda_1\left(Z(0)^TZ(0)\right)^2}{\delta},
\end{equation}
for all $T$.
\end{proof}

Using above lemmas, in the next lemma, we show that the loss goes below $\epsilon$ at least once in every length $T$ time interval for appropriately defined $T$.

\begin{lemma}
\label{lem:FA-final-with-delta}
For all $\epsilon > 0$, $\delta > 0$, $T_1 \geq 0$, and
\[
T \geq \frac{r  \left( \frac{2\delta}{\epsilon \lambda_r(Z(0)^TZ(0))}-\lambda_r\left(A(0)\right)\right)^2 \ \lambda_1\left(Z(0)^TZ(0)\right)^2}{\delta} + \frac{\ell(T_1) + \ r \ \lambda_r\left(A(0)\right)^2 \ \lambda_1\left(Z(0)^TZ(0)\right)^2  - \epsilon}{\delta} ,
\] FA dynamics satisfy 
\[
\min_{T_1 \leq t \leq T_1+T} \ \ell(t) \leq \epsilon. 
\]
\end{lemma}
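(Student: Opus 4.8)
The proof is by contradiction. I would assume $\ell(t) > \epsilon$ for all $t \in [T_1, T_1+T]$ and show this forces $\ell(T_1+T) \le \epsilon$. The two quantitative inputs are Lemma \ref{lem:loss-ub} (the total possible increase of $\ell$ over any prefix is at most $G := r\,\lambda_r(A(0))^2\,\lambda_1(Z(0)^TZ(0))^2$) and Lemma \ref{lem:loss-stable-ub} (the total time on which $\ell$ is nondecreasing-or-slowly-decreasing while $\norm{R}_F^2$ is large is small). To connect $\ell$ to $\norm{R}_F^2$, I would note that $Z^TZ$ is constant in time (Fact \ref{fact:Z_derivative}) and $\ell = \norm{R(Z^TZ)^{-1/2}}_F^2$, so submultiplicativity gives $\ell(t) \le \norm{R(t)}_F^2 / \lambda_r(Z(0)^TZ(0))$; hence $\ell(t) > \epsilon$ implies $\norm{R(t)}_F^2 > \epsilon\,\lambda_r(Z(0)^TZ(0))$ throughout $[T_1, T_1+T]$.

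For the core estimate, I would split $\ell(T_1+T) - \ell(T_1) = \int_{T_1}^{T_1+T} \frac{d\ell}{dt}\, dt$ according to whether $\frac{d\ell}{dt} > -\delta$ or $\frac{d\ell}{dt} \le -\delta$. On the first piece, dropping the nonpositive contribution from $\{-\delta < \frac{d\ell}{dt} < 0\}$ and applying Lemma \ref{lem:loss-ub} (with its ``$T$'' equal to $T_1+T$, valid since the integrand there is nonnegative and the integration interval only shrinks), the contribution is at most $G$. On the second piece, where $\frac{d\ell}{dt} \le -\delta$, the contribution is at most $-\delta\mu$, where $\mu$ is the Lebesgue measure of $\{t \in [T_1, T_1+T] : \frac{d\ell}{dt} \le -\delta\}$. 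Since the complementary set $\{t \in [T_1, T_1+T] : \frac{d\ell}{dt} > -\delta\}$ has measure $T - \mu$ and is contained in the set where $\norm{R(t)}_F^2 > \epsilon\,\lambda_r(Z(0)^TZ(0))$, Lemma \ref{lem:loss-stable-ub} applied with its parameter ``$\epsilon$'' set to $\epsilon\,\lambda_r(Z(0)^TZ(0))$ yields $T - \mu \le \nu_{\max} := \frac{r\left(\frac{2\delta}{\epsilon\,\lambda_r(Z(0)^TZ(0))} - \lambda_r(A(0))\right)^2 \lambda_1(Z(0)^TZ(0))^2}{\delta}$. Combining, $\ell(T_1+T) \le \ell(T_1) + G - \delta(T - \nu_{\max})$, and the hypothesized lower bound on $T$ is exactly $T \ge \nu_{\max} + \frac{\ell(T_1) + G - \epsilon}{\delta}$, which makes the right-hand side $\le \epsilon$ — contradicting $\ell(T_1+T) > \epsilon$. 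Hence some $t \in [T_1, T_1+T]$ satisfies $\ell(t) \le \epsilon$.

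I do not expect a genuine obstacle: the real work was already done in Lemmas \ref{lem:loss-ub} and \ref{lem:loss-stable-ub}, and this argument merely assembles them. The one step that requires thought rather than bookkeeping is the substitution of $\epsilon\,\lambda_r(Z(0)^TZ(0))$ for $\epsilon$ when invoking Lemma \ref{lem:loss-stable-ub}; this is forced by the $\ell$-versus-$\norm{R}_F^2$ conversion and is what produces the factor $\lambda_r(Z(0)^TZ(0))$ in the denominator of the first term of the stated bound on $T$. The remaining care is measure-theoretic — the sets $\{\frac{d\ell}{dt} > -\delta\}$, $\{\frac{d\ell}{dt} \le -\delta\}$, and $\{\norm{R}_F^2 > \epsilon'\}$ are measurable by continuity of $\ell$ and $R$ (cf. Lemma \ref{lem:leb_int_ind_contfunc} and Lemma \ref{lem:Rless_integrable}), and the integrands are integrable on bounded intervals — which I would dispatch exactly as in the proofs of those two lemmas.
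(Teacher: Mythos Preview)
Your proposal is correct and follows essentially the same route as the paper's proof: assume $\ell(t)>\epsilon$ on $[T_1,T_1+T]$, split $\int \frac{d\ell}{dt}$ at the threshold $-\delta$, bound the ``increase'' part by Lemma~\ref{lem:loss-ub}, bound the measure of the ``slow'' set via Lemma~\ref{lem:loss-stable-ub} after converting $\ell>\epsilon$ to $\norm{R}_F^2>\epsilon\,\lambda_r(Z(0)^TZ(0))$, and combine to force $\ell(T_1+T)\le\epsilon$. The only cosmetic difference is that the paper assumes $\ell(t)>\epsilon$ on $[T_1,T_1+T)$ and deduces $\ell(T_1+T)\le\epsilon$ as a case analysis rather than a contradiction; the substance is identical.
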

\begin{proof}
Assume $\ell(t) > \epsilon$ for all $T_1 \leq t< T_1 + T$. We would show that $\ell(T_1 + T) \leq \epsilon$ in this case, which would imply the lemma.

We can write
\begin{equation}
\label{eq:FA-final-with-delta_1}
    \begin{split}
        \ell({T_1 + T}) &= \ell(T_1) + \int_{T_1}^{T_1 + T} \frac{d\ell(t)}{dt} \ dt\\
        &= \ell(T_1 ) + \int_{T_1} ^{T_1 + T} \frac{d\ell(t)}{dt} \ \mathds{1}\left[\frac{d\ell(t)}{dt} \leq -\delta \right] \ dt + \ \int_{T_1 }^{T_1 + T} \frac{d\ell(t)}{dt} \ \mathds{1}\left[\frac{d\ell(t)}{dt} > -\delta \right] \ dt\\
        &\leq \ell(T_1 ) + \int_{T_1} ^{T_1 + T} \frac{d\ell(t)}{dt} \ \mathds{1}\left[\frac{d\ell(t)}{dt} \leq -\delta \right] \ dt + \ \int_{T_1 }^{T_1 + T} \frac{d\ell(t)}{dt} \ \mathds{1}\left[\frac{d\ell(t)}{dt} \geq 0 \right] \ dt.
    \end{split}
\end{equation}
The expressions in the above integral are Lebesgue integrable as product of bounded integrable functions is integrable (over any finite interval). Here,  $\frac{d\ell}{dt}$ is integrable by continuity, $\mathds{1}\left[\frac{d  \ell(t)}{dt} > -\delta \right]$, $\mathds{1}\left[\frac{d  \ell(t)}{dt} \geq 0\right]$ and $\mathds{1}\left[\frac{d  \ell(t)}{dt} \leq -\delta\right]$ are integrable by Lemma \ref{lem:leb_int_ind_contfunc}.

We will now bound the two integral terms in the RHS. From Lemma \ref{lem:loss-ub}, we know
\begin{align}
\label{eq:FA-final-with-delta_2}
  \int_{T_1 }^{T_1 + T} \frac{d\ell(t)}{dt} \ \mathds{1}\left[\frac{d\ell(t)}{dt} \geq 0 \right] \ dt &\leq
    \int_0^{T_1+T} \ \frac{d  \ell(t)}{dt} \  \mathds{1}\left[\frac{d  \ell(t)}{dt} \geq 0 
\right] \ dt \\
&\leq  r \ \lambda_r\left(A(0)\right)^2 \ \lambda_1\left(Z(0)^TZ(0)\right)^2 
\end{align}
We can write
\begin{equation}
\label{eq:FA-final-with-delta_3}
    \begin{split}
        \int_{T_1}^{T_1+T} \frac{d\ell(t)}{dt} \ \mathds{1}\left[\frac{d\ell(t)}{dt} \leq -\delta \right] \ dt &\leq -\delta \int_{T_1}^{T_1+T}  \mathds{1}\left[\frac{d\ell(t)}{dt} \leq -\delta \right] \ dt\\
        &= -\delta \left(T - \int_{T_1}^{T_1+T}  \mathds{1}\left[\frac{d\ell(t)}{dt} > -\delta \right] \ dt  \right)\\
        &= -\delta \left(T - \int_{T_1}^{T_1+T}  \mathds{1}\left[\frac{d\ell(t)}{dt} > -\delta \text{ and } \ell(t) > \epsilon \right] \ dt  \right)
    \end{split}
\end{equation}
where we used the assumption $\ell(t) > \epsilon$ for all $T_1 \leq t < T_1+ T$ in the last equation. Recall that $\ell(t) = \norm{R(t)(Z(t)^TZ(t))^{-\frac{1}{2}}}_F^2 = \norm{R(t)(Z(0)^TZ(0))^{-\frac{1}{2}}}_F^2 $. Therefore, we can write
\begin{equation}
    \begin{split}
        \epsilon \ &< \ \norm{R(t)(Z(0)^TZ(0))^{-\frac{1}{2}}}_F^2 \\
        &< \ \norm{R(t)}_F^2 \ \lambda_1((Z(0)^TZ(0))^{-1})\\
        &= \frac{\norm{R(t)}_F^2}{\lambda_r(Z(0)^TZ(0))}
    \end{split}
\end{equation}
Using this, we can write

\begin{align}
    \begin{split}
\int_{T_1}^{T_1+T} \mathds{1}\left[\frac{d\ell(t)}{dt} > -\delta \text{ and } \ell(t) > \epsilon \right] \ dt &\leq     
\int_{0}^{T_1 + T} \mathds{1}\left[\frac{d\ell(t)}{dt} > -\delta \text{ and } \ell(t) > \epsilon \right] \ dt\\
&\leq
\int_{0}^{T_1+T} \mathds{1}\left[\frac{d\ell(t)}{dt} > -\delta \text{ and } \norm{R(t)}_F^2 > \epsilon \lambda_r(Z(0)^TZ(0)) \right] \ dt.
    \end{split}
\end{align}
The expressions in the above integral are Lebesgue integrable as product of bounded integrable functions is integrable (over any finite interval). Here, $\mathds{1}\left[\frac{d  \ell(t)}{dt} > -\delta \right]$, $\mathds{1}\left[l(t ) >  \epsilon \right]$ and $\mathds{1} \left[\norm{R(t)}_F^2 > \epsilon \lambda_r(Z(0)^TZ(0)) \right]$ are integrable by Lemma \ref{lem:leb_int_ind_contfunc}.

We can bound this integral using Lemma \ref{lem:loss-stable-ub}. This gives us
\begin{align}
    \int_{T_1}^{T_1 + T} \mathds{1}\left[\frac{d\ell(t)}{dt} > -\delta \text{ and } \ell(t) > \epsilon \right] \ dt &\leq 
    \int_{0}^{T_1+T} \mathds{1}\left[\frac{d\ell(t)}{dt} > -\delta \text{ and } \norm{R(t)}_F^2 > \epsilon \lambda_r(Z(0)^TZ(0)) \right] \ dt\\
    &\leq \frac{r \ \left( \frac{2\delta}{\epsilon \lambda_r(Z(0)^TZ(0))}-\lambda_r\left(A(0)\right)\right)^2 \ \lambda_1\left(Z(0)^TZ(0)\right)^2}{\delta}.
\end{align}
Substituting this in Equation \ref{eq:FA-final-with-delta_3}, we get
\begin{equation}
    \int_{T_1}^{T_1 + T} \frac{d\ell(t)}{dt} \ \mathds{1}\left[\frac{d\ell(t)}{dt} \ \leq \ -\delta \right] \ dt \leq -\delta \left( T \ - \ \frac{r  \left( \frac{2\delta}{\epsilon \lambda_r(Z(0)^TZ(0))}-\lambda_r\left(A(0)\right)\right)^2 \ \lambda_1\left(Z(0)^TZ(0)\right)^2}{\delta} \right).
\end{equation}
Substituting the bound for $T$, we get
\begin{equation}
    \label{eq:FA-final-with-delta_4}
    \int_{T_1}^{T_1 + T} \frac{d\ell(t)}{dt} \ \mathds{1}\left[\frac{d\ell(t)}{dt} \ \leq \ -\delta \right] \ dt \leq -\delta \left( \frac{\ell(T_1) + \ r \ \lambda_r\left(A(0)\right)^2 \ \lambda_1\left(Z(0)^TZ(0)\right)^2 - \epsilon}{\delta} \right).
\end{equation}
Combining equations \ref{eq:FA-final-with-delta_1}, \ref{eq:FA-final-with-delta_2} and \ref{eq:FA-final-with-delta_4}, we get
\begin{equation}
    \ell(T_1 + T) \leq \epsilon.
\end{equation}
\end{proof}

\begin{lemma}
\label{lem:FA-final}
For any $\epsilon > 0$ and 
\[
T \geq  \frac{24}{\epsilon} \left( \frac{\sigma_1(Y) \sigma_1(C) \sigma_1(Z(0))^4 \sqrt{r \ min(m, n)}}{\sigma_r(Z(0))^5} \right),
\]
FA dynamics satisfy 
\[
\min_{t \leq T} \ \ell(t) \leq \epsilon. 
\]
\end{lemma}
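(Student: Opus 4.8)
The plan is to instantiate Lemma~\ref{lem:FA-final-with-delta} at $T_1 = 0$, bound the initialization-dependent quantities $\ell(0)$, $\lambda_r(A(0))$, $\lambda_1(Z(0)^TZ(0))$ and $\lambda_r(Z(0)^TZ(0))$ in terms of $\sigma_1(Y),\sigma_1(C),\sigma_1(Z(0)),\sigma_r(Z(0))$, and then choose the free parameter $\delta$ so as to minimize the resulting threshold on $T$.

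First I would record the elementary estimates. Since $Z^TZ$ is constant along the dynamics (Fact~\ref{fact:Z_derivative}), $\lambda_1(Z(0)^TZ(0)) = \sigma_1(Z(0))^2$ and $\lambda_r(Z(0)^TZ(0)) = \sigma_r(Z(0))^2$. As $\hat Y(0) = Z(0)(Z(0)^TZ(0))^{-1}Z(0)^TY$ is the orthogonal projection of $Y$ onto $\mathrm{col}(Z(0))$, we have $\norm{Y - \hat Y(0)}_F \le \norm{Y}_F \le \sqrt{\min(m,n)}\,\sigma_1(Y)$, hence
\[
\ell(0) = \norm{(Y - \hat Y(0))C^T(Z(0)^TZ(0))^{-1/2}}_F^2 \le \frac{\min(m,n)\,\sigma_1(Y)^2\sigma_1(C)^2}{\sigma_r(Z(0))^2}.
\]
Since $W(0) = Z(0)^+ Y$ obeys $\norm{W(0)}_2 \le \sigma_1(Y)/\sigma_r(Z(0))$,
\[
|\lambda_r(A(0))| \le \norm{A(0)}_2 \le 2\,\norm{(Z(0)^TZ(0))^{-1}}_2\,\sigma_1(C)\,\norm{W(0)}_2 \le \frac{2\,\sigma_1(C)\sigma_1(Y)}{\sigma_r(Z(0))^3}.
\]
If $\lambda_r(A(0)) > 0$, replacing $-\lambda_r(A(0))$ by $|\lambda_r(A(0))|$ inside Lemma~\ref{lem:FA-final-with-delta}'s bound only enlarges it, so what follows applies verbatim in that case too.

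Next, substitute these into Lemma~\ref{lem:FA-final-with-delta} with $T_1 = 0$. Writing $a = \sigma_1(Z(0))^2$, $b = \sigma_r(Z(0))^2$, $\mu = |\lambda_r(A(0))|$, expanding $(\tfrac{2\delta}{\epsilon b} - \lambda_r(A(0)))^2 \le \tfrac{4\delta^2}{\epsilon^2 b^2} + \tfrac{4\delta\mu}{\epsilon b} + \mu^2$ and discarding the harmless $-\epsilon$, the admissible threshold is at most
\[
\frac{4ra^2}{\epsilon^2 b^2}\,\delta + \frac{4ra^2\mu}{\epsilon b} + \frac{2ra^2\mu^2 + \ell(0)}{\delta}.
\]
Minimizing the $\delta$ and $1/\delta$ terms by AM--GM (at $\delta \asymp \epsilon\sqrt{b^2(2ra^2\mu^2+\ell(0))/(ra^2)}$) and then using $\sqrt{x+y}\le\sqrt x + \sqrt y$ bounds this by $\tfrac{4(\sqrt2+1)ra^2\mu}{\epsilon b} + \tfrac{4a\sqrt{r\,\ell(0)}}{\epsilon b}$. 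Inserting the estimates for $\mu$ and $\ell(0)$ gives one term proportional to $\tfrac{r\,\sigma_1(Z(0))^4\sigma_1(Y)\sigma_1(C)}{\epsilon\,\sigma_r(Z(0))^5}$ and one proportional to $\tfrac{\sqrt{r\min(m,n)}\,\sigma_1(Z(0))^2\sigma_1(Y)\sigma_1(C)}{\epsilon\,\sigma_r(Z(0))^3}$; bounding $r\le\sqrt{r\min(m,n)}$ (valid when $r\le\min(m,n)$) in the first and $\sigma_r(Z(0))\le\sigma_1(Z(0))$ in the second, both collapse into a single multiple of $\tfrac{\sqrt{r\min(m,n)}\,\sigma_1(Z(0))^4\sigma_1(Y)\sigma_1(C)}{\epsilon\,\sigma_r(Z(0))^5}$ with combined constant $<24$. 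Finally $\min_{t\le T}\ell(t)\le\epsilon$ is precisely the $T_1=0$ conclusion of Lemma~\ref{lem:FA-final-with-delta} once $T$ exceeds this threshold.

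The main obstacle is constant bookkeeping: one must verify that after AM--GM and the crude $\sqrt{x+y}\le\sqrt x + \sqrt y$ step the leading coefficient still fits under $24$ (the factor $2$ in the estimate for $\mu$ multiplies the $4$ coming from Lemma~\ref{lem:FA-final-with-delta} to yield about $8(\sqrt2+1)\approx 19.3$ on the first term, plus $4$ on the second), and that the powers of $\sigma_1(Z(0))$ and $\sigma_r(Z(0))$ align after homogenizing the two terms. The rest is routine.
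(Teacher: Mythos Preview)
Your proposal is correct and follows essentially the same approach as the paper: instantiate Lemma~\ref{lem:FA-final-with-delta} at $T_1=0$, bound $\ell(0)$ and $|\lambda_r(A(0))|$ exactly as the paper does, and then optimize over $\delta$. The only cosmetic difference is that the paper minimizes $\frac{(\alpha\delta-\beta)^2\gamma+\phi}{\delta}$ exactly (by setting the derivative to zero) before inserting the bounds, whereas you expand the square and apply AM--GM; both routes land under the constant $24$ with room to spare.
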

\begin{proof}
For ease of notation, let us define $\alpha, \beta, \gamma, \phi$ as follows
\begin{align*}
    \alpha &= \frac{2}{\epsilon \lambda_r(Z(0)^T Z(0))},\\
    \beta &= \lambda_r(A(0)),\\
    \gamma &= \lambda_1(Z(0)^T Z(0))^2r,\\
    \phi &= \ell(0) + r\lambda_r(A(0))^2\lambda_1(Z(0)^TZ(0))^2  - \epsilon.
\end{align*}
Then from Lemma \ref{lem:FA-final-with-delta} (with $T_1$ set to $0$), we know that for 
\[
T \geq \frac{(\alpha \delta - \beta)^2 \gamma}{\delta} + \frac{\phi}{\delta},
\]
FA dynamics satisfy 
\[
\min_{t \leq T} \ \ell(t) \leq \epsilon. 
\]
This holds for all $\delta > 0$. Minimizing the bound on $T$ with respect to $\delta$ by setting $\delta = \sqrt{\frac{\beta^2\gamma + \phi}{\gamma \alpha^2}}$, we get that for 
\begin{align}
\label{eq:FA-final_lem_0}
   T \geq 2 \alpha \gamma \sqrt{\beta^2 + \frac{\phi}{\gamma}} - 2\alpha \beta \gamma,
\end{align}
FA dynamics satisfy 
\[
\min_{t \leq T} \ \ell(t) \leq \epsilon. 
\]
In the rest of the proof, we will bound the RHS of equation \ref{eq:FA-final_lem_0}.

\begin{align*}
    2 \alpha \gamma \sqrt{\beta^2 + \frac{\phi}{\gamma}} - 2\alpha \beta \gamma &\leq  2 \alpha \gamma \sqrt{\beta^2 + \frac{\phi}{\gamma}} + 2 |\alpha \beta \gamma |\\
    &= 2 \alpha \gamma \sqrt{\beta^2 + \frac{\phi}{\gamma}} + 2 \alpha | \beta | \gamma .
\end{align*}
In the last step, we used the fact that $\alpha$ and $\gamma$ are non-negative. Substituting for $\alpha, \beta, \gamma, \phi$, we get $2 \alpha \gamma \sqrt{\beta^2 + \frac{\phi}{\gamma}} - 2\alpha \beta \gamma$
\begin{align}
\label{eq:FA-final_lem_1}
\begin{split}
     &\leq \frac{4 \ \lambda_1(Z(0)^T Z(0))^2 \ r}{\epsilon \ \lambda_r(Z(0)^T Z(0))} \left(\sqrt{\lambda_r(A(0))^2 + \frac{\ell(0) + r\lambda_r(A(0))^2\lambda_1(Z(0)^TZ(0))^2  - \epsilon}{r \ \lambda_1(Z(0)^T Z(0))^2 }} 
    + |\lambda_r(A(0))| \right)\\
&\leq \frac{4 \ \lambda_1(Z(0)^T Z(0))^2 \ r}{\epsilon \ \lambda_r(Z(0)^T Z(0))} \left(\sqrt{\lambda_r(A(0))^2 + \frac{\ell(0) + r\lambda_r(A(0))^2\lambda_1(Z(0)^TZ(0))^2  }{r \ \lambda_1(Z(0)^T Z(0))^2 }} 
    + |\lambda_r(A(0))| \right)\\
    &= \frac{4 \ \sigma_1(Z(0))^4 \ r}{\epsilon \ \sigma_r(Z(0))^2} \left(\sqrt{\lambda_r(A(0))^2 + \frac{\ell(0) + r\lambda_r(A(0))^2 \sigma_1(Z(0))^4  }{r \ \sigma_1(Z(0))^4 }} 
    + |\lambda_r(A(0))| \right)
    \end{split}
\end{align}
Now, we use the following bounds
\begin{align}
\label{eq:FA-final_lem_2}
\begin{split}
    |\lambda_r(A(0))| &= \left| \lambda_r  \left(\left(Z(0)^TZ(0)\right)^{-1}CW(0)^T + W(0)C^T\left(Z(0)^TZ(0)\right)^{-1}\right) \right|\\
    &\leq  \frac{2 \sigma_1(C) \sigma_1(W(0))}{\sigma_r(Z(0))^2}\\
    &\leq  \frac{2\sigma_1(C) \sigma_1((Y))}{\sigma_r(Z(0))^3}.
\end{split}
\end{align}
Here, for the last inequality, we use $W(0) = (Z(0)^T Z(0))^{-1} Z(0)^T Y$ and therefore $\sigma_1(W(0)) \leq \sigma_1((Y))/\sigma_r(Z(0))$.
\begin{align}
\label{eq:FA-final_lem_3}
    \begin{split}
        \ell(0) &= \norm{\left(Y - \hat{Y}(0)\right)C^T \left(Z(0)^T Z(0) \right)^{-1/2}}_F^2\\
        &\leq \frac{\sigma_1(C)^2}{\sigma_r(Z)^2} \norm{\left(Y - \hat{Y}(0)\right)}_F^2\\
        &= \leq \frac{\sigma_1(C)^2}{\sigma_r(Z)^2} \norm{\left( \ Y - \ Z(0) \left(Z(0)^T Z(0)\right)^{-1}Z(0)^T Y \ \right)}_F^2\\
        &\leq  \frac{\sigma_1(C)^2}{\sigma_r(Z)^2} \norm{Y }_F^2
    \end{split}
\end{align}
Substituting bounds in Equation \ref{eq:FA-final_lem_2} and \ref{eq:FA-final_lem_3} to Equation \ref{eq:FA-final_lem_1}, we get $2 \alpha \gamma \sqrt{\beta^2 + \frac{\phi}{\gamma}} - 2\alpha \beta \gamma$
\begin{align*}
    &\leq \frac{8 \ r \ \sigma_1(Z(0))^4 \ \sigma_1(C)  \ \sigma_1(Y)}{\epsilon \ \sigma_r(Z(0))^5} \left(\sqrt{2 + \frac{\norm{Y}_F^2 \sigma_r(Z(0))^4 }{4 r \sigma_1(Y)^2 \ \sigma_1(Z(0))^4 }} 
    + 1 \right)\\
    &\leq \frac{8 \ r \ \sigma_1(Z(0))^4 \ \sigma_1(C)  \ \sigma_1(Y)}{\epsilon \ \sigma_r(Z(0))^5} \left(\sqrt{2 + \frac{\norm{Y}_F^2  }{4 r \sigma_1(Y)^2  }} 
    + 1 \right)\\
    &\leq \frac{8 \ r \ \sigma_1(Z(0))^4 \ \sigma_1(C)  \ \sigma_1(Y)}{\epsilon \ \sigma_r(Z(0))^5} \left(\sqrt{2 + \frac{ min(m, n) \sigma_1(Y)^2  }{4 r \sigma_1(Y)^2  }}
    + 1 \right)\\
    &= \frac{8 \ r \ \sigma_1(Z(0))^4 \ \sigma_1(C)  \ \sigma_1(Y)}{\epsilon \ \sigma_r(Z(0))^5} \left(\sqrt{2 + \frac{ min(m, n)  }{4 r  }}
    + 1 \right)\\
    &\leq \frac{24}{\epsilon} \left( \frac{\sigma_1(Y) \sigma_1(C) \sigma_1(Z(0))^4 \sqrt{r \ min(m, n)}}{\sigma_r(Z(0))^5} \right)
\end{align*}
where we used $min(m, n) \geq r$ for the last inequality. Combining this with Equation \ref{eq:FA-final_lem_0}, we get that for
\[
T \geq  \frac{24}{\epsilon} \left( \frac{\sigma_1(Y) \sigma_1(C) \sigma_1(Z(0))^4 \sqrt{r \ min(m, n)}}{\sigma_r(Z(0))^5} \right),
\]
FA dynamics satisfy 
\[
\min_{t \leq T} \ \ell(t) \leq \epsilon. 
\]
\end{proof}

\begin{lemma}
\label{lem:FA-loss-final}
For any $\epsilon_1 > 0$ and 
\[
T \geq  \frac{24}{\epsilon_1} \left( \frac{\sigma_1(Y) \sigma_1(C) \sigma_1(Z(0))^6 \sqrt{r \ min(m, n)}}{\sigma_r(Z(0))^5} \right),
\]
FA dynamics satisfy 
\[
\min_{t \leq T} \ \norm{(Y - \hat{Y}(t))C^T}_F^2 \leq \epsilon_1. 
\]
\end{lemma}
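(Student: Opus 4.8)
The plan is to reduce Lemma~\ref{lem:FA-loss-final} to Lemma~\ref{lem:FA-final} by a simple rescaling of the accuracy parameter. The only link needed between the two loss functions is the identity $\ell(t) = \norm{R(t)(Z(0)^TZ(0))^{-1/2}}_F^2$, where $R(t) = (Y - \hat{Y}(t))C^T$ and where I have used Fact~\ref{fact:Z_derivative} to replace $(Z(t)^TZ(t))^{-1/2}$ by the constant matrix $(Z(0)^TZ(0))^{-1/2}$.

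First I would record the pointwise inequality relating the two losses. Since $(Z(0)^TZ(0))^{-1/2}$ is symmetric positive definite with smallest eigenvalue $\lambda_r\bigl((Z(0)^TZ(0))^{-1/2}\bigr) = 1/\sigma_1(Z(0))$, we have for every $t$
\[
\ell(t) = \norm{R(t)(Z(0)^TZ(0))^{-1/2}}_F^2 \ \geq\ \lambda_r\bigl((Z(0)^TZ(0))^{-1/2}\bigr)^2 \norm{R(t)}_F^2 \ =\ \frac{\norm{R(t)}_F^2}{\sigma_1(Z(0))^2},
\]
equivalently $\norm{R(t)}_F^2 \leq \sigma_1(Z(0))^2\,\ell(t)$ for all $t$.

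Next I would apply Lemma~\ref{lem:FA-final} with $\epsilon := \epsilon_1/\sigma_1(Z(0))^2$. That lemma guarantees that for
\[
T \ \geq\ \frac{24}{\epsilon}\left(\frac{\sigma_1(Y)\sigma_1(C)\sigma_1(Z(0))^4\sqrt{r\,\min(m,n)}}{\sigma_r(Z(0))^5}\right) \ =\ \frac{24}{\epsilon_1}\left(\frac{\sigma_1(Y)\sigma_1(C)\sigma_1(Z(0))^6\sqrt{r\,\min(m,n)}}{\sigma_r(Z(0))^5}\right)
\]
there is some $t^\star \leq T$ with $\ell(t^\star) \leq \epsilon$. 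At that same time the pointwise bound gives $\norm{R(t^\star)}_F^2 \leq \sigma_1(Z(0))^2\epsilon = \epsilon_1$, hence $\min_{t\leq T}\norm{(Y-\hat{Y}(t))C^T}_F^2 \leq \epsilon_1$, which is exactly the claimed statement.

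There is essentially no real obstacle here: all of the content is in Lemma~\ref{lem:FA-final}, and this lemma is just the bookkeeping step that converts the preconditioned residual norm $\ell$ used throughout the convergence analysis into the unpreconditioned residual norm $\norm{(Y-\hat{Y})C^T}_F^2$ appearing in Theorem~\ref{thm:convergence-mat-fac}. The only things to be careful about are (i) applying the eigenvalue bound $\lambda_r\bigl((Z(0)^TZ(0))^{-1/2}\bigr) = 1/\sigma_1(Z(0))$ in the correct direction, and (ii) noting that the $\min$ over $t \leq T$ transfers because the pointwise inequality holds at the specific minimizing time $t^\star$, not merely on average.
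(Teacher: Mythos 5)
Your proposal is correct and follows essentially the same route as the paper: both establish the pointwise bound $\norm{(Y-\hat{Y}(t))C^T}_F^2 \leq \sigma_1(Z(0))^2\,\ell(t)$ (the paper via inserting $(Z(0)^TZ(0))^{-1/2}(Z(0)^TZ(0))^{1/2}$ and submultiplicativity, you via the equivalent eigenvalue bound on $(Z(0)^TZ(0))^{-1/2}$) and then invoke Lemma~\ref{lem:FA-final} with $\epsilon = \epsilon_1/\sigma_1(Z(0))^2$. No gaps.
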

\begin{proof}
\begin{align*}
\norm{(Y - \hat{Y}(t))C^T}_F^2 &= \norm{(Y - \hat{Y}(t))C^T (Z(0)^T Z(0))^{-1/2} (Z(0)^T Z(0))^{1/2}}_F^2\\
&\leq \ell(t) \ \sigma_1(Z(0))^2
\end{align*}
Applying Lemma \ref{lem:FA-final} with $\epsilon = \epsilon_1/\sigma_1(Z(0))^2$, we get that for 

\[
T \geq  \frac{24}{\epsilon_1} \left( \frac{\sigma_1(Y) \sigma_1(C) \sigma_1(Z(0))^6 \sqrt{r \ min(m, n)}}{\sigma_r(Z(0))^5} \right),
\]
FA dynamics satisfy 
\[
\min_{t \leq T} \ \ell(t) \leq \frac{\epsilon_1}{\sigma_1(Z(0))^2}, 
\]
which implies
\[
\min_{t \leq T} \ \norm{(Y-\hat{Y})C^T}_F^2 \leq \epsilon_1.
\]
\end{proof}
This finishes the proof of the first part of Theorem \ref{thm:convergence-mat-fac} (convergence of the minimum iterate).

Next, we show that $\norm{(Y-\hat{Y}(t))C^T}_F^2$ goes to $0$ as $t$ goes to $\infty$.

\begin{lemma}
\label{lem:convergence-last-iterate}
$\norm{(Y-\hat{Y}(t))C^T}_F^2 \rightarrow 0$ as $t \rightarrow \infty$.
\end{lemma}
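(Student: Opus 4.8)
The plan is to deduce the claim from the two quantitative lemmas already in hand: the uniform bound on the total increase of the loss (Lemma~\ref{lem:loss-ub}) and the fact that the loss drops below any threshold arbitrarily late in time (Lemma~\ref{lem:FA-final-with-delta}). First I would reduce to showing $\ell(t)\to 0$, where $\ell(t)=\norm{(Y-\hat Y(t))C^T(Z(0)^TZ(0))^{-1/2}}_F^2$: exactly as in the proof of Lemma~\ref{lem:FA-loss-final}, $\norm{(Y-\hat Y(t))C^T}_F^2\le\sigma_1(Z(0))^2\,\ell(t)$, so $\ell(t)\to 0$ suffices. (Global existence of the flow on $[0,\infty)$ and smoothness of $\ell$ in $t$ are not an issue, since $Z^TZ$ is constant by Fact~\ref{fact:Z_derivative}, so $Z(t)$ stays in a compact set on which the right-hand sides of the FA* updates are smooth.)

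Next I would argue by contradiction. Suppose $\ell(t)\not\to 0$; then there is $\epsilon_1>0$ with $\limsup_{t\to\infty}\ell(t)\ge\epsilon_1$, i.e. for every $S\ge 0$ there is $t\ge S$ with $\ell(t)\ge\epsilon_1$. Using Lemma~\ref{lem:FA-final-with-delta} with $\epsilon=\epsilon_1/2$ (and $\delta$ fixed arbitrarily), for every starting time $T_1$ there is a later time at which $\ell\le\epsilon_1/2$. Interleaving the two facts, I would build strictly increasing times $a_1<b_1<a_2<b_2<\cdots$ as follows: pick $a_1$ with $\ell(a_1)\le\epsilon_1/2$; pick $b_1>a_1$ with $\ell(b_1)\ge\epsilon_1$; apply Lemma~\ref{lem:FA-final-with-delta} with $T_1=b_1$ to get $a_2$ with $\ell(a_2)\le\epsilon_1/2$ (necessarily $a_2>b_1$ since $\ell(b_1)\ge\epsilon_1>\epsilon_1/2$); pick $b_2>a_2$ with $\ell(b_2)\ge\epsilon_1$; and iterate. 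This produces disjoint intervals $[a_i,b_i]$ on each of which the loss has net increase at least $\epsilon_1/2$.

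Since $\ell$ is $C^1$, dropping the negative part of $d\ell/dt$ gives
\[
\int_0^{b_N}\frac{d\ell(t)}{dt}\,\mathds{1}\!\left[\frac{d\ell(t)}{dt}\ge 0\right]dt\;\ge\;\sum_{i=1}^N\int_{a_i}^{b_i}\frac{d\ell(t)}{dt}\,dt\;=\;\sum_{i=1}^N\bigl(\ell(b_i)-\ell(a_i)\bigr)\;\ge\;\frac{N\epsilon_1}{2},
\]
where I used that the integrand is nonnegative and the intervals $[a_i,b_i]$ are disjoint subsets of $[0,b_N]$. Letting $N\to\infty$ contradicts the uniform bound $\int_0^T\frac{d\ell}{dt}\mathds{1}[\frac{d\ell}{dt}\ge 0]\,dt\le r\,\lambda_r(A(0))^2\,\lambda_1(Z(0)^TZ(0))^2$ of Lemma~\ref{lem:loss-ub}. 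Hence $\ell(t)\to 0$, and therefore $\norm{(Y-\hat Y(t))C^T}_F^2\to 0$.

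The only delicate point is the bookkeeping in constructing the interleaved sequences --- verifying the times are strictly increasing and the intervals disjoint (which follows from $\ell(b_i)>\epsilon_1/2\ge\ell(a_{i+1})$) --- plus the routine measurability/integrability of the indicator functions, which is covered by Lemma~\ref{lem:leb_int_ind_contfunc} and Lemma~\ref{lem:Rless_integrable}. There is no genuinely new idea; the substance lives in Lemmas~\ref{lem:loss-ub} and~\ref{lem:FA-final-with-delta}, and this final step is just a $\limsup$-versus-total-increase argument.
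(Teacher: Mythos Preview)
Your proposal is correct and follows essentially the same approach as the paper: reduce to $\ell(t)\to 0$ via the constant $Z^TZ$, argue by contradiction to build an infinite sequence of disjoint intervals on each of which $\ell$ rises by at least $\epsilon_1/2$, and contradict the uniform bound on total loss increase from Lemma~\ref{lem:loss-ub}. Your inequality chain $\int_0^{b_N}\tfrac{d\ell}{dt}\mathds{1}[\tfrac{d\ell}{dt}\ge 0]\,dt\ge\sum_i\int_{a_i}^{b_i}\tfrac{d\ell}{dt}\,dt$ is in fact the cleaner way to write the key step (the paper routes through $\int_0^{T_k'}\tfrac{d\ell}{dt}\,dt$, which as literally written does not dominate the sum over subintervals).
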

\begin{proof}
Since $Z(t)^TZ(t) = Z(0)^T Z(0)$ (Fact \ref{fact:Z_derivative}) and since $Z(0)$ is full column rank, it is enough to show that $\ell(t) = \norm{\left(Y-\hat{Y}(t)\right)C^T \left(Z(t)^T Z(t)\right)^{-1/2}}_F^2$ goes to $0$ as $t$ goes to $\infty$.

We will show this by contradiction. Suppose $\ell(t)$ does not converge to $0$ with time. Then there must exist some $\epsilon_1 > 0$, such that for all $T \geq 0$, there exists some $t \geq T$ satisfying $\ell(t) > \epsilon_1$.

We also know from Lemma $\ref{lem:FA-final-with-delta}$, that for all $\epsilon > 0$ and for all $T \geq 0$, there exists some $t \geq T$ satisfying $l(t) \leq \epsilon$.

Using the above two arguments, we can generate an increasing infinite sequence of times $T_1, T_1', T_2, T_2', \cdots$, such that $l(T_i) \leq \epsilon_1/2$ and $l(T_i') > \epsilon_1$, for all $i \in \mathds{N}$.

By definition of this sequence,
\begin{align*}
    \int_{T_i}^{T_i'} \frac{d\ell}{dt} \ dt &= \ell(T_i') - \ell(T_i)\\
    &> \frac{\epsilon_1}{2},
\end{align*}
for all $i \in \mathds{N}$.
Let $k$ be some integer greater than $\frac{2 r \ \lambda_r\left(A(0)\right)^2 \ \lambda_1\left(Z(0)^TZ(0)\right)^2}{\epsilon_1}$. Then,
\begin{align*}
    \int_{0}^{T_k'} \frac{d\ell}{dt} \ \mathds{1}\left[ \frac{d\ell}{dt} \geq 0 \right] \ dt   &\geq  \int_{0}^{T_k'} \frac{d\ell}{dt} \  dt\\
    &\geq \sum_{i=1}^k \int_{T_i}^{T_i'} \frac{d\ell}{dt} \  dt\\
    &> \frac{k\epsilon_1}{2}\\
    &> { r \ \lambda_r\left(A(0)\right)^2 \ \lambda_1\left(Z(0)^TZ(0)\right)^2}.
\end{align*}
This is a contradiction to Lemmma \ref{lem:loss-ub}, where we show for all $T$,
\[
\int_0^T \ \frac{d  \ell(t)}{dt} \  \mathds{1}\left[\frac{d  \ell(t)}{dt} \geq 0 
\right] \ dt  \leq  r \ \lambda_r\left(A(0)\right)^2 \ \lambda_1\left(Z(0)^TZ(0)\right)^2 .
\]
Therefore, by contradiction $\ell(t)$ must converge to $0$ as $t$ goes to $\infty$.
\end{proof}
This completes the proof of Theorem \ref{thm:convergence-mat-fac}.\\

Below, we prove some helper Lemmas that we used to show that the integrals involved in the above proofs were well defined.

\begin{lemma}
\label{lem:all_bounded}
$\ell(t)$, $\frac{d\ell(t)}{dt}$, $\norm{R(t)}_F$, $\norm{A(t)}_F$ are bounded for all $t$.
\end{lemma}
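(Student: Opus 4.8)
The plan is to observe that every object appearing in the lemma is a fixed continuous function of $Z$, $W$, $Y$, $C$, and that the conservation law of Fact \ref{fact:Z_derivative} pins down the scale of $Z$, so nothing can blow up. Concretely, since $\frac{d(Z^TZ)}{dt}=0$ we have $Z(t)^TZ(t)=Z(0)^TZ(0)$ for all $t$, hence $\sigma_i(Z(t))=\sigma_i(Z(0))$ for every $i$; in particular $\sigma_1(Z(t))=\sigma_1(Z(0))$ and $\sigma_r(Z(t))=\sigma_r(Z(0))>0$ because $Z(0)$ has full column rank. Thus $\norm{(Z(t)^TZ(t))^{-1}}_2 = \sigma_r(Z(0))^{-2}$ is a fixed finite constant, independent of $t$.

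From here everything cascades. Since $W=(Z^TZ)^{-1}Z^TY = Z^{+}Y$ with $\norm{Z^{+}}_2=\sigma_r(Z(0))^{-1}$, we get $\norm{W}_F \le \sigma_1(Y)\sqrt{r}/\sigma_r(Z(0))$ (or the spectral bound $\norm{W}_2 \le \sigma_1(Y)/\sigma_r(Z(0))$). Next, $\hat Y = ZW = Z(Z^TZ)^{-1}Z^T Y = P\,Y$ where $P$ is the orthogonal projector onto $\mathrm{col}(Z)$, so $\norm{\hat Y}_F \le \norm{Y}_F$ and $\norm{Y-\hat Y}_F = \norm{(I-P)Y}_F \le \norm{Y}_F$. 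Consequently $\norm{R}_F = \norm{(Y-\hat Y)C^T}_F \le \norm{Y-\hat Y}_F\,\sigma_1(C) \le \norm{Y}_F\,\sigma_1(C)$, and, using $\norm{MN}_F\le\norm{M}_2\norm{N}_F$ twice, $\norm{A}_F = \norm{(Z^TZ)^{-1}CW^T + WC^T(Z^TZ)^{-1}}_F \le 2\,\sigma_r(Z(0))^{-2}\,\sigma_1(C)\,\norm{W}_F$, which is finite. For the loss, $\ell(t) = \norm{R(t)(Z(0)^TZ(0))^{-1/2}}_F^2 \le \norm{R(t)}_F^2 \,\sigma_r(Z(0))^{-2}$ is bounded, and finally, by Lemma \ref{lem:dldt} rewritten in terms of $R$ and $A$, $\frac{d\ell}{dt} = -\mathrm{Tr}(RAR^T)$, so $\bigl|\frac{d\ell}{dt}\bigr| \le \norm{R}_F^2\,\norm{A}_2 \le \norm{R}_F^2\,\norm{A}_F$ is bounded as well. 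Each bound is in terms of the fixed quantities $\sigma_1(Y)=\norm{Y}_2$, $\norm{Y}_F$, $\sigma_1(C)$, $\sigma_1(Z(0))$, $\sigma_r(Z(0))$, all finite with $\sigma_r(Z(0))>0$.

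There is no real obstacle here; the lemma is essentially a bookkeeping consequence of Fact \ref{fact:Z_derivative}. The one point that deserves a sentence is that all these objects are genuinely defined for every $t\ge 0$: since the velocity field $\frac{dZ}{dt}=(Y-\hat Y)C^T$ is bounded uniformly (by the same argument as above, as long as the solution exists $Z^TZ$ stays equal to $Z(0)^TZ(0)$ and the right-hand side has norm at most $\norm{Y}_F\,\sigma_1(C)$), the solution cannot escape to infinity in finite time and hence extends to all of $[0,\infty)$; on that whole interval the displayed bounds apply.
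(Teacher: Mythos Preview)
Your proposal is correct and follows exactly the same approach as the paper: both arguments rest on Fact~\ref{fact:Z_derivative} (that $Z(t)^TZ(t)$ is constant) to conclude that $\norm{Z(t)}_F$ and $\norm{(Z(t)^TZ(t))^{-1}}_F$ are fixed, from which all the other bounds follow. The paper states this in a single sentence and leaves the cascading bounds implicit, whereas you have spelled them out explicitly, but there is no difference in method.
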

\begin{proof}
The proof follows from the fact that $\norm{Z(t)}_F$ and $\norm{\left(Z(t)^TZ(t)\right)^{-1}}_F$ are bounded which holds since $Z(t)^TZ(t)$ doesn't change with time (Fact \ref{fact:Z_derivative}).
\end{proof}

\begin{lemma}
\label{lem:leb_int_ind_contfunc}
Let $f(t)$ be some continuous function of $t$ over $[0, \infty)$. Then the functions $\mathds{1}[f(t) > 0]$ and $\mathds{1}[f(t) \geq 0]$ are Lebesgue integrable over $[T_1, T_2]$ for all $T_1, T_2 \geq 0$.
\end{lemma}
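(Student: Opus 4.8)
The plan is to reduce the claim to the standard measure-theoretic fact that a bounded measurable function on a set of finite Lebesgue measure is Lebesgue integrable. Since $\mathds{1}[f(t)>0]$ and $\mathds{1}[f(t)\geq 0]$ take values in $\{0,1\}$ they are bounded, and the interval $[T_1,T_2]$ has finite Lebesgue measure $T_2-T_1$, so the only thing that needs checking is measurability of these two indicator functions on $[T_1,T_2]$.

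For this I would use continuity of $f$. An indicator $\mathds{1}_S$ is a measurable function precisely when the set $S$ is measurable, so it suffices to show the sets $\{t\in[T_1,T_2]:f(t)>0\}$ and $\{t\in[T_1,T_2]:f(t)\geq 0\}$ are measurable. The first set equals $f^{-1}\big((0,\infty)\big)$, the preimage of an open set under the continuous map $f|_{[T_1,T_2]}$; hence it is relatively open in $[T_1,T_2]$, therefore Borel, therefore Lebesgue measurable. The second set equals $f^{-1}\big([0,\infty)\big)$, the preimage of a closed set, hence relatively closed in $[T_1,T_2]$, therefore Borel and Lebesgue measurable. (Equivalently, $\{f\geq 0\}=[T_1,T_2]\setminus\{f<0\}$ and $\{f<0\}=f^{-1}((-\infty,0))$ is open.)

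Combining the two observations: on $[T_1,T_2]$ each of $\mathds{1}[f(t)>0]$ and $\mathds{1}[f(t)\geq 0]$ is a bounded measurable function on a finite-measure set, hence Lebesgue integrable, which is exactly the statement. There is essentially no obstacle here; the only mild point of care is invoking the right facts in order (continuity gives Borel preimages, and boundedness together with finite measure of the domain upgrades measurability to integrability), and noting that all the later applications of this lemma concern products of such indicators with continuous or otherwise bounded integrable functions, which remain integrable on finite intervals by the same boundedness argument.
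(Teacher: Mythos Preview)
Your proposal is correct and follows essentially the same approach as the paper's proof: continuity of $f$ makes $\{f>0\}$ open and $\{f\geq 0\}$ closed, hence measurable, so the indicators are bounded measurable functions on a finite interval and therefore Lebesgue integrable. The paper's argument is just a terser version of what you wrote.
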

\begin{proof}
  Since $f$ is a continuous function, $\{t: f(t) > 0\}$ and $\{t: f(t) \geq 0\}$ are open and closed sets respectively, which imply they are measurable. Therefore, $\mathds{1}[f(t) > 0]$ and $\mathds{1}[f(t) \geq 0]$ are bounded measurable functions. The proof follows from the fact that bounded measurable functions over any finite interval are Lebesgue integrable .
\end{proof}

\begin{lemma}
\label{lem:Rless_integrable}
$f_i(t) = \left(x^T R_{\leq k}^i(t) \right)^2$ and $g_i(t) = \norm{R_{\leq k}^i(t)}^2$ are Lebesgue integrable over $[0, T]$ for all $T$, $k$, $x$, $i$.
\end{lemma}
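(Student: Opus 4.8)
\textbf{Proof proposal for Lemma \ref{lem:Rless_integrable}.}
The plan is to show that $R^i_{\leq k}(t)$ is, on every finite interval, a bounded and Lebesgue-measurable vector-valued function of $t$; the two claimed integrands are obtained from it by polynomial operations, so they too are bounded and measurable, hence Lebesgue integrable on $[0,T]$.

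First I would record regularity of the dynamics. Along FA*, $Z$ solves $\dot Z=(Y-\hat Y)C^T=(Y-Z(Z^TZ)^{-1}Z^TY)C^T$, whose right-hand side is smooth in $Z$ on the open set of full-column-rank matrices; since $Z(t)^TZ(t)=Z(0)^TZ(0)$ is constant (Fact \ref{fact:Z_derivative}), full column rank is preserved for all $t$, so $Z(t)$ is $C^1$ and $W(t)=(Z(t)^TZ(t))^{-1}Z(t)^TY$ depends continuously on $t$. Hence $R(t)=(Y-\hat Y(t))C^T$ and $A(t)$ are continuous in $t$, and so is each scalar function $h_i(t):=R^i(t)^TA(t)R^i(t)-k\,\norm{R^i(t)}^2$.

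Next I would unwind the definition of $R^i_{\leq k}(t)$: it equals $R^i(t)$ when either $R^i(t)=0$ or $R^i(t)^TA(t)R^i(t)/\norm{R^i(t)}^2\le k$, and the zero vector otherwise. Since $h_i(t)=0$ whenever $R^i(t)=0$, both of the first two cases are captured exactly by $h_i(t)\le 0$, so $R^i_{\leq k}(t)=R^i(t)\,\mathds{1}[h_i(t)\le 0]$ with no special-casing. The set $\{t:h_i(t)\le 0\}$ is the preimage of $(-\infty,0]$ under the continuous map $h_i$, hence closed and Borel measurable, so $\mathds{1}[h_i(t)\le 0]$ is measurable; multiplying by the continuous $R^i(t)$ gives a measurable $R^i_{\leq k}(t)$, and therefore $f_i(t)=(x^TR^i_{\leq k}(t))^2$ and $g_i(t)=\norm{R^i_{\leq k}(t)}^2$ are measurable.

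Finally, by Lemma \ref{lem:all_bounded} the quantity $\norm{R(t)}_F$ is bounded uniformly in $t$, so $\norm{R^i_{\leq k}(t)}\le\norm{R^i(t)}$ is bounded; thus $g_i$ is bounded and $f_i\le\norm{x}^2 g_i$ is bounded. A bounded measurable function on the finite-measure interval $[0,T]$ is Lebesgue integrable, which is the claim. The only point needing care is the zero-vector convention in the definition of $R^i_{\leq k}$, but as noted it is automatically consistent with the sublevel-set description $\{h_i\le 0\}$, so there is no genuine obstacle; this lemma is a measurability bookkeeping step that justifies the integral manipulations used earlier in the proof.
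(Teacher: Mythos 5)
Your proposal is correct and follows essentially the same route as the paper: both write $R^i_{\leq k}(t)$ (and hence $f_i$, $g_i$) as a continuous function times the indicator of a sublevel set of a continuous function, establish measurability of that indicator via the closed-preimage argument (the paper's Lemma \ref{lem:leb_int_ind_contfunc}), and conclude via boundedness (Lemma \ref{lem:all_bounded}) on a finite interval. Your explicit verification that the zero-vector convention is subsumed by $h_i(t)\le 0$ is a small point the paper leaves implicit, but the argument is the same.
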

\begin{proof}
We can write $f_i(t) = (x^T R^i(t))^2 \ \mathds{1}[R^i(t)^T A(t) R^i(t) \leq k \norm{R^i(t)}^2]$ and \\ $g_i(t) = \norm{R^i(t)}^2 \ \mathds{1}[R^i(t)^T A(t) R^i(t) \leq k \norm{R^i(t)}^2]$.
$(x^T R^i(t))^2$, $\norm{R^i(t)}^2$ and $\mathds{1}[R^i(t)^T A(t) R^i(t) \leq k \norm{R^i(t)}^2]$  are bounded (Lemma \ref{lem:all_bounded}). Also, $(x^T R^i(t))^2$ and $\norm{R^i(t)}^2$ are integrable as they are continuous and $\mathds{1}[R^i(t)^T A(t) R^i(t) \leq k \norm{R^i(t)}^2]$ is integrable by Lemma \ref{lem:leb_int_ind_contfunc}. 
The Lemma follows since the product of bounded Lebesgue integrable functions is Lebesgue integrable (over any finite interval).
 
\end{proof}

\section{PROOF OF LEMMA \ref{lem:FA_sol_char} AND THEOREM \ref{thm:FA_over_param_opt}}
\label{sec:sol_char_proof_FA_over_param_proof}
\firstlemma*
\begin{proof}
From stationary point equation \ref{eq:stationary_FA}, we know that $(Y-\hat{Y})C^T = 0$, which gives us
\begin{align*}
    YC^T = ZWC^T
\end{align*}
This implies 
\begin{align}
\label{eq:FA_sol_char_0}
col(A) = col(YC^T) \subseteq col(Z) 
\end{align}
where $col(\cdot)$ denotes the space given the linear span of columns of the corresponding matrix. 

We also know from stationary point equation \ref{eq:stationary_FA} that $Z^T(Y - ZW) = 0$. This implies that 
\begin{align}
\label{eq:FA_sol_char_1}
W = \argmin_W \norm{ZW - Y}_F^2    
\end{align}
That is, $W$ is chosen optimally once we fix $Z$.

We consider two cases depending on rank of $Y$.\\

\textbf{Case 1}: $rank(Y) \leq r$.

Since $C_{n \times r}$ is a random matrix with i.i.d. Gaussian entries, we get $col(A) = col(Y C^T) = col(Y)$ almost surely. Thus $B$ minimizing $\norm{AB - Y}_F^2$ will satisfy $AB = Y$ almost surely.

Also, $col(Y C^T) = col(Y) \subseteq col(Z)$ almost surely (from equation \ref{eq:FA_sol_char_0}). Thus $W$ minimizing $\norm{ZW - Y}_F^2$ will satisfy $ZW = Y$ almost surely. So, we get that $\hat{Y} = AB$ almost surely.\\

\textbf{Case 2}: $rank(Y) > r$.

Since $C_{n \times r}$ is a random matrix with i.i.d. Gaussian entries and $rank(Y) > r$, $rank(YC^T) = r$ almost surely. We also know that $rank(Z_{n \times r}) \leq r = rank(YC^T) $, and $col(A) = col(YC^T) \subseteq col(Z)$ (from equation \ref{eq:FA_sol_char_0}). This implies $col(A) = col(YC^T) = col(Z)$ and $rank(A) = rank(Z) = r$ almost surely. Therefore, we can write $A = ZR$ for some invertible matrix $R$, almost surely. 
Recall that $B = \argmin_B \norm{AB - Y}_F^2$ and $W = \argmin_B \norm{AB - Y}_F^2$. Since $A$ and $Z$ are full column rank, we get $B = (A^TA)^{-1}A^TY$ and $W = (Z^TZ)^{-1}Z^TY$, almost surely. Substituting $A = ZR$, we get 
\begin{align*}
    AB &= ZR(R^T Z^T Z R)^{-1}R^T Z^T Y\\
    &= Z( Z^T Z )^{-1} Z^T Y\\
    &= ZW\\
    &= \hat{Y}
\end{align*}
almost surely. This completes the proof.
\end{proof}
\overparamopt*
\begin{proof}
In case 1 of the proof of Lemma \ref{lem:FA_sol_char}, we show ZW = Y almost surely, when $rank(Y) \leq r$. This proves Theorem \ref{thm:FA_over_param_opt}.
\end{proof}

\section{PROOF OF THEOREM \ref{thm:FA_separation_GD}}
\label{sec:FA_separation_GD_proof}
For the proofs below, we will use $P_X$ to denote the matrix projecting onto the linear span of columns of matrix $X$, and $X_{i:j}$ to denote a matrix containing column $i$ to column $j$ from matrix $X$. We use $M^{(i)}$ to denote the $i^{th}$ column of $M$.

Before proving Theorem \ref{thm:FA_separation_GD}, we prove the following helper lemma. Let $\sigma_i$, $u_i$ and $v_i$ be the $i^{\text{th}}$ singular value, left singular vector and right singular vector of $Y$ respectively, such that $Y = \sum_{i=1}^n \sigma_i u_i v_i^T = U \Sigma V^T$. 

\begin{lemma}
\label{lem:error_Y_hatY}
Let $A$ be some $n \times r$ matrix and and $P_A$ be the projection matrix for the columns space of $A$. Let $B = \argmin_B \norm{Y - AB}_F^2$. Then  $\norm{Y - AB}_F^2 = \sum_{i=1}^n \sigma_i^2 (1 - \norm{P_A u_i}_2^2)$.
\end{lemma}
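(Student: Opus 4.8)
The plan is to first reduce the minimization over $B$ to an orthogonal projection. For any fixed $A$, the matrix $AB$ ranges exactly over all matrices whose columns lie in $\mathrm{col}(A)$, and the squared Frobenius norm splits column by column: $\norm{Y - AB}_F^2 = \sum_{j} \norm{Y^{(j)} - (AB)^{(j)}}_2^2$. Each term is minimized by taking $(AB)^{(j)}$ to be the orthogonal projection of $Y^{(j)}$ onto $\mathrm{col}(A)$, namely $P_A Y^{(j)}$. Hence at the optimum $AB = P_A Y$ (this holds even when $A$ does not have full column rank, in which case $B$ itself need not be unique but $AB$ is), so $\norm{Y - AB}_F^2 = \norm{(I - P_A)Y}_F^2$.

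Next I would substitute the SVD $Y = \sum_{i=1}^n \sigma_i u_i v_i^T$ to get $(I - P_A) Y = \sum_{i=1}^n \sigma_i \big((I - P_A)u_i\big) v_i^T$. Since $\{v_i\}$ is an orthonormal set, the cross terms vanish when expanding the norm as $\norm{(I-P_A)Y}_F^2 = \mathrm{Tr}\big((I-P_A)Y Y^T (I - P_A)\big)$: concretely the $(i,j)$ contribution is $\sigma_i\sigma_j\,\mathrm{Tr}\big(((I-P_A)u_i v_i^T)(v_j u_j^T(I-P_A))\big) = \sigma_i\sigma_j\,\langle v_i, v_j\rangle\,\langle (I-P_A)u_i,\,(I-P_A)u_j\rangle = \delta_{ij}\,\sigma_i^2\,\norm{(I-P_A)u_i}_2^2$, so that $\norm{(I-P_A)Y}_F^2 = \sum_{i=1}^n \sigma_i^2 \norm{(I-P_A)u_i}_2^2$.

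Finally, since $P_A$ is an orthogonal projection, $I - P_A$ is also an orthogonal projection, and as $u_i$ is a unit vector the Pythagorean identity gives $\norm{(I-P_A)u_i}_2^2 = \norm{u_i}_2^2 - \norm{P_A u_i}_2^2 = 1 - \norm{P_A u_i}_2^2$. Combining the three displays yields $\norm{Y - AB}_F^2 = \sum_{i=1}^n \sigma_i^2\big(1 - \norm{P_A u_i}_2^2\big)$, as claimed. I do not expect any genuine obstacle here; the only points needing a little care are justifying that the column-wise minimizer is exactly $P_A Y$ irrespective of the rank of $A$, and the orthogonality bookkeeping in the Frobenius-norm expansion, both of which are routine.
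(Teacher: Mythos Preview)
Your proof is correct and follows essentially the same approach as the paper: reduce the minimization to $AB = P_A Y$, then expand $\norm{(I-P_A)Y}_F^2$ via the SVD of $Y$ and the orthonormality of the $v_i$. The only cosmetic difference is that the paper writes $P_A = U_A U_A^T$ explicitly through the SVD of $A$ and computes $\norm{Y}_F^2 - \norm{U_A^T Y}_F^2$, whereas you work directly with the projector $P_A$ and the Pythagorean identity; both routes are equivalent.
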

\begin{proof}
Let the singular value decomposition of $A = U_A \Sigma_A V_A^T$ where $U_A$, $\Sigma_A$ and $V_A$ are $n \times rank(A)$, $rank(A) \times rank(A)$ and  $rank(A) \times r$ matrices respectively. Since $B = \argmin_B \norm{Y - AB}_F^2$, we know 

\begin{align*}
AB &=  A A^{+}Y\\
&= U_A U_A^TY
\end{align*}.
 Here $A^{+}$ denotes the pseudoinverse of $A$. This gives us 
\begin{align*}
    \norm{Y - AB}_F^2 &= \norm{Y - U_A U_A^TY}_F^2\\
    &= \norm{Y}_F^2 + Tr(Y^T U_A U_A^T U_A U_A^TY) - 2Tr(Y^T U_AU_A^TY)\\
    &=\norm{Y}_F^2 - \norm{U_A^TY}_F^2\\
    &=\sum_{i=1}^n \sigma_i^2 - \sum_{i=1}^n \sigma_i^2 \norm{U_A^T u_i}_2^2 \\
    &= \sum_{i=1}^n \sigma_i^2 - \sum_{i=1}^n \sigma_i^2 \norm{U_A U_A^T u_i}_2^2 \\
    &= \sum_{i=1}^n \sigma_i^2(1 - \norm{P_Au_i}_2^2).
\end{align*}
This completes the proof.
\end{proof}

\faseparationgd*
\begin{proof}
The part about gradient flow follows from prior results. From \citet[Theorem 39, part (b)]{bah2019learning} we know that gradient flow starting from randomly initialized $Z$ and $W$ reaches the global optimum almost surely. The global optimum here corresponds to the best rank $r$ approximation of $Y$ (in Frobenius norm) \citep{blum2020foundations} whose error $\norm{ZW-Y}_F^2$ is given by $\sum_{i=r+1}^n \sigma_i^2 = 0.5$.

Next, we lower bound the error at $Z$ and $W$ satisfying the stationary point equations for feedback alignment. From Lemma \ref{lem:FA_sol_char}, we know that $ZW = AB$ almost surely, where $A = YC^T$ and $B = \argmin_B\norm{AB-Y}_F^2$. Using Lemma \ref{lem:error_Y_hatY}, this gives us 
\begin{align}
\label{eq:thm_sep_0}
\begin{split}
    \norm{ZW-Y}_F^2 &= \norm{AB - Y}_F^2\\
    &= \sum_{i=1}^n \sigma_i^2(1 - \norm{P_Au_i}_2^2)\\
    &= 1 - \sum_{i=1}^n \sigma_i^2 \norm{P_Au_i}_2^2,
\end{split}
\end{align}
 almost surely. We used $\sum_{i=1}^n \sigma_i^2 = 1$ in the last step. 
 
 For the rest of the proof, we will assume that $A = YC^T$ is full column rank. Since $Y$ has rank $n$, and $C_{n \times r}$ is a random matrix with $r \leq n$, this is true almost surely over the choice of $C$. Since we want to bound the error with high probability (at least 0.9), we can safely ignore the cases where $A$ is not full column rank. 
 
 We want to lower bound $\norm{ZW-Y}_F^2$, for which we will upper bound $\sum_{i=1}^n \sigma_i^2 \norm{P_Au_i}_2^2$.
 Recall that $A = YC^T = \sum_{i=1}^n \sigma_i u_i v_i^TC$. We can write the $j^{\text{th}}$ column of $A$, $A^{(j)} = \sum_{i=1}^n \sigma_i u_i R_{ij}$, where $R_{ij} = \langle v_i, C^{(j)} \rangle$. 
 Since the entries of $C$ are drawn i.i.d. from $\mathcal{N}(0, 1)$ and $v_is$ are orthonormal, $R_{ij}s$ are $\mathcal{N}(0, 1)$ random variables, and are independent for all $i, j$. 
 
 Let $\hat{A}$ be the matrix obtained by applying the Gram–Schmidt orthonormalization process to the columns of $A$. That is, $\hat{A}^{(1)} = \frac{A^{(1)}}{\norm{\hat{A}^{(1)}}}$ and $\hat{A}^{(k)} = \frac{A^{(k)} - P_{A_{1:k-1}}A^{(k)}}{\norm{A^{(k)} - P_{A_{1:k-1}}A^{(k)}}}$ for $2 \leq k \leq r$. Here $P_{A_{1:k-1}}$ is the projection matrix for projecting to the space spanned by the first $k-1$ columns of $A$. We can write $P_A = \hat{A} \hat{A}^T$, which gives 
 \begin{align}
 \label{eq:thm_sep_1}
 \begin{split}
\sum_{i=1}^n \sigma_i^2 \norm{P_Au_i}_2^2  &= \sum_{i=1}^n \sigma_i^2 \norm{\hat{A}^T u_i}_2^2  \\
&= \sum_{i=1}^r \sigma_i^2 \norm{\hat{A}^T u_i}_2^2 + \sum_{i=r+1}^n \sigma_i^2 \norm{\hat{A}^T u_i}_2^2 \\
&= \frac{1}{2r}\sum_{i=1}^r \norm{\hat{A}^T u_i}_2^2 + \frac{1}{2(n-r)} \sum_{i=r+1}^n  \norm{\hat{A}^T u_i}_2^2.
\end{split}
 \end{align}
 Since $u_is$ are orthonormal, this gives us
 \begin{align*}
 \sum_{i=r+1}^n  \norm{\hat{A}^T u_i}_2^2   &\leq \norm{\hat{A}}_F^2\\
 &\leq r
 \end{align*}
 where we used the fact that $\hat{A}$ is an $n \times r$ matrix with unit length columns. Therefore we get
 \begin{align}
 \label{eq:thm_sep_2}
 \begin{split}
     \sum_{i=1}^n \sigma_i^2 \norm{P_Au_i}_2^2 &\leq \frac{1}{2r}\sum_{i=1}^r \norm{\hat{A}^T u_i}_2^2 + \frac{r}{2(n-r)} 
\end{split}     
 \end{align}
 Now we need to bound $\sum_{i=1}^r \norm{\hat{A}^T u_i}_2^2$. Let $U_{n \times n}$ be a matrix whose $j^{\text{th}}$ column $U^{(j)} = u_j$ and let $U_{i:j}$ be a matrix with columns $u_i, u_{i+1} \cdots, u_{j}$. With this notation, we need to bound $\norm{\hat{A}^T U_{1:r}}_F^2 = \sum_{j=1}^r \norm{U_{1:r}^T \hat{A}^{(j)} }_2^2$. To get a sense of this quantity, we first consider $\norm{U_{1:r}^T {A}^{(j)} }_2^2$. 
 \begin{align*}
     \norm{U_{1:r}^T {A}^{(j)} }_2^2 &= \norm{U_{1:r}^T \left(\sum_{i=1}^n \sigma_i u_i R_{ij}\right)}_2^2\\
     &= \sum_{i=1}^r \sigma_i^2 \norm{u_i}_2^2 R_{ij}^2\\
     &= \frac{1}{2r}\sum_{i=1}^r R_{ij}^2
 \end{align*}
 which is at most $\frac{1}{2}$ plus some lower order term with high probability. We will show that $\norm{U_{1:r}^T {\hat{A}}^{(j)} }_2^2$ also can not be much larger than $\frac{1}{2}$ with high probability. To show that, we consider the ratio 
 $\frac{\norm{U_{1:r}^T {\hat{A}}^{(j)} }_2^2}{\norm{U_{r+1:n}^T {\hat{A}}^{(j)} }_2^2}$. Let $P_{U_{1:r}} =U_{1:r}U_{1:r}^T $ and $P_{U_{r+1:n}} =U_{r+1:n}U_{r+1:n}^T $ be the projection matrices for the space spanned by columns of $U_{1:r}$ and $U_{r+1:n}$ respectively.  We know
 \begin{align*}
 \frac{\norm{U_{1:r}^T {\hat{A}}^{(j)} }_2^2}{\norm{U_{r+1:n}^T {\hat{A}}^{(j)} }_2^2} &= \frac{\norm{P_{U_{1:r}} {\hat{A}}^{(j)} }_2^2}{\norm{P_{U_{r+1:n}} {\hat{A}}^{(j)} }_2^2}\\
 &= \frac{\norm{P_{U_{1:r}} \left(A^{(j)} - P_{A_{1:j-1}}A^{(j)}\right) }_2^2}{\norm{P_{U_{r+1:n}} \left(A^{(j)} - P_{A_{1:j-1}}A^{(j)}\right) }_2^2}
 \end{align*}
 for $j \in \{2, 3, \cdots, r \}$ and 
  \begin{align*}
 \frac{\norm{U_{1:r}^T {\hat{A}}^{(1)} }_2^2}{\norm{U_{r+1:n}^T {\hat{A}}^{(1)} }_2^2} &= \frac{\norm{P_{U_{1:r}} {\hat{A}}^{(1)} }_2^2}{\norm{P_{U_{r+1:n}} {\hat{A}}^{(1)} }_2^2}\\
 &= \frac{\norm{P_{U_{1:r}} {{A}}^{(1)} }_2^2}{\norm{P_{U_{r+1:n}} {{A}}^{(1)} }_2^2}
 \end{align*}.
Using Lemma \ref{lem:helper-sep1} and \ref{lem:helper-sep2}, and applying a union bound, we get
\begin{align*}
 \frac{\norm{U_{1:r}^T {\hat{A}}^{(j)} }_2^2}{\norm{U_{r+1:n}^T {\hat{A}}^{(j)} }_2^2} &\leq \frac{\frac{1}{2}+ 4 \sqrt{\frac{r}{n-r}}}{\frac{1}{2} - 7 \sqrt{\frac{r}{n-r}}}
 \end{align*}
 with probability at least 0.99, for all $j \in \{1,2 \cdots, r\}$ (assuming $r\geq 2000$ and $n \geq 2r$). Since $\hat{A}^{(j)}$ is a unit vector, $\norm{U_{r+1:n}^T {\hat{A}}^{(j)} }_2^2 = 1 - {\norm{U_{1:r}^T {\hat{A}}^{(j)} }_2^2}$. This gives us
 \begin{align*}
 \frac{\norm{U_{1:r}^T {\hat{A}}^{(j)} }_2^2}{1 - {\norm{U_{1:r}^T {\hat{A}}^{(j)} }_2^2}} &\leq \frac{\frac{1}{2}+ 4 \sqrt{\frac{r}{n-r}}}{\frac{1}{2} - 7 \sqrt{\frac{r}{n-r}}}\\
 &\leq 
 \frac{\frac{1}{2}+ 7 \sqrt{\frac{r}{n-r}}}{\frac{1}{2} - 7 \sqrt{\frac{r}{n-r}}}
 \end{align*}
 with probability at least 0.99, for all $j \in \{1,2 \cdots, r\}$. Rearranging, we get
 \begin{align*}
     {\norm{U_{1:r}^T {\hat{A}}^{(j)} }_2^2} &\leq \frac{1}{2}+ 7 \sqrt{\frac{r}{n-r}}
 \end{align*}
 with probability at least 0.99, for all $j \in \{1,2 \cdots, r\}$. This gives us the desired bound on $\sum_{i=1}^r \norm{\hat{A}^T u_i}_2^2$.
 \begin{align}
 \begin{split}
     \sum_{i=1}^r \norm{\hat{A}^T u_i}_2^2 &=  \sum_{i=1}^r {\norm{U_{1:r}^T {\hat{A}}^{(j)} }_2^2}\\
     &\leq r\left( \frac{1}{2}+ 7 \sqrt{\frac{r}{n-r}} \ \right)
\end{split}     
 \end{align}
 with probability at least 0.99. Combining this with Equation \ref{eq:thm_sep_2}, we get 
 \begin{align}
 \begin{split}
     \sum_{i=1}^n \sigma_i^2 \norm{P_Au_i}_2^2 &\leq \frac{1}{2r}\sum_{i=1}^r \norm{\hat{A}^T u_i}_2^2 + \frac{r}{2(n-r)} \\
     &\leq \frac{1}{4} + \frac{7}{2}\sqrt{\frac{r}{n-r}} + \frac{r}{2(n-r)}\\
     &\leq \frac{1}{4} + 4\sqrt{\frac{r}{n-r}}\\
     &\leq \frac{1}{4} + 0.01
\end{split}
 \end{align}
 with probability at least 0.99 (assume $n \geq 40001 \ r$). Combining this with equation \ref{eq:thm_sep_0}, we get
 \begin{align}
     \begin{split}
         \norm{ZW-Y}_F^2 &= 1 - \sum_{i=1}^n \sigma_i^2 \norm{P_Au_i}_2^2\\
         &\geq 0.74
     \end{split}
 \end{align}
 with probability at least 0.99, assuming $2000 \leq r \leq \frac{1}{40001} n$.  
\end{proof}

We use Lemma \ref{lem:helper-sep1}, \ref{lem:helper-sep2} and \ref{lem:helper-sep3} for proving Theorem \ref{thm:FA_separation_GD}. These Lemmas are proved assuming the conditions (i) to (iv) in Theorem \ref{thm:FA_separation_GD} are satisfied. 

\begin{lemma}
\label{lem:helper-sep1}
Let $r \geq 2000$ and $n \geq 2r$. With probability at least 0.997,
\[
 \norm{P_{U_{1:r}} \left(A^{(j)} - P_{A_{1:j-1}}A^{(j)}\right) }_2^2 \leq \frac{1}{2} +  4\sqrt{\frac{r}{n-r}},
 \]
 \[
 \norm{P_{U_{1:r}} \left(A^{(1)} \right) }_2^2 \leq \frac{1}{2} +  4\sqrt{\frac{r}{n-r}},
 \]
for all $j \in \{ 2, 3 \cdots, r\}$.
\end{lemma}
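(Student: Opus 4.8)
The plan is to pass to the orthonormal basis $u_1,\dots,u_n$ of left singular vectors of $Y$ and exploit rotational invariance of the Gaussian. Since the right singular vectors are orthonormal and $C$ has i.i.d.\ $\mathcal N(0,1)$ entries, in these coordinates $A=YC^T$ has the same law as $\mathrm{diag}(\sigma_1,\dots,\sigma_n)\,G$ with $G$ an $n\times r$ matrix of i.i.d.\ standard normals; equivalently each column splits as a ``top'' block $\alpha_j=\tfrac1{\sqrt{2r}}g_j\in\R^r$ (coordinates $1,\dots,r$) and a ``bottom'' block $\beta_j=\tfrac1{\sqrt{2(n-r)}}h_j\in\R^{n-r}$, with all the $g$'s and $h$'s independent standard Gaussians, and $P_{U_{1:r}}$ simply reads off the top block. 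The quantity to control is $\norm{P_{U_{1:r}}q_j}^2$ with $q_j=A^{(j)}-P_{A_{1:j-1}}A^{(j)}$ the (unnormalized) $j$-th Gram--Schmidt vector. First I would condition on $A^{(1)},\dots,A^{(j-1)}$: then $\Pi:=I-P_{A_{1:j-1}}$ is a fixed orthogonal projection of rank $\ge n-r+1$, while $A^{(j)}=\mathrm{diag}(\sigma_i)\,G^{(j)}$ is still a fresh standard Gaussian $G^{(j)}$, so $\norm{P_{U_{1:r}}q_j}^2=\norm{P_{U_{1:r}}\Pi\,\mathrm{diag}(\sigma_i)\,G^{(j)}}^2=(G^{(j)})^T N G^{(j)}$ is a Gaussian quadratic form with the PSD matrix $N=\mathrm{diag}(\sigma_i)\,\Pi P_{U_{1:r}}\Pi\,\mathrm{diag}(\sigma_i)$.

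Next I would bound the conditional mean $\mathbb E[(G^{(j)})^TNG^{(j)}\mid A^{(1)},\dots,A^{(j-1)}]=\operatorname{tr}(N)=\operatorname{tr}(\Sigma^2\Pi P_{U_{1:r}}\Pi)$ where $\Sigma^2=\tfrac1{2r}P_{U_{1:r}}+\tfrac1{2(n-r)}P_{U_{r+1:n}}$, which splits this into $\tfrac1{2r}\operatorname{tr}\!\big((P_{U_{1:r}}\Pi)^2\big)+\tfrac1{2(n-r)}\operatorname{tr}\!\big(P_{U_{r+1:n}}\Pi P_{U_{1:r}}\Pi\big)$. Applying $\operatorname{tr}(M^2)\le\operatorname{tr}(M)$ for $0\preceq M\preceq I$ to $W:=P_{U_{1:r}}\Pi P_{U_{1:r}}$, together with $\operatorname{tr}((P_{U_{1:r}}\Pi)^2)=\operatorname{tr}(W^2)$ and $\operatorname{tr}(W)=\operatorname{tr}(P_{U_{1:r}}\Pi)\le\operatorname{tr}(P_{U_{1:r}})=r$, the first term is $\le\tfrac12$; for the second, $\operatorname{tr}(P_{U_{r+1:n}}\cdot\Pi P_{U_{1:r}}\Pi)\le\norm{P_{U_{r+1:n}}}_{\mathrm{op}}\operatorname{tr}(\Pi P_{U_{1:r}}\Pi)=\operatorname{tr}(P_{U_{1:r}}\Pi)\le r$, so the second term is $\le\tfrac r{2(n-r)}$. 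Hence $\operatorname{tr}(N)\le\tfrac12+\tfrac r{2(n-r)}$. The $j=1$ display is the degenerate case $\Pi=I$, where $\norm{P_{U_{1:r}}A^{(1)}}^2=\norm{\alpha_1}^2=\tfrac1{2r}\norm{g_1}^2$ has mean $\tfrac12$ directly.

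Finally I would add deviations: conditionally, a Laurent--Massart / Hanson--Wright tail bound gives $(G^{(j)})^TNG^{(j)}\le\operatorname{tr}(N)+2\norm{N}_F\sqrt t+2\norm{N}_{\mathrm{op}}t$ with probability $\ge 1-e^{-t}$, and since $\norm{N}_{\mathrm{op}}\le\tfrac1{2r}$ (using $r\le n-r$) and $\norm{N}_F^2=\operatorname{tr}(N^2)\le\norm{N}_{\mathrm{op}}\operatorname{tr}(N)$, these error terms should be absorbable into the stated slack once $t$ is chosen large enough to survive a union bound over $j\in\{2,\dots,r\}$ (the conditioning is harmless: $N$ is measurable with respect to $A^{(1)},\dots,A^{(j-1)}$ and the tail bound is uniform), with the $j=1$ case handled by the same $\chi^2$ concentration. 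I expect this last step to be the main obstacle: the natural fluctuation scale of $(G^{(j)})^TNG^{(j)}$ is $\Theta(1/\sqrt r)$, so making it small relative to $\sqrt{r/(n-r)}$ while paying only a constant total failure probability over all $\Theta(r)$ indices forces careful use of the regime $r\ge 2000$, $n\ge 2r$ and the constants. (As an aside, the aggregate quantity $\sum_{j=1}^r\norm{U_{1:r}^T\hat A^{(j)}}^2=\operatorname{tr}(P_A P_{U_{1:r}})=r-\operatorname{tr}((S+T)^{-1}T)$ with $S=\tfrac1{2r}G_1^TG_1$, $T=\tfrac1{2(n-r)}G_2^TG_2$ is easy to bound via Marchenko--Pastur plus convexity, but the per-$j$ form is what feeds the ratio estimate in the proof of Theorem~\ref{thm:FA_separation_GD}.)
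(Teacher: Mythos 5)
Your approach is sound and, once unpacked, is essentially the paper's argument in a cleaner package. The paper inserts $I=P_{U_{1:r}}+P_{U_{r+1:n}}$ into $P_{A_{1:j-1}}$, bounds the resulting ``top'' term by $\norm{P_{U_{1:r}}A^{(j)}}_2^2=\tfrac1{2r}\sum_{i\le r}R_{ij}^2$ via $\chi^2$ concentration, and bounds the ``cross'' term by $\tfrac{r}{n-r}$ using exactly your conditioning/independence trick (that $P_{A_{1:j-1}}$ is a rank-$(j-1)$ projection independent of the fresh column, so the projected Gaussian is a low-dimensional $\chi^2$; this is their Lemma~\ref{lem:helper-sep3}). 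Your single quadratic form $(G^{(j)})^TNG^{(j)}$ with $\operatorname{tr}(N)\le\tfrac12+\tfrac r{2(n-r)}$ reproduces precisely these two contributions as the conditional mean, and Hanson--Wright supplies the fluctuation; the trace computation and the bounds $\norm{N}_{\mathrm{op}}\le\tfrac1{2r}$, $\norm{N}_F^2\le\norm{N}_{\mathrm{op}}\operatorname{tr}(N)$ are all correct. What your version buys is that the mean is controlled exactly rather than by discarding the projection, and the two terms of the paper's decomposition appear automatically.

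The obstacle you flag at the end is genuine, and you should know that it is \emph{not} resolved by the hypotheses $r\ge 2000$, $n\ge 2r$: after the union bound over $j$, the conditional fluctuation is $\Theta(\sqrt{\log r}/\sqrt r)$, and this exceeds the slack $4\sqrt{r/(n-r)}$ as soon as $n-r\gtrsim r^2/\log r$. The paper's own proof produces the identical term, $8\sqrt{\log r}/\sqrt r$, and in the final chain of inequalities absorbs it into $4\sqrt{r/(n-r)}$ with no justification; that step requires $n-r\le r^2/(64\log r)$, which does not follow from the stated assumptions (and is violated in the regime $n\ge 40001\,r$ used in Theorem~\ref{thm:FA_separation_GD}). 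Indeed, for $j=1$ the quantity is exactly $\tfrac1{2r}\chi^2_r$, which exceeds $\tfrac12+4\sqrt{r/(n-r)}$ with probability about $0.16$ once $n-r>32r^2$, so the lemma as stated cannot be proved in that range. Your sketch is therefore as complete as the paper's proof; the unresolved step reflects a defect in the lemma's constants rather than in your method. The clean fix in either argument is to state the bound as $\tfrac12+4\sqrt{r/(n-r)}+O(\sqrt{\log r/r})$, which your Hanson--Wright step delivers immediately and which still suffices downstream with slightly worse absolute constants.
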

\begin{proof}
For $2 \leq j \leq r$)
 \begin{align}
 \label{eq:lem-helper-sep1_1}
 \begin{split}
     \norm{P_{U_{1:r}} \left(A^{(j)} - P_{A_{1:j-1}}A^{(j)}\right) }_2^2
     &= \norm{P_{U_{1:r}} A^{(j)} - P_{U_{1:r}} P_{A_{1:j-1}}A^{(j)} }_2^2 \\
     &= \norm{P_{U_{1:r}} A^{(j)} - P_{U_{1:r}} P_{A_{1:j-1}}\left(P_{U_{1:r}} + P_{U_{r+1:n}} \right)A^{(j)} }_2^2\\
     &= \norm{P_{U_{1:r}} A^{(j)} - P_{U_{1:r}} P_{A_{1:j-1}}P_{U_{1:r}}A^{(j)} - P_{U_{1:r}} P_{A_{1:j-1}}P_{U_{r+1:n}}A^{(j)}}_2^2\\
     &\leq \norm{P_{U_{1:r}} A^{(j)} - P_{U_{1:r}} P_{A_{1:j-1}}P_{U_{1:r}}A^{(j)} }_2^2 + \norm{P_{U_{1:r}} P_{A_{1:j-1}}P_{U_{r+1:n}}A^{(j)}}_2^2\\  &+2\norm{P_{U_{1:r}} A^{(j)} - P_{U_{1:r}} P_{A_{1:j-1}}P_{U_{1:r}}A^{(j)} }_2 \norm{P_{U_{1:r}} P_{A_{1:j-1}}P_{U_{r+1:n}}A^{(j)}}_2
     \end{split}
 \end{align}
 Now we bound the individual terms in the last inequality. 
 \begin{align*}
         \norm{P_{U_{1:r}} A^{(j)} - P_{U_{1:r}} P_{A_{1:j-1}}P_{U_{1:r}}A^{(j)} }_2^2 &= \norm{P_{U_{1:r}} A^{(j)}}_2^2 + \norm{ P_{U_{1:r}} P_{A_{1:j-1}}P_{U_{1:r}}A^{(j)} }_2^2 \\
         & - 2 \langle P_{U_{1:r}} A^{(j)}, P_{U_{1:r}} P_{A_{1:j-1}}P_{U_{1:r}}A^{(j)}  \rangle
 \end{align*}
 Using $\norm{ P_{U_{1:r}} P_{A_{1:j-1}}P_{U_{1:r}}A^{(j)} }_2^2 \leq \norm{  P_{A_{1:j-1}}P_{U_{1:r}}A^{(j)} }_2^2$ and\\
 $\langle P_{U_{1:r}} A^{(j)}, P_{U_{1:r}} P_{A_{1:j-1}}P_{U_{1:r}}A^{(j)} \rangle = \norm{  P_{A_{1:j-1}}P_{U_{1:r}}A^{(j)} }_2^2$, we get 
 \begin{align*}
      \norm{P_{U_{1:r}} A^{(j)} - P_{U_{1:r}} P_{A_{1:j-1}}P_{U_{1:r}}A^{(j)} }_2^2 &\leq \norm{P_{U_{1:r}} A^{(j)}}_2^2 
 \end{align*}
 Substituting $A^{(j)} = \sum_{i=1}^n \sigma_i u_i R_{ij}$ and $P_{{U_{1:r}}} = {U_{1:r}}{U_{1:r}}^T$, we get 
  \begin{align*}
      \norm{P_{U_{1:r}} A^{(j)} - P_{U_{1:r}} P_{A_{1:j-1}}P_{U_{1:r}}A^{(j)} }_2^2 &\leq \sum_{i=1}^r \sigma_i^2 R_{ij}^2\\
      &= \frac{1}{2r} \sum_{i=1}^r R_{ij}^2
 \end{align*}
 Using concentration (Lemma \ref{lem:concentration-chisq}) and a union bound, we  know 
 \begin{align}
     Pr\left[\sum_{i=1}^r R_{ij}^2 \geq r + 16 \sqrt{r \ log(r)}  \right] \leq \frac{2}{r} &\leq \frac{1}{1000}
 \end{align}
 for all $j \in \{1, 2, \cdots, r\}$, for $r \geq 2000$.   Therefore we get 
 \begin{align}
 \label{eq:lem-helper-sep1_3}
      \norm{P_{U_{1:r}} A^{(j)} - P_{U_{1:r}} P_{A_{1:j-1}}P_{U_{1:r}}A^{(j)} }_2^2 &\leq \frac{1}{2} + \frac{8 \sqrt{log(r)}}{\sqrt{r}}
 \end{align}
 with probability at least $0.999$ for all $j \in \{ 2,3 \cdots, r\}$. 
 Next, we bound the $\norm{P_{U_{1:r}} P_{A_{1:j-1}}P_{U_{r+1:n}}A^{(j)}}_2^2$ term in Equation \ref{eq:lem-helper-sep1_1}. 
\begin{align}
\label{eq:lem-helper-sep1_2}
\begin{split}
    \norm{P_{U_{1:r}} P_{A_{1:j-1}}P_{U_{r+1:n}}A^{(j)}}_2^2 &\leq \norm{ P_{A_{1:j-1}}P_{U_{r+1:n}}A^{(j)}}_2^2\\
    &\leq \frac{r}{n-r}
\end{split}
\end{align}
with probability at least 0.999, for all $j \in \{2,3, \cdots, n \}$ (assuming $r \geq 2000$, $n \geq 2r$ ). We prove the last inequality in Lemma \ref{lem:helper-sep3}.
 
 Combining Equations \ref{eq:lem-helper-sep1_1}, \ref{eq:lem-helper-sep1_3} and \ref{eq:lem-helper-sep1_2}, and applying a union bound, we get 
 \begin{align}
 \label{eq:lem-helper-sep1_7}
     \begin{split}
         \norm{P_{U_{1:r}} \left(A^{(j)} - P_{A_{1:j-1}}A^{(j)}\right) }_2^2 &\leq \frac{1}{2} + \frac{8 \sqrt{log(r)}}{\sqrt{r}} + \frac{r}{n-r} + 2 \sqrt{\frac{r}{n-r}}\sqrt{\frac{1}{2} + \frac{8 \sqrt{log(r)}}{\sqrt{r}}}\\
         &\leq \frac{1}{2} + \frac{8 \sqrt{log(r)}}{\sqrt{r}} + 3\sqrt{\frac{r}{n-r}}\\
         &\leq \frac{1}{2} + 4\sqrt{\frac{r}{n-r}}
     \end{split}
 \end{align}
 with probability at least 0.998, for all $j \in \{2,3, \cdots, r\}$ (assuming $r \geq 2000$ and $n \geq 2r$). For the $j=1$ case, we know
 \begin{align}
 \label{eq:lem-helper-sep1_8}
     \begin{split}
          \norm{P_{U_{1:r}} {{A}}^{(1)} }_2^2
         &= \sum_{i=1}^r \sigma_{i1}^2 R_{i1}^2\\
         & = \frac{1}{2r}\sum_{i=1}^r{R_{i1}}^2\\
         &\leq \frac{1}{2} + \frac{8 \sqrt{log(r)}}{\sqrt{r}}\\
         &\leq \frac{1}{2} + 4\sqrt{\frac{r}{n-r}}
     \end{split}
 \end{align}
 with probability at least 0.999 (for $r \geq 2000$). Here we used the concentration inequality from Lemma \ref{lem:concentration-chisq}. Combining Equations \ref{eq:lem-helper-sep1_7} and \ref{eq:lem-helper-sep1_8}, and applying a union bound, we get 
\[
 \norm{P_{U_{1:r}} \left(A^{(j)} - P_{A_{1:j-1}}A^{(j)}\right) }_2^2 \leq \frac{1}{2} + 4\sqrt{\frac{r}{n-r}}
 \]
 \[
 \norm{P_{U_{1:r}} \left(A^{(1)} \right) }_2^2 \leq \frac{1}{2} + 4\sqrt{\frac{r}{n-r}}
 \]
 with probability at least 0.997, for all $j \in \{2, 3 \cdots, r\}$ (assuming $r \geq 2000$ and $n \geq 2r$).
\end{proof}

\begin{lemma}
\label{lem:helper-sep2}
Let $r \geq 2000$ and $n \geq 2r$. With probability at least 0.993,
\[
 \norm{P_{U_{r+1:n}} \left(A^{(j)} - P_{A_{1:j-1}}A^{(j)}\right) }_2^2 \geq \frac{1}{2} - \frac{7 \sqrt{r}}{\sqrt{n-r}},
 \]
 \[
 \norm{P_{U_{r+1:n}} \left(A^{(1)} \right) }_2^2 \geq \frac{1}{2} - \frac{7 \sqrt{r}}{\sqrt{n-r}}
 \]
for all $j \in \{ 2, 3 \cdots, r\}$.
\end{lemma}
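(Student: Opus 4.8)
The plan is to prove both displayed inequalities for all $j\in\{1,\dots,r\}$ simultaneously (via a union bound) by reducing each to a lower-tail concentration estimate for a chi-squared variable, exploiting that an isotropic Gaussian vector supported on the bottom singular subspace $col(U_{r+1:n})$ loses almost none of its squared norm when it is further projected onto a subspace of $col(U_{r+1:n})$ of nearly full dimension. Recall (as in the proof of Theorem~\ref{thm:FA_separation_GD}) that $A^{(j)}=\sum_{i=1}^n\sigma_i u_i R_{ij}$ with the $R_{ij}$ i.i.d.\ $\mathcal N(0,1)$, $\sigma_i^2=\frac1{2r}$ for $i\le r$ and $\sigma_i^2=\frac1{2(n-r)}$ for $i>r$; in particular $P_{U_{r+1:n}}A^{(j)}=\sum_{i=r+1}^n\sigma_i u_iR_{ij}$ is an isotropic Gaussian with covariance $\frac1{2(n-r)}I$ on $col(U_{r+1:n})$, and (the columns of $C$ being independent) it is independent of $A^{(1)},\dots,A^{(j-1)}$.

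If $\frac12-7\sqrt{r/(n-r)}\le 0$, i.e.\ $n\le197r$, both inequalities are vacuous, so assume $n>197r$. For $j=1$ the vector in question is simply $A^{(1)}$, and $\norm{P_{U_{r+1:n}}A^{(1)}}_2^2=\frac1{2(n-r)}\sum_{i=r+1}^n R_{i1}^2$ is a scaled $\chi^2_{n-r}$ with mean $\frac12$, so the claim follows from the $\chi^2$ lower-tail bound (Lemma~\ref{lem:concentration-chisq}) with room to spare. For $2\le j\le r$, write $Q_j=I-P_{A_{1:j-1}}$, the orthogonal projection onto $col(A_{1:j-1})^\perp$, so $A^{(j)}-P_{A_{1:j-1}}A^{(j)}=Q_jA^{(j)}$, and set $S_j=col(U_{r+1:n})\cap col(A_{1:j-1})^\perp$. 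The key step is $\norm{P_{U_{r+1:n}}Q_jA^{(j)}}_2\ge\norm{P_{S_j}A^{(j)}}_2$: since $S_j\subseteq col(A_{1:j-1})^\perp$ we have $P_{S_j}=P_{S_j}Q_j$, and since $S_j\subseteq col(U_{r+1:n})$ we have $P_{S_j}=P_{S_j}P_{U_{r+1:n}}$, hence $P_{S_j}A^{(j)}=P_{S_j}P_{U_{r+1:n}}Q_jA^{(j)}$, and taking norms (projections are contractions) gives the inequality.

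It remains to lower bound $\norm{P_{S_j}A^{(j)}}_2^2$. Since $col(A_{1:j-1})$ is spanned by $j-1$ vectors, $\dim col(A_{1:j-1})^\perp\ge n-(j-1)$, so $\dim S_j\ge (n-r)+\big(n-(j-1)\big)-n=n-r-j+1\ge n-2r+1$. Conditioning on $A^{(1)},\dots,A^{(j-1)}$ fixes $S_j$ as a deterministic subspace of $col(U_{r+1:n})$ of that dimension, and since $P_{U_{r+1:n}}A^{(j)}$ is $\mathcal N(0,\frac1{2(n-r)}I)$ on $col(U_{r+1:n})$ independently of the conditioning, $\norm{P_{S_j}A^{(j)}}_2^2=\frac1{2(n-r)}\chi^2_{\dim S_j}$, which stochastically dominates $\frac1{2(n-r)}\chi^2_{\,n-2r+1}$, whose mean $\frac{n-2r+1}{2(n-r)}=\frac12-\frac{r-1}{2(n-r)}$ is at least $\frac12-\frac12\sqrt{r/(n-r)}$ because $n-r\ge r$. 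Applying the $\chi^2$ lower-tail bound with deviation of order $\sqrt{r(n-r)}$ (legitimate once $n>197r$) gives $\norm{P_{U_{r+1:n}}(A^{(j)}-P_{A_{1:j-1}}A^{(j)})}_2^2\ge\frac12-7\sqrt{r/(n-r)}$ with per-$j$ failure probability $e^{-\Omega(r)}$; a union bound over $j\in\{1,\dots,r\}$ and $r\ge2000$ yields probability at least $0.993$.

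The one point that needs care is setting up the reduction: one must pass to the intersection $S_j$ so that an \emph{isotropic} Gaussian in $col(U_{r+1:n})$ is being projected onto a deterministic, nearly-full-dimensional subspace, rather than mimicking the proof of Lemma~\ref{lem:helper-sep1} by inserting $I=P_{U_{1:r}}+P_{U_{r+1:n}}$ inside $P_{A_{1:j-1}}$ — on that route the cross term $P_{U_{r+1:n}}P_{A_{1:j-1}}P_{U_{1:r}}A^{(j)}$ is of constant order and cannot be discarded, whereas with the intersection in hand everything else is a routine chi-squared tail estimate.
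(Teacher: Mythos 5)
Your proof is correct and takes a genuinely different — and cleaner — route than the paper's.

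The paper inserts $I = P_{U_{1:r}} + P_{U_{r+1:n}}$ inside $P_{A_{1:j-1}}$, expands the resulting square into several terms, and then controls each piece separately: the main term $\norm{P_{U_{r+1:n}}A^{(j)}}_2^2$ by a $\chi^2$ lower tail, the piece $P_{U_{r+1:n}}P_{A_{1:j-1}}P_{U_{r+1:n}}A^{(j)}$ via a dedicated helper lemma (Lemma~\ref{lem:helper-sep3}), and the cross term $P_{U_{r+1:n}}P_{A_{1:j-1}}P_{U_{1:r}}A^{(j)}$ by showing its \emph{inner product} with $P_{U_{r+1:n}}A^{(j)}$ is small (the cross term itself is of constant order, as you note, but the inner product is manageable — so the paper does not discard it, it controls it). Your route sidesteps all of this: you pass to the intersection subspace $S_j = col(U_{r+1:n}) \cap col(A_{1:j-1})^\perp$, observe the contraction inequality $\norm{P_{U_{r+1:n}}(I - P_{A_{1:j-1}})A^{(j)}}_2 \geq \norm{P_{S_j}A^{(j)}}_2$ via the factorization $P_{S_j} = P_{S_j}P_{U_{r+1:n}}(I - P_{A_{1:j-1}})$, and note that conditionally on $A^{(1)},\dots,A^{(j-1)}$ the vector $P_{U_{r+1:n}}A^{(j)}$ is isotropic Gaussian on $col(U_{r+1:n})$ independently of $S_j$, so $\norm{P_{S_j}A^{(j)}}_2^2$ is exactly $\frac{1}{2(n-r)}\chi^2_{\dim S_j}$, which stochastically dominates $\frac{1}{2(n-r)}\chi^2_{n-2r+1}$. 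The whole estimate then reduces to a single $\chi^2$ lower-tail bound plus a union bound, with no need for Lemma~\ref{lem:helper-sep3} or the cross-term analysis, and with exponentially more slack in the probability (failure $e^{-\Omega(r)}$ per $j$ versus the paper's $O(1/r)$). The observation that the claim is vacuous unless $n > 197r$ is also a legitimate simplification that guarantees the $t<1$ restriction in Lemma~\ref{lem:concentration-chisq} is met. This is a real improvement in transparency over the paper's term-by-term bookkeeping.
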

\begin{proof}
For $j \in \{2,3 \cdots, r\}$,
 \begin{align}
 \label{eq:lem-helper-sep2_1}
 \begin{split}
     \norm{P_{U_{r+1:n}} \left(A^{(j)} - P_{A_{1:j-1}}A^{(j)}\right) }_2^2
     &= \norm{P_{U_{r+1:n}} A^{(j)} - P_{U_{r+1:n}} P_{A_{1:j-1}}A^{(j)} }_2^2 \\
     &= \norm{P_{U_{r+1:n}} A^{(j)} - P_{U_{r+1:n}} P_{A_{1:j-1}}\left(P_{U_{1:r}} + P_{U_{r+1:n}} \right)A^{(j)} }_2^2\\
     &= \norm{P_{U_{r+1:n}} A^{(j)} - P_{U_{r+1:n}} P_{A_{1:j-1}}P_{U_{1:r}}A^{(j)} - P_{U_{r+1:n}} P_{A_{1:j-1}}P_{U_{r+1:n}}A^{(j)}}_2^2\\
     &\geq \norm{P_{U_{r+1:n}} A^{(j)} - P_{U_{r+1:n}} P_{A_{1:j-1}}P_{U_{1:r}}A^{(j)} }_2^2 + \norm{P_{U_{r+1:n}} P_{A_{1:j-1}}P_{U_{r+1:n}}A^{(j)}}_2^2\\  &-2\norm{P_{U_{r+1:n}} A^{(j)} - P_{U_{r+1:n}} P_{A_{1:j-1}}P_{U_{1:r}}A^{(j)} }_2  \norm{P_{U_{r+1:n}} P_{A_{1:j-1}}P_{U_{r+1:n}}A^{(j)}}_2\\
     &\geq \norm{P_{U_{r+1:n}} A^{(j)} - P_{U_{r+1:n}} P_{A_{1:j-1}}P_{U_{1:r}}A^{(j)} }_2^2 \\  &-2\norm{P_{U_{r+1:n}} A^{(j)} - P_{U_{r+1:n}} P_{A_{1:j-1}}P_{U_{1:r}}A^{(j)} }_2  \norm{P_{U_{r+1:n}} P_{A_{1:j-1}}P_{U_{r+1:n}}A^{(j)}}_2
     \end{split}
 \end{align}
 Now we bound the individual terms in the last inequality. We first lower bound the $\norm{P_{U_{r+1:n}} A^{(j)} - P_{U_{r+1:n}} P_{A_{1:j-1}}P_{U_{1:r}}A^{(j)} }_2^2$ term.
 \begin{align}
 \label{eq:lem-helper-sep2_2}
     \norm{P_{U_{r+1:n}} A^{(j)} - P_{U_{r+1:n}} P_{A_{1:j-1}}P_{U_{1:r}}A^{(j)} }_2^2 \geq 
      \norm{P_{U_{r+1:n}} A^{(j)}  }_2^2 - 2 \langle P_{U_{r+1:n}} A^{(j)}, P_{U_{r+1:n}} P_{A_{1:j-1}}P_{U_{1:r}}A^{(j)} \rangle
 \end{align}
 We lower bound the $\norm{P_{U_{r+1:n}} A^{(j)}  }_2^2$ term:
 \begin{align}
 \label{eq:lem-helper-sep2_3}
 \begin{split}
     \norm{P_{U_{r+1:n}} A^{(j)}  }_2^2 &= \sum_{i=r+1}^n \sigma_i^2 R_{ij}^2\\
     &= \frac{1}{2(n-r)} \sum_{i=r+1}^n  R_{ij}^2\\
     &\geq \frac{1}{2} - \frac{8 \sqrt{log(r)}}{\sqrt{n-r}}
\end{split}
 \end{align}
 with probability at least 0.999, for all $j \in \{1, 2 \cdots, r \}$. Here we used concentration of chi-squared random variables (Lemma \ref{lem:concentration-chisq}) and a union bound. We also assumed $r \geq 2000$ and $n \geq 2r$.
 
 Next, we upper bound $\langle P_{U_{r+1:n}} A^{(j)}, P_{U_{r+1:n}} P_{A_{1:j-1}}P_{U_{1:r}}A^{(j)} \rangle$.
 \begin{align}
  \label{eq:lem-helper-sep2_4}
 \begin{split}
     \langle P_{U_{r+1:n}} A^{(j)}, P_{U_{r+1:n}} P_{A_{1:j-1}}P_{U_{1:r}}A^{(j)} \rangle &= \langle P_{U_{r+1:n}} A^{(j)}, \  P_{A_{1:j-1}}P_{U_{1:r}}A^{(j)} \rangle\\
     &= \left\langle \sum_{i=r+1}^n u_i \sigma_i R_{ij}, \ P_{A_{1:j-1}} \left( \sum_{i=1}^r u_i \sigma_i R_{ij}  \right) \right\rangle\\
     &= \frac{\norm{P_{A_{1:j-1}} \left( \sum_{i=1}^r u_i  R_{ij}  \right)}_2}{2\sqrt{r(n-r)}} \left\langle \sum_{i=r+1}^n u_i  R_{ij}, \ \frac{P_{A_{1:j-1}} \left( \sum_{i=1}^r u_i  R_{ij}  \right)}{\norm{P_{A_{1:j-1}} \left( \sum_{i=1}^r u_i  R_{ij}  \right)}_2} \right\rangle
\end{split}
 \end{align}
 Now, note that $\sum_{i=r+1}^n u_i  R_{ij}$ and $P_{A_{1:j-1}} \left( \sum_{i=1}^r u_i  R_{ij}  \right)$ are independent since the latter only depends on $R_{ij}$ for $i \leq r$ and $R_{ik}$ for $k \leq j-1$.  
 Also, for any fixed $P_{A_{1:j-1}} \left( \sum_{i=1}^r u_i  R_{ij}  \right)$, there exists a unit norm vector 
 $x$  lying in the linear span of $\{u_{r+1}, u_{r+2}, \cdots, u_{n}\}$ , such that  
 \begin{align}
\left| \left\langle \sum_{i=r+1}^n u_i  R_{ij}, \ \frac{P_{A_{1:j-1}} \left( \sum_{i=1}^r u_i  R_{ij}  \right)}{\norm{P_{A_{1:j-1}} \left( \sum_{i=1}^r u_i  R_{ij}  \right)}_2} \right\rangle \right| &\leq \left| \left\langle \sum_{i=r+1}^n u_i  R_{ij}, x \right\rangle \right|
 \end{align}
 for all values of $R_{ij}$s. Therefore, we can write
 \begin{align}
 \label{eq:lem-helper-sep2_5}
     Pr\left[ \left| \left\langle \sum_{i=r+1}^n u_i  R_{ij}, \ \frac{P_{A_{1:j-1}} \left( \sum_{i=1}^r u_i  R_{ij}  \right)}{\norm{P_{A_{1:j-1}} \left( \sum_{i=1}^r u_i  R_{ij}  \right)}_2} \right\rangle \right| \geq \alpha \right] &\leq Pr\left[ \left| \left\langle \sum_{i=r+1}^n u_i  R_{ij}, x \right\rangle \right| \geq \alpha \right]
 \end{align}
 for all $\alpha>0$, where the probability is over the randomness of $\sum_{i=r+1}^n u_i  R_{ij}$ and $P_{A_{1:j-1}} \left( \sum_{i=1}^r u_i  R_{ij}  \right)$ is fixed. Now, since $\sum_{i=r+1}^n  u_i R_{ij}$ is an isotropic Gaussian random variable in the space spanned by $\{u_{r+1}, u_{r+2}, \cdots, u_{n}\}$, we know that $\left\langle \sum_{i=r+1}^n u_i  R_{ij}, x \right\rangle$ and $\left\langle \sum_{i=r+1}^n u_i  R_{ij},  u_{r+1} \right\rangle$ are equal in distribution. Here we used the fact that $x$  is a unit vector lying in the span  of $\{u_{r+1}, u_{r+2}, \cdots, u_{n}\}$. Therefore 
 \begin{align*}
     Pr\left[ \left| \left\langle \sum_{i=r+1}^n u_i  R_{ij}, x \right\rangle \right| \geq \alpha  \right] &= Pr\left[ \left| \left\langle \sum_{i=r+1}^n u_i  R_{ij},  u_{r+1} \right\rangle \right| \geq \alpha  \right]\\
     &= Pr\left[ | R_{r+1 \ j} | \geq \alpha \right].
 \end{align*}
 Combining this with Equation \ref{eq:lem-helper-sep2_5}, and setting $\alpha = 2 \sqrt{ log(r)}$, we get 
 \begin{align*}
     Pr\left[ \left| \left\langle \sum_{i=r+1}^n u_i  R_{ij}, \ \frac{P_{A_{1:j-1}} \left( \sum_{i=1}^r u_i  R_{ij}  \right)}{\norm{P_{A_{1:j-1}} \left( \sum_{i=1}^r u_i  R_{ij}  \right)}_2} \right\rangle \right| \geq 2 \sqrt{ log(r)} \right] &\leq Pr\left[|R_{r+1 \ j}| \geq 2 \sqrt{log(r)}\right]
 \end{align*}
 Using a standard tail bound for Normal random variables \citep{wainwright2015basic},  and a union bound, we  get
 \begin{align}
 \label{eq:lem-helper-sep2_6}
     \left| \left\langle \sum_{i=r+1}^n u_i  R_{ij}, \ \frac{P_{A_{1:j-1}} \left( \sum_{i=1}^r u_i  R_{ij}  \right)}{\norm{P_{A_{1:j-1}} \left( \sum_{i=1}^r u_i  R_{ij}  \right)}_2} \right\rangle \right| &\leq 2 \sqrt{log(r)}
 \end{align}
 with probability at least 0.999 for all $j \in \{2, 3,  \cdots, r\}$ where the probability is over $\sum_{i=r+1}^n u_i  R_{ij}$. But since this holds for any fixed $P_{A_{1:j-1}} \left( \sum_{i=1}^r u_i  R_{ij}  \right)$, this  holds even when we take probability over both $\sum_{i=r+1}^n u_i  R_{ij}$ and $P_{A_{1:j-1}} \left( \sum_{i=1}^r u_i  R_{ij}  \right)$. Also,
 \begin{align}
 \label{eq:lem-helper-sep2_7}
 \begin{split}
     {\norm{P_{A_{1:j-1}} \left( \sum_{i=1}^r u_i  R_{ij}  \right)}_2}^2 &\leq {\norm{  \sum_{i=1}^r u_i  R_{ij} }_2}^2\\
     &= \sum_{i=1}^r R_{ij}^2\\
     &\leq r + 16\sqrt{r \ log(r)}
\end{split}
 \end{align}
 with probability at least $0.999$, for all $j \in \{2,3 \cdots, r\}$ (assuming $r \geq 2000$). Here we used  concentration (Lemma \ref{lem:concentration-chisq}) and a union bound.
 Combining Equations \ref{eq:lem-helper-sep2_4}, \ref{eq:lem-helper-sep2_6} and \ref{eq:lem-helper-sep2_7}, and using a union bound, we get
 \begin{align}
 \label{eq:lem-helper-sep2_8}
     \begin{split}
     \langle P_{U_{r+1:n}} A^{(j)}, P_{U_{r+1:n}} P_{A_{1:j-1}}P_{U_{1:r}}A^{(j)} \rangle &\leq \sqrt{\frac{\left({r + 16 \sqrt{r \ log(r)}}\right) \ 4 \ log(r)}{2r(n-r)}}\\
     &\leq 2 \sqrt{\frac{log(r)}{n-r}}
     \end{split}
 \end{align}
 with probability at least 0.998, for all $j \in \{2,3,\cdots, r \} $, assuming $r \geq 2000$.
 Combining equations \ref{eq:lem-helper-sep2_2}, \ref{eq:lem-helper-sep2_3} and \ref{eq:lem-helper-sep2_8}, and applying a union bound, we get 
 \begin{align}
 \begin{split}
  \label{eq:lem-helper-sep2_9}
     \norm{P_{U_{r+1:n}} A^{(j)} - P_{U_{r+1:n}} P_{A_{1:j-1}}P_{U_{1:r}}A^{(j)} }_2^2 &\geq \frac{1}{2} - \frac{8 \sqrt{log(r)}}{\sqrt{n-r}} -\frac{4 \sqrt{log(r)}}{\sqrt{n-r}}\\
     &= \frac{1}{2} - \frac{12 \sqrt{log(r)}}{\sqrt{n-r}}
\end{split}     
 \end{align}
 with probability at least 0.997, for all $j \in \{2, 3 \cdots, r \}$.
 We also upper bound $\norm{P_{U_{r+1:n}} A^{(j)} - P_{U_{r+1:n}} P_{A_{1:j-1}}P_{U_{1:r}}A^{(j)} }_2^2$.
 \begin{align}
 \label{eq:lem-helper-sep2_10}
     \begin{split}
      \norm{P_{U_{r+1:n}} A^{(j)} - P_{U_{r+1:n}} P_{A_{1:j-1}}P_{U_{1:r}}A^{(j)} }_2^2 &\leq \norm{P_{U_{r+1:n}} A^{(j)}}^2 + \norm{ P_{U_{r+1:n}} P_{A_{1:j-1}}P_{U_{1:r}}A^{(j)} }^2\\
      &+ 2 \norm{P_{U_{r+1:n}} A^{(j)}}\norm{ P_{U_{r+1:n}} P_{A_{1:j-1}}P_{U_{1:r}}A^{(j)} }\\
      &\leq 4 \norm{A^{(j)}}^2\\
      &= 4 \sum_{i=1}^n \sigma_i^2 R_{ij}^2\\
      &= \frac{2}{r}\sum_{i=1}^r R_{ij}^2 + \frac{2}{n-r} \sum_{i=r+1}^n R_{ij}^2\\
      &\leq 4+\frac{32\sqrt{log(r)}}{\sqrt{r}} + \frac{32\sqrt{log(r)}}{\sqrt{n-r}}\\
      &\leq 8
     \end{split}
 \end{align}
 with probability at least 0.998 for all $j \in {2,3, \cdots, r}$ (assuming $r \geq 2000$ and $n \geq 2r$). Here we applied concentration (Lemma \ref{lem:concentration-chisq}) and a union bound.
 
 We also need to upper bound $\norm{P_{U_{r+1:n}} P_{A_{1:j-1}}P_{U_{r+1:n}}A^{(j)}}_2$.
 \begin{align}
\label{eq:lem-helper-sep2_11}
\begin{split}
    \norm{P_{U_{r+1:n}} P_{A_{1:j-1}}P_{U_{r+1:n}}A^{(j)}}_2^2 &\leq \norm{ P_{A_{1:j-1}}P_{U_{r+1:n}}A^{(j)}}_2^2\\
    &\leq \frac{r}{n-r}
\end{split}
\end{align}
with probability at least 0.999, for all $j \in \{2,3, \cdots, n \}$ (assuming $r \geq 2000$, $n \geq 2r$ ). We prove the last inequality in Lemma \ref{lem:helper-sep3}.

Combining Equations \ref{eq:lem-helper-sep2_1}, \ref{eq:lem-helper-sep2_9}, \ref{eq:lem-helper-sep2_10} and \ref{eq:lem-helper-sep2_11}, and applying a union bound, we get
\begin{align}
\label{eq:lem-helper-sep2_12}
    \begin{split}
    \norm{P_{U_{r+1:n}} \left(A^{(j)} - P_{A_{1:j-1}}A^{(j)}\right) }_2^2 &\geq \frac{1}{2} - \frac{12 \sqrt{log(r)}}{\sqrt{n-r}} - 2\sqrt{8}\sqrt{\frac{r}{n-r}}\\
    &\geq \frac{1}{2} - \frac{7 \sqrt{r}}{\sqrt{n-r}}
    \end{split}
\end{align}
with probability at least 0.994 for all $j \in \{2,3, \cdots, r\}$ (assuming $r \geq 2000$, $n\geq 2r$).

To prove the lemma, we also need to bound $\norm{P_{U_{r+1:n}} A^{(1)} }_2^2$.
\begin{align}
\label{eq:lem-helper-sep2_13}
    \begin{split}
        \norm{P_{U_{r+1:n}} A^{(1)} }_2^2 &= \sum_{i=r+1}^n \sigma_i^2 R_{i1}^2\\
        &= \frac{1}{2(n-r)}\sum_{i=r+1}^n R_{i1}^2\\
        &\geq \frac{1}{2} - \frac{16\sqrt{log(r)}}{\sqrt{n-r}}\\
        &\geq \frac{1}{2} - \frac{7 \sqrt{r}}{\sqrt{n-r}}
    \end{split}
\end{align}
with probability at least 0.999 (assuming $r \geq 2000$, $n \geq 2r$). Here we used the concentration inequality from Lemma \ref{lem:concentration-chisq}.

Applying a union bound, we get that both Equations \ref{eq:lem-helper-sep2_12} and \ref{eq:lem-helper-sep2_13} hold with probability at least 0.993 for all $j \in \{2, 3, \cdots, r\}$. This completes the proof.
\end{proof}
\begin{lemma}
\label{lem:helper-sep3}
Let $r \geq 2000$ and $n \geq 2r$. Then
\[
\norm{ P_{A_{1:j-1}}P_{U_{r+1:n}}A^{(j)}}_2^2 
 \leq \frac{r}{n-r}.
\]
with probability at least 0.999 for all $j \in \{2,3, \cdots, r\}$.
\end{lemma}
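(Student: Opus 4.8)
The plan is to exploit (i) an independence between the projection $P_{A_{1:j-1}}$ and the vector $P_{U_{r+1:n}}A^{(j)}$, and (ii) the fact that $P_{A_{1:j-1}}$ is an orthogonal projection of rank at most $j-1\le r-1$. First I would record that, since the $u_i$ are orthonormal and $\sigma_i=\tfrac{1}{\sqrt{2(n-r)}}$ for $i>r$, we have $P_{U_{r+1:n}}A^{(j)}=\tfrac{1}{\sqrt{2(n-r)}}\,\hat v$, where $\hat v:=\sum_{i=r+1}^n u_i R_{ij}$ is an isotropic Gaussian vector supported on $V:=\mathrm{span}(u_{r+1},\dots,u_n)$. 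Crucially, $\hat v$ depends only on the entries $R_{ij}$ with $i>r$, whereas $A^{(1)},\dots,A^{(j-1)}$, and hence the subspace $S:=\mathrm{col}(A_{1:j-1})$ and the projection $P_{A_{1:j-1}}$, depend only on columns $1,\dots,j-1$ of the matrix $(R_{ik})$; these two collections of entries are disjoint, hence independent. So I would condition on $S$ and analyze $\norm{P_{A_{1:j-1}}P_{U_{r+1:n}}A^{(j)}}_2^2=\tfrac{1}{2(n-r)}\,\hat v^\top P_S\hat v$ using only the randomness of $\hat v$.

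For the conditional bound, the point is that $\hat v^\top P_S \hat v$ is the squared norm of a projection of an isotropic Gaussian, but onto a subspace $S$ that need \emph{not} lie inside $V$; the fix is to pass to the compression $T:=P_VP_SP_V$ acting on $V$. Since $\hat v=P_V\hat v$, we have $\hat v^\top P_S\hat v=\hat v^\top T\hat v$, and $T$ is positive semidefinite and self-adjoint on $V$ with $\norm{T}_{\mathrm{op}}\le 1$ and, for an orthonormal basis $q_1,q_2,\dots$ of $S$, $\mathrm{tr}(T)=\mathrm{tr}(P_SP_V)=\sum_l\norm{P_Vq_l}_2^2\le\dim S\le r-1$; in particular $T$ has at most $r-1$ nonzero eigenvalues, all in $[0,1]$. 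Writing $T=\sum_k\mu_k p_kp_k^\top$ in its spectral decomposition gives $\hat v^\top T\hat v=\sum_k\mu_k\langle\hat v,p_k\rangle^2\le\sum_{k=1}^{r-1}\langle\hat v,p_k\rangle^2$, and since the $p_k$ are orthonormal in $V$ and $\hat v$ is isotropic Gaussian on $V$, the right-hand side is distributed as a $\chi^2$ variable with at most $r-1$ degrees of freedom. Hence, conditionally on $S$ and therefore also unconditionally (the bound being uniform in $S$), $2(n-r)\norm{P_{A_{1:j-1}}P_{U_{r+1:n}}A^{(j)}}_2^2$ is stochastically dominated by a $\chi^2_{r-1}$ random variable.

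It then remains to apply the chi-squared concentration bound (Lemma \ref{lem:concentration-chisq}) to get $\chi^2_{r-1}\le 2r$, i.e. $\norm{P_{A_{1:j-1}}P_{U_{r+1:n}}A^{(j)}}_2^2\le\tfrac{r}{n-r}$, except with probability $o(1/r)$, and then union bound over $j\in\{2,3,\dots,r\}$. The slack is generous: the relevant $\chi^2$ has mean at most $r-1$ while we only need it to stay below $2r$, so for $r\ge 2000$ the deviation term of order $\sqrt{r\log r}$ is comfortably absorbed even after inflating the confidence by $\log r$ for the union bound. The step I expect to require the most care — and the conceptual heart of the argument — is the second paragraph: precisely because $\mathrm{col}(A_{1:j-1})$ is not a subspace of $V$, one cannot directly invoke "projecting an isotropic Gaussian on $V$ onto an $(r-1)$-dimensional subspace," and the compression operator $T=P_VP_SP_V$, together with the bounds $\norm{T}_{\mathrm{op}}\le 1$ and $\mathrm{rank}(T)\le r-1$, is exactly what bridges this gap.
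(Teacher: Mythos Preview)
Your proposal is correct and follows essentially the same route as the paper: condition on $A_{1:j-1}$ using independence from the fresh Gaussians $R_{ij}$ ($i>r$), dominate $\hat v^\top P_S\hat v$ by a $\chi^2$ variable with at most $r-1$ degrees of freedom, then apply the chi-squared tail bound and a union bound over $j$. The paper handles the domination step by asserting the existence of a rank-$(j-1)$ projection $P_{j-1}$ onto a subspace of $V$ with $\norm{P_S\hat v}_2^2\le\norm{P_{j-1}\hat v}_2^2$ for all $\hat v\in V$; your compression-operator argument via $T=P_VP_SP_V$ and its spectral decomposition is exactly what justifies that assertion, so your write-up is in fact more explicit on the one point the paper leaves implicit.
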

\begin{proof}
\begin{align}
\label{eq:lem-helper-sep3_1}
\begin{split}
     \norm{ P_{A_{1:j-1}}P_{U_{r+1:n}}A^{(j)}}_2^2
    &\leq \norm{ P_{A_{1:j-1}} \left( \sum_{i=r+1}^n \sigma_i u_i R_{ij} \right)}_2^2\\
    &= \frac{1}{2(n-r)} \norm{ P_{A_{1:j-1}} \left( \sum_{i=r+1}^n  u_i R_{ij} \right)}_2^2
\end{split}
\end{align}
 Note that $P_{A_{1:j-1}}$ and $\left( \sum_{i=r+1}^n  u_i R_{ij} \right)$  are independent since  $P_{A_{1:j-1}}$ only depends on random variables $R_{ik}$ for $k \leq j-1$. Also, for any fixed $P_{A_{1:j-1}}$, there exists a projection matrix $P_{j-1}$ that project onto some $j-1$ dimensional subspace of the linear span of $\{u_{r+1}, u_{r+2}, \cdots, u_{n}\}$ (assume $n \geq 2r$), such that
 \begin{align*}
     \norm{ P_{A_{1:j-1}} \left( \sum_{i=r+1}^n  u_i R_{ij} \right)}_2^2 &\leq \norm{ P_{{j-1}} \left( \sum_{i=r+1}^n  u_i R_{ij} \right)}_2^2
 \end{align*}
 for all values of $R_{ij}$s. Therefore, we can write
 \begin{align}
 \label{eq:lem-helper-sep1_4}
     Pr\left[ \norm{ P_{A_{1:j-1}} \left( \sum_{i=r+1}^n  u_i R_{ij} \right)}_2^2 \geq \alpha \right] &\leq Pr\left[ \norm{ P_{{j-1}} \left( \sum_{i=r+1}^n  u_i R_{ij}  \right)}_2^2 \geq \alpha \right]
 \end{align}
 for all $\alpha>0$, where the probability is over the randomness of $R_{ij}$s and $ P_{A_{1:j-1}}$ is fixed. Now, since $\sum_{i=r+1}^n  u_i R_{ij}$ is an isotropic Gaussian random variable in the space spanned by $\{u_{r+1}, u_{r+2}, \cdots, u_{n}\}$, we know that $\norm{ P_{{j-1}} \left( \sum_{i=r+1}^n  u_i R_{ij} \right)}_2^2$ and $\norm{ P_{U_{r+1:r+j-1}} \left( \sum_{i=r+1}^n  u_i R_{ij} \right)}_2^2$ are equal in distribution. Here we used the fact that $P_{j-1}$  projects onto some $j-1$ dimensional subspace of the linear span of $\{u_{r+1}, u_{r+2}, \cdots, u_{n}\}$. Therefore 
 \begin{align*}
     Pr\left[ \norm{ P_{{j-1}} \left( \sum_{i=r+1}^n  u_i R_{ij}  \right)}_2^2 \geq \alpha \right] &= Pr\left[ \norm{ P_{U_{r+1:r+j-1}} \left( \sum_{i=r+1}^n  u_i R_{ij}  \right)}_2^2 \geq \alpha \right]\\
     &= Pr\left[ \sum_{i=r+1}^{r+j-1} R_{ij}^2 \geq \alpha \right]\\
     &\leq Pr\left[ \sum_{i=r+1}^{2r} R_{ij}^2 \geq \alpha \right]
 \end{align*}
 for $j \leq r$.
 Combining this with Equation \ref{eq:lem-helper-sep1_4}, and setting $\alpha = r + 16 \sqrt{r \ log(r)}$, we get 
 \begin{align}
     Pr\left[ \norm{ P_{A_{1:j-1}} \left( \sum_{i=r+1}^n  u_i R_{ij} \right)}_2^2 \geq r + 16 \sqrt{r \ log(r)} \right] \leq Pr\left[ \sum_{i=r+1}^{2r} R_{ij}^2 \geq r + 16 \sqrt{r \ log(r)} \right].
 \end{align}
 Using concentration (Lemma \ref{lem:concentration-chisq}) and a union bound, we  know 
 \begin{align*}
     Pr\left[\sum_{i=r+1}^{2r} R_{ij}^2 \geq r + 16 \sqrt{r \ log(r)}  \right] \leq \frac{2}{r} &\leq \frac{1}{1000}
 \end{align*}
 for all $j \in \{1, 2, \cdots, r \}$. Therefore, we get
 \begin{align}
     \norm{ P_{A_{1:j-1}} \left( \sum_{i=r+1}^n  u_i R_{ij} \right)}_2^2 \leq r + 16 \sqrt{r \ log(r)}
 \end{align}
 with probability at least 0.999 for all $j \in \{2, 3,  \cdots, r\}$ where the probability is over $R_{ij}$s. But since this holds for any fixed $P_{A_{1:j-1}}$, this  holds even when we take probability over both $R_{ij}$s and $P_{A_{1:j-1}}$. Substituting this in Equation \ref{eq:lem-helper-sep3_1}, we get that for $r \geq 2000$,
 \begin{align}
 \label{eq:lem-helper-sep1_5}
 \begin{split}
 \norm{ P_{A_{1:j-1}}P_{U_{r+1:n}}A^{(j)}}_2^2 &\leq    \frac{r + 16 \sqrt{r \ log(r)}}{2(n-r)} \\
 &\leq \frac{r}{n-r}.
 \end{split}
 \end{align}
 with probability at least 0.999 for all $j \in \{2,3,  \cdots, r\}$.
\end{proof}
\begin{lemma}
\label{lem:concentration-chisq}
\citep{wainwright2015basic}
Let $X$ be a $\chi^2$ random variable with $d$ degrees of freedom, then
\[
Pr[|X - d| \geq dt] \leq 2e^{-dt^2/8}
\]
for all $t \in (0, 1)$.
\end{lemma}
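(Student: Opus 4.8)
The plan is to prove this standard concentration bound via the Chernoff method applied to the moment generating function of a chi-squared random variable. Write $X = \sum_{i=1}^{d} Z_i^2$ with $Z_1,\dots,Z_d$ i.i.d.\ $\mathcal{N}(0,1)$. A direct Gaussian integral gives $\mathbb{E}[e^{\lambda Z_i^2}] = (1-2\lambda)^{-1/2}$ for every $\lambda < 1/2$, so by independence $\mathbb{E}[e^{\lambda X}] = (1-2\lambda)^{-d/2}$ for $\lambda < 1/2$, and likewise $\mathbb{E}[e^{-\mu X}] = (1+2\mu)^{-d/2}$ for $\mu > 0$. Everything below is an optimization of the Chernoff exponent.

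For the upper tail I would apply Markov's inequality to $e^{\lambda X}$ with the minimizing choice $\lambda = \tfrac{t}{2(1+t)} \in (0,\tfrac12)$, which gives
\[
Pr[X - d \geq dt] \leq e^{-\lambda d(1+t)}(1-2\lambda)^{-d/2} = \exp\!\left(-\tfrac{d}{2}\big(t - \log(1+t)\big)\right).
\]
It then suffices to check $t - \log(1+t) \geq t^2/4$ on $(0,1)$: both sides vanish at $t=0$, and the left derivative $\tfrac{t}{1+t}$ dominates the right derivative $\tfrac{t}{2}$ exactly when $1+t \leq 2$. For the lower tail I would apply Markov to $e^{-\mu X}$ with $\mu = \tfrac{t}{2(1-t)} > 0$, obtaining
\[
Pr[X - d \leq -dt] \leq e^{\mu d(1-t)}(1+2\mu)^{-d/2} = \exp\!\left(\tfrac{d}{2}\big(t + \log(1-t)\big)\right) \leq \exp\!\left(-\tfrac{d t^2}{4}\right),
\]
where the last inequality uses $t + \log(1-t) = -\sum_{k \geq 2} t^k/k \leq -t^2/2$. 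Since $dt^2/4 \geq dt^2/8$, a union bound over the two tails yields $Pr[|X - d| \geq dt] \leq 2e^{-dt^2/8}$ for all $t \in (0,1)$.

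There is no genuine obstacle here: this is the textbook sub-exponential (Bernstein-type) tail estimate for $\chi^2$ variables, and the only care needed is keeping the elementary bounds on $\log(1 \pm t)$ tight enough that both Chernoff exponents collapse cleanly to (at least) $dt^2/8$ uniformly over $t \in (0,1)$. This is precisely why the statement is simply quoted from \citet{wainwright2015basic} rather than re-derived in the body of the paper.
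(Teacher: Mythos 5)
Your derivation is correct: the MGF computations, the optimizing choices $\lambda = \tfrac{t}{2(1+t)}$ and $\mu = \tfrac{t}{2(1-t)}$, and the elementary bounds $t - \log(1+t) \geq t^2/4$ and $t + \log(1-t) \leq -t^2/2$ on $(0,1)$ all check out, and the union bound gives exactly the stated $2e^{-dt^2/8}$. The paper does not prove this lemma at all --- it is quoted directly from \citet{wainwright2015basic} --- and your argument is precisely the standard Chernoff-bound derivation found in that reference, so there is nothing further to compare.
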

\section{PROOF OF THEOREM \ref{thm:FA_GD_rep_orthogonal}}
\label{sec:proof_FA_GD_rep_orthogonal}
\fagdorthrep*
\begin{proof}
Let $\sigma_i$, $u_i$ and $v_i$ be the $i^{\text{th}}$ singular value, left singular vector and right singular vector of $Y$ respectively, such that $Y = \sum_{i=1}^n \sigma_i u_i v_i^T = U \Sigma V^T$. 

From \citet[Theorem 39, part (b)]{bah2019learning} we know that gradient flow starting from randomly initialized $Z$ and $W$ reaches the global optimum almost surely. The global optimum here corresponds to the best rank $1$ approximation of $Y$ (in frobenious norm) \citep{blum2020foundations} whose error $\norm{ZW-Y}_F^2$ is given by $\sum_{i=2}^n \sigma_i^2 = \epsilon^2(n-1)$. So, from prior work, we know that
\begin{align}
    \frac{Z_{GD}}{\norm{Z_{GD}}_2} = u_1
\end{align}
and
\begin{align}
\label{eq:thm_rep_orth_1}
\norm{Z_{GD}W_{GD}-Y}_F^2 = \epsilon^2(n-1)
\end{align}
almost surely.

Next, we consider $Z_{FA}$ and $W_{FA}$ satisfying the stationary point equations for feedback alignment. From Lemma \ref{lem:FA_sol_char}, we know that $Z_{FA}W_{FA} = AB$ almost surely, where $A = YC^T $ and $B = \argmin_B\norm{AB-Y}_F^2$.

Using Lemma \ref{lem:error_Y_hatY}, this gives us 
\begin{align}
\label{eq:thm_rep_orth_2}
\begin{split}
    \norm{Z_{FA}W_{FA}-Y}_F^2 &= \norm{AB - Y}_F^2\\
    &= \sum_{i=1}^n \sigma_i^2(1 - \norm{P_Au_i}_2^2)\\
    &= \epsilon^2(n-1)+1 - \sum_{i=1}^n \sigma_i^2 \norm{P_Au_i}_2^2\\
    &\leq \epsilon^2(n-1)+1\\
    &\leq \epsilon^2(n-1)\left( 1 + \frac{2}{\epsilon^2 n} \right)
\end{split}
\end{align}
 almost surely. Here we assume $n \geq 2$ for the last inequality.
 
 Note that $A = YC^T = \sum_{i=1}^n \sigma_i {u_i} R_i$ where $R_i = \langle v_i, C^T \rangle$ (recall $r=1$ and therefore $C^T$ is $m \times 1$ ). Since $v_i$s are othonormal and entries of $C$ are i.i.d. $\mathcal{N}(0, 1)$, $R_i$s are $\mathcal{N}(0, 1)$ random variables and are independent for all $i$. Since $A$ and $Z_{FA}$ are $n \times 1$ matrices, $B$ and $W_{FA}$ are $1 \times m$ matrices, and $Z_{FA}W_{FA} = AB$ almost surely, we know that $Z_{FA} = cA$ and $W_{FA} = \frac{1}{c}B$ almost surely, for some non-zero constant $c$. Therefore $\frac{Z_{FA}}{\norm{Z_{FA}}_2} = \frac{A}{\norm{A}_2}$ almost surely. Therefore, we get
 \begin{align}
 \label{eq:thm_rep_orth_3}
 \begin{split}
      \left\langle \frac{Z_{FA}}{\norm{Z_{FA}}_2}, \frac{Z_{GD}}{\norm{Z_{GD}}_2} \right\rangle^2  &=   \left\langle \frac{A}{\norm{A}_2}, u_1 \right\rangle^2 \\
     &=  \frac{\left\langle \sum_{i=1}^n \sigma_i {u_i} R_i, u_1 \right\rangle^2}{\sum_{i=1^n} \sigma_i^2 R_i^2}\\
     &= \frac{R_1^2}{R_1^2 + \epsilon^2 \sum_{i=2}^n R_i^2}\\
     &\leq \frac{R_1^2}{\epsilon^2 \sum_{i=2}^n R_i^2}\\
     &\leq \frac{10}{\epsilon^2 n}
\end{split}
 \end{align}
with probability at least 0.99. For the last inequality, we used tail bounds for normal random variable and chi-squared random variable (Lemma \ref{lem:concentration-chisq}), and a union bound, which give $R_1^2 \leq 9$ and $\sum_{i=2}^n R_i^2 \geq n-1 - 8\sqrt{n} \geq 0.9 n$ (assume $n \geq 10000$) with probability at least 0.99.
From Equations \ref{eq:thm_rep_orth_1}, \ref{eq:thm_rep_orth_2} and \ref{eq:thm_rep_orth_3}, we get
\begin{align*}
    \norm{Z_{FA}W_{FA}-Y}_F^2  &\leq \norm{Z_{GD}W_{GD}-Y}_F^2  \left( 1 + \frac{2}{\epsilon^2n} \right)
\end{align*}
and 
\begin{align*}
      \left| \left\langle \frac{Z_{FA}}{\norm{Z_{FA}}_2}, \frac{Z_{GD}}{\norm{Z_{GD}}_2} \right\rangle \right| 
     &\leq \frac{4}{\epsilon\sqrt{n}}
 \end{align*}
 with probability at least 0.99 (assuming $n \geq 10000$).
\end{proof}

\section{PROOF IDEA OF \citet{baldi2018learning}}
\label{sec:baldi_proof_idea}

\citet[Theorem 8]{baldi2018learning} show convergence of feedback alignment (FA) for training two layer linear neural networks. Here, we discuss the main idea behind their proof. We describe the idea for the problem of matrix factorization. Recall that the FA update is given by
\begin{align}
\begin{split}
    \frac{dZ}{dt} &= (Y - \hat{Y})C^T\\
    \frac{dW}{dt} &= Z^T(Y - \hat{Y})
\end{split}
\end{align}
We want to show that $\norm{(Y - \hat{Y})C^T}_F^2$ converges to $0$ with time. 

Let 
\begin{align*}
    V = \frac{1}{2}\left(CW^TWC^T - CY^TZ - Z^TYC^T\right)
\end{align*}
Observe that 
\begin{align*}
    \frac{dV}{dt} = -C(Y - \hat{Y})^T(Y - \hat{Y})C^T
\end{align*}
which implies
\begin{align*}
    \frac{d ~ Tr(V)}{dt} = -  \norm{(Y - \hat{Y})C^T}_F^2 \leq 0.
\end{align*}
Therefore, $Tr(V)$ is monotonically non-increasing with its rate of decrease given by $\norm{(Y - \hat{Y})C^T}_F^2$.

Now, since $CW^TWC^T$ is PSD, $Tr(CW^TWC^T) \geq 0$. Also, $Tr(CY^TZ) = Tr(Z^TYC^T)$. This gives
\begin{align*}
    Tr(V) \geq - Tr\left( CY^TZ \right)
\end{align*}

\citet{baldi2018learning} show that $Z$ remains bounded throughout the dynamics. Therefore, $Tr(V)$ is bounded from below. 

Since $Tr(V)$ is monotonically non-increasing with rate of decrease given by $\norm{(Y - \hat{Y})C^T}_F^2$, and  it is also bounded from below, $\norm{(Y - \hat{Y})C^T}_F^2$ can not be too large for too long. \citet{baldi2018learning} use this observation to show that  $\norm{(Y - \hat{Y})C^T}_F^2$ converges to $0$.

Note that this proof of convergence for FA is very different from our proof of convergence for FA*. This proof doesn't say much about the dynamics of alignment. In contrast, our proof  crucially relies on the phenomenon of alignment and sheds light on how it facilitates convergence.

\section{SIMULATION DETAILS AND ADDITIONAL PLOTS}
\label{sec:sim_details}
\paragraph{Figure \ref{fig:separation_GD_FA}.} We generate $Y$ as $U \Sigma V^T$ where $U$ and $V$ are an $n \times k$ and $m \times k$ independent random matrices respectively with orthonormal columns, and $\Sigma$ is a $k \times k$ diagonal matrix. For GD and FA, $Z$ and $W$ are initialized as $n \times r$ and $r \times m$ random matrices respectively where each entry is drawn i.i.d. from a normal distribution with mean $0$ and standard deviation $0.001$. We use the same initial $Z$ and $W$ for FA and GD. For FA*, we use the same initial $Z$ as FA and GD but $W$ is initialized to $(Z^TZ)^{-1}Z^TY$. $C$ is a random $r \times m$ matrix with i.i.d. entries drawn from a normal distribution with mean $0$ and standard deviation $1$. We use the same $C$ for $FA$, $GD$ and $FA*$. We use a learning rate of $1$ for GD and $0.1$ for FA and FA*.  

For Figure \ref{fig:separation_GD_FA_1}, $n = m = 500$ and $r = k = 50$. The diagonal entries of $\Sigma$ are set to $1/\sqrt{50}$. For Figure \ref{fig:separation_GD_FA_2}, $n = m = k = 500$ and $r = 50$. The first $50$ diagonal entries of $\Sigma$ are set to $1/\sqrt{2*50}$ and the next $450$ diagonal entries are set to $1/\sqrt{2*450}$. The diagonal entries for \ref{fig:separation_GD_FA_2} are set in accordance with Theorem \ref{thm:FA_separation_GD}. 

\paragraph{Figure \ref{fig:dynamics_1}.} $y$ is a random 100 dimensional unit vector. $Z$ is initialized as a $100 \times 50$ random matrix with entries drawn i.i.d. from a normal distribution with mean $0$ and standard deviation $0.001$. $w$ is initialized as $(Z^TZ)^{-1}Z^Ty$. $c$ is set to $-w(0)$. We use a learning rate of $0.1$.

\paragraph{Figure \ref{fig:dynamics_2}.} We generate $Y$ as $AB^T/\norm{AB^T}_F$ where $A$ and $B$ are $100 \times 99$ matrices with with entries drawn i.i.d. from a normal distribution with mean $0$ and unit standard deviation. $Z$ is initialized as a $100 \times 99$  random matrix with orthonormal columns (such that $Z^TZ = I$), and $W$ is initialized as $(Z^TZ)^{-1}Z^TY$. $C$ is a random $99 \times 100$ matrix with i.i.d. entries drawn from a normal distribution with mean $0$ and standard deviation $1$. We use a learning rate of $0.1$. We note that the non-monotonic loss progression is not due to any learning rate issue. $\frac{d \ \norm{(Y-\hat{Y})C^T}_F^2}{dt}$ does switch from being negative to positive and back many times in the dynamics.

\paragraph{Figure \ref{fig:alignment}} This corresponds to the same setting as Figure \ref{fig:dynamics_2}. We draw $x$ from uniform distribution over the unit sphere.

\paragraph{Figure \ref{fig:dynamics_comparison}} This corresponds to the  same setting as Figure \ref{fig:dynamics_2}. We use the same $Z(0), W(0), Y$ and $C$  for FA and FA*. For FA and FA*, we use a learnig rate of $0.1$. For FA with larger learning rate for $W$, we use a learning rate of $0.5$ for $W$ update and $0.1$ for $Z$ update. \vspace{10pt}

\begin{figure*}[h!]
    \centering 
\begin{subfigure}{0.5\textwidth}
  \includegraphics[width=  \linewidth]{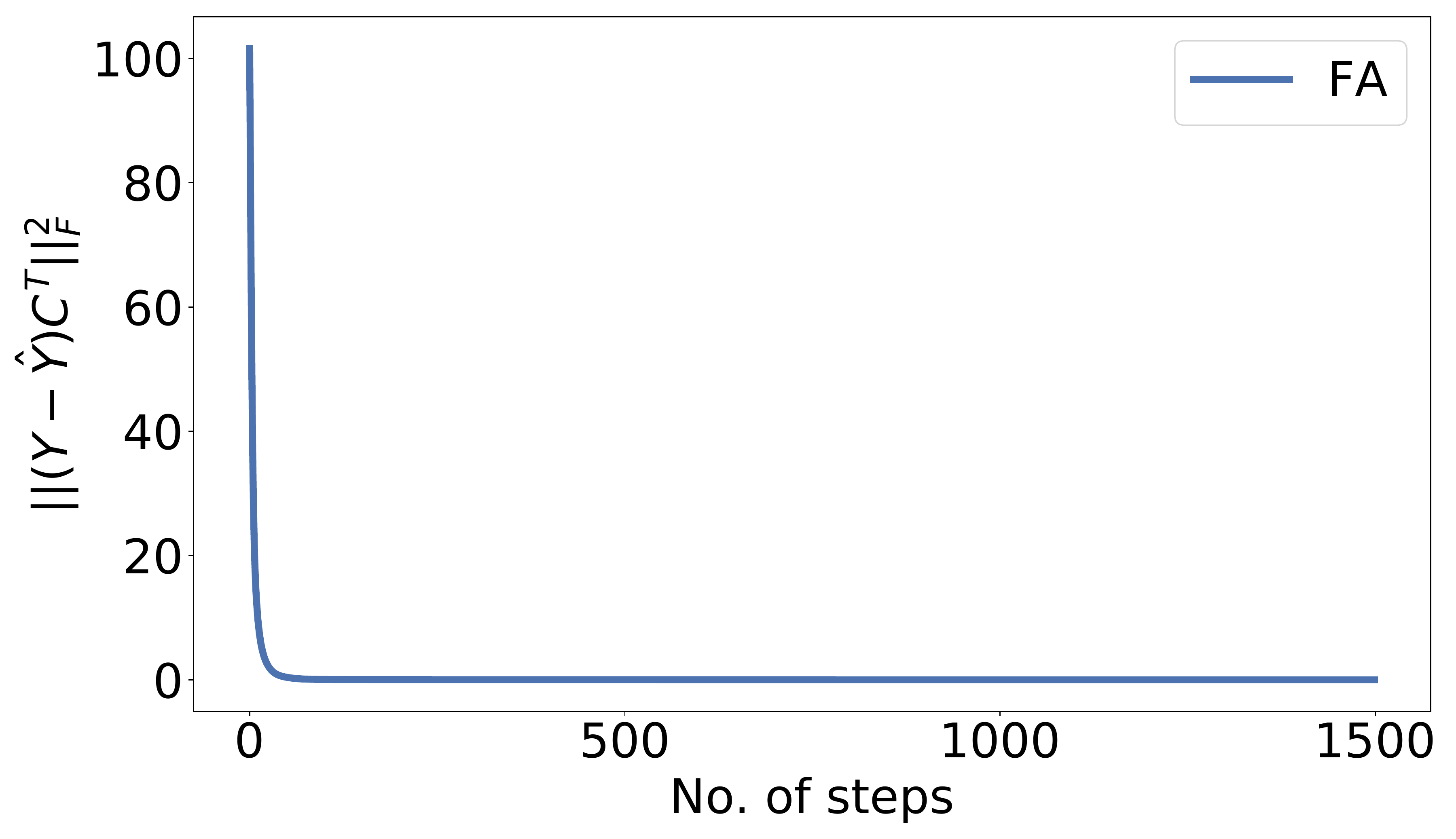}
  \caption{}
  \label{fig:fa_rand_init1}
\end{subfigure}
\begin{subfigure}{0.5\textwidth}
  \includegraphics[width=\linewidth]{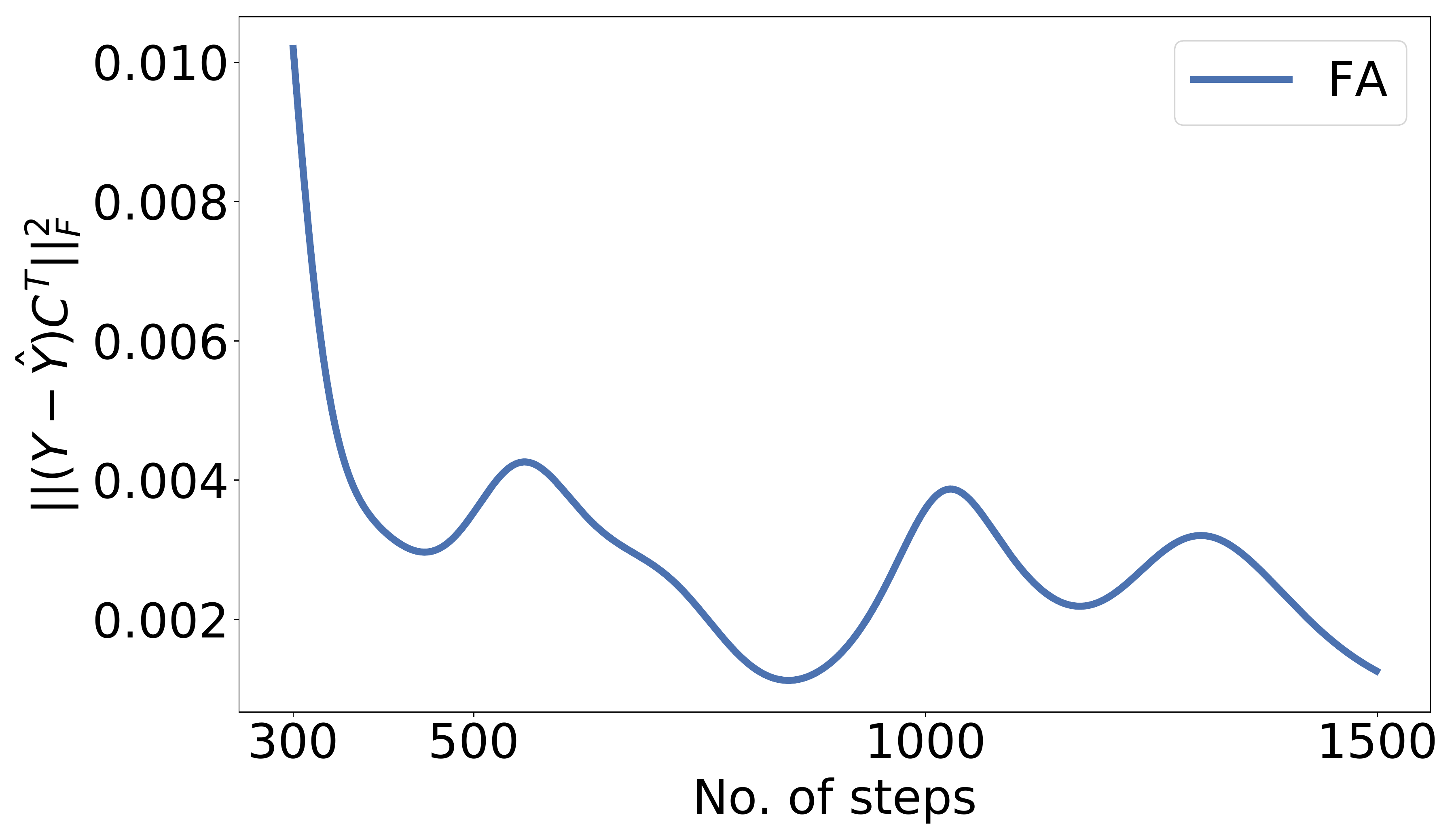}
  \caption{}
  \label{fig:fa_rand_init2}
\end{subfigure}

\caption{(a) FA loss dynamics with the same $Y, C$ and $Z(0)$ as in Figure \ref{fig:dynamics_comparison} but with a randomly initialized $W$. As expected, FA loss progression in this case looks very different from the loss progression for FA* and for FA with optimally initialized $W$, as shown in Figure \ref{fig:dynamics_comparison}.  (b) Zoomed in version of Figure \ref{fig:fa_rand_init1} starting from step $300$ showing non-monotonic loss progression.  }
\label{fig:fa_rand_init}
\end{figure*}

\paragraph{Figure \ref{fig:fa_rand_init}} 
Similar to Figure \ref{fig:dynamics_comparison}, this corresponds to the  same setting as Figure \ref{fig:dynamics_2}, except that $W$ is initialized to a random $99 \times 100$ matrix with i.i.d. entries drawn from a normal distribution with mean $0$ and standard deviation $0.001$.

As expected, the loss progression for FA with randomly initialized $W$ is very different from FA* and from FA with optimally initialized $W$ (shown in Figure \ref{fig:dynamics_comparison}). The initial loss is much higher in this case compared to Figure \ref{fig:dynamics_comparison} as we do not initialize $W$ optimally here.

\end{document}